
\documentclass{article}





\usepackage[accepted]{icml2025}


\usepackage{amsmath,amsfonts,bm}









\def\eqref#1{equation~\ref{#1}}









\def\floor#1{\lfloor #1 \rfloor}
\def\1{\bm{1}}




\def\rvx{{\mathbf{x}}}




\def\vzero{{\bm{0}}}

\def\va{{\bm{a}}}

\def\vg{{\bm{g}}}

\def\vr{{\bm{r}}}
\def\vs{{\bm{s}}}

\def\vx{{\bm{x}}}
\def\vy{{\bm{y}}}


\def\evy{{y}}

\def\mA{{\bm{A}}}
\def\mB{{\bm{B}}}

\def\mD{{\bm{D}}}

\def\mI{{\bm{I}}}

\def\mM{{\bm{M}}}

\def\mP{{\bm{P}}}

\def\mS{{\bm{S}}}

\def\mW{{\bm{W}}}

\def\mY{{\bm{Y}}}

\def\mPhi{{\bm{\Phi}}}

\DeclareMathAlphabet{\mathsfit}{\encodingdefault}{\sfdefault}{m}{sl}
\SetMathAlphabet{\mathsfit}{bold}{\encodingdefault}{\sfdefault}{bx}{n}

\def\gA{{\mathcal{A}}}

\def\gC{{\mathcal{C}}}

\def\gM{{\mathcal{M}}}
\def\gN{{\mathcal{N}}}
\def\gO{{\mathcal{O}}}

\def\gS{{\mathcal{S}}}



\def\sI{{\mathbb{I}}}

\def\sR{{\mathbb{R}}}








\newcommand{\R}{\mathbb{R}}



\DeclareMathOperator*{\argmax}{arg\,max}
\DeclareMathOperator*{\argmin}{arg\,min}

\DeclareMathOperator{\Tr}{Tr}

\usepackage[T1]{fontenc}    
\usepackage{inconsolata}
\usepackage{hyperref}
\usepackage{url}
\usepackage{cancel}
\usepackage{graphicx}
\usepackage{booktabs}
\usepackage{wrapfig}
\usepackage{algorithm}
\usepackage{color}
\usepackage{bbm}
\usepackage{amsmath,amsfonts,amssymb,amsthm}
\usepackage[end]{algpseudocode}
\usepackage{appendix}
\usepackage[font=small,labelfont=bf]{caption}
\usepackage[thinc]{esdiff}  
\usepackage{pifont}
\usepackage{bbding}
\usepackage{tabularray}
\usepackage{titlesec}
\usepackage{scalerel,stackengine}
\usepackage{mathtools}
\usepackage{transparent}
\usepackage{tikz}
\usetikzlibrary{fit,calc}
\usepackage{enumitem}
\usepackage{multirow}
\usepackage{colortbl}
\usepackage{xcolor}
\usepackage{placeins}
\usepackage{floatflt}
\usepackage[breakable,skins]{tcolorbox}
\usepackage{fontawesome}

\usepackage[utf8]{inputenc}   
\usepackage[T1]{fontenc}      

\titlespacing*{\section}
{0pt}{1ex}{1ex}
\titlespacing*{\subsection}
{0pt}{1ex}{1ex}

\definecolor{mydarkblue}{rgb}{0,0.08,0.45}
\definecolor{mydarkgreen}{RGB}{0, 139, 69}
\definecolor{MAEblue}{RGB}{47 112 182}
\definecolor{SDEblue}{RGB}{28 58 88}
\definecolor{Refcolor}{RGB}{47 112 182}
\hypersetup{
 colorlinks=true,
 urlcolor=magenta,
 citecolor=SDEblue,
}
\definecolor{mycyan}{cmyk}{.3,0,0,0}

\setlength{\belowcaptionskip}{-0.5em}

\newcommand{\aref}[1]{\hyperref[#1]{Appendix~\ref*{#1}}}

\hypersetup{linkcolor = Refcolor}

\tcbset{
  aibox/.style={
    width=400.18663pt,
    top=10pt,
    colback=black!05,
    colframe=black!20,
    colbacktitle=black!50,
    enhanced,
    center,
    attach boxed title to top left={yshift=-0.1in,xshift=0.15in},
    boxed title style={boxrule=0pt,colframe=white,},
  }
}
\tcbset{
  aiboxbreakable/.style={
    width=400.18663pt,
    top=10pt,
    colback=black!05,
    colframe=black!20,
    colbacktitle=black!50,
    enhanced,
    center,
    breakable,
    attach boxed title to top left={yshift=-0.1in,xshift=0.15in},
    boxed title style={boxrule=0pt,colframe=white,},
  }
}
\newtcolorbox{AIBoxBreak}[2][]{aiboxbreakable,title=#2,#1}
\newtcolorbox{AIBox}[2][]{aibox,title=#2,#1}

\newtheorem{theorem}{Theorem}
\newtheorem{lemma}{Lemma}
\newtheorem{proposition}{Proposition}
\newtheorem{corollary}{Corollary}

\newtheorem{assumption}{Assumption}

\newtheorem{remark}{Remark}

\usepackage{cleveref} 
\crefname{section}{Section}{Sections}
\crefname{theorem}{Theorem}{Theorems}
\crefname{corollary}{Corollary}{Corollaries}
\crefname{lemma}{Lemma}{Lemmas}
\crefname{equation}{Eq.}{Eq.}
\crefname{proposition}{Proposition}{Propositions}
\crefname{claim}{Claim}{Claims}
\crefname{remark}{Remark}{Remarks}
\crefname{observation}{Observation}{Observations}
\crefname{assumption}{Assumption}{Assumptions}
\crefname{definition}{Definition}{Definitions}
\crefname{appendix}{Appendix}{Appendices}
\crefname{algorithm}{Algorithm}{Algorithms}
\crefname{figure}{Figure}{Figures}
\crefname{table}{Table}{Tables}

\definecolor{SDEblue}{RGB}{28 58 88}
\hypersetup{
 colorlinks=true,
 urlcolor=magenta,
 citecolor=SDEblue,
}

\DeclareMathOperator{\OA}{\texttt{OA}}

\usepackage[textsize=tiny]{todonotes}

\icmltitlerunning{Continual Reinforcement Learning by
Planning with Online World Models}

\begin{document}

\twocolumn[
\icmltitle{Continual Reinforcement Learning by
Planning with Online World Models}
\icmlsetsymbol{equal}{*}

\begin{icmlauthorlist}
\icmlauthor{Zichen Liu}{equal,sea,nus}
\icmlauthor{Guoji Fu}{equal,nus}
\icmlauthor{Chao Du}{sea}
\icmlauthor{Wee Sun Lee}{nus}
\icmlauthor{Min Lin}{sea}
\end{icmlauthorlist}

\icmlaffiliation{sea}{Sea AI Lab}
\icmlaffiliation{nus}{National University of Singapore}

\icmlcorrespondingauthor{Min Lin}{linmin@sea.com}

\icmlkeywords{Reinforcement Learning, ICML}

\vskip 0.3in
]

\printAffiliationsAndNotice{\icmlEqualContribution} 

\begin{abstract}
    Continual reinforcement learning (CRL) refers to a naturalistic setting where an agent needs to endlessly evolve, by trial and error, to solve multiple tasks that are presented sequentially. One of the largest obstacles to CRL is that the agent may forget how to solve previous tasks when learning a new task, known as \textit{catastrophic forgetting}. In this paper, we propose to address this challenge by planning with online world models. Specifically, we learn a Follow-The-Leader shallow model online to capture the world dynamics, in which we plan using model predictive control to solve a set of tasks specified by any reward functions. The online world model is immune to forgetting by construction with a proven regret bound of $\mathcal{O}(\sqrt{K^2D\log(T)})$ under mild assumptions. The planner searches actions solely based on the latest online model, thus forming a FTL \textit{Online Agent ($\texttt{OA}$)} that updates incrementally. To assess $\texttt{OA}$, we further design \textit{Continual Bench}, a dedicated environment for CRL, and compare with several strong baselines under the same model-planning algorithmic framework. The empirical results show that $\texttt{OA}$ learns continuously to solve new tasks while not forgetting old skills, outperforming agents built on deep world models with various continual learning techniques.
\end{abstract}

\vspace{-0.35cm}
\section{Introduction}
\label{sec:introduction}

Continual reinforcement learning (CRL) \citep{khetarpal2020towards,abel2024crldefinition} quests for agents that can continuously evolve to solve possibly infinite number of tasks that are revealed sequentially to them. From a task-level definition, a very strong baseline is to just train a separate agent for each task, and use the information of the task ID to select the agent. Many existing works more or less fall within this ``multitask'' view of CRL because task-specific components are constructed and used (\textit{e.g.}, separate weights, task boundaries or IDs) \citep{kirkpatrick2017ewc,mallya2017packnet,Chaudhry19agem,kessler2021state,gaya2022building}.


However, a more naturalistic scenario would involve a learning agent that seamlessly interacts and learns in an environment where there is no clear cut tasks. To distinguish with the above, we call such agent an \textit{Online Agent} ($\OA$). We expect the solution of $\OA$ to contain learning components that are \textbf{shared} through out the agent's lifetime, and can be \textbf{updated incrementally}.

Existing CRL methods are not building towards online agents because of two challenges. First, not all RL components have their online alternatives. For example, the value function and policy are dependent on the reward function defined by a task, thus it is innately not a learning component that can be shared across tasks. To alleviate this issue, prior works often learn them per-task \citep{Chaudhry19agem,wolczyk2021continual}. Second, for components that can be shared, they should be learned incrementally without forgetting under severe data distributional shift. \citet{huang2021continual} has attempted this with model-based RL, but they still rely on per-task buffer and embedding.


In this paper, we aim to develop an $\OA$ to tackle the CRL problem. We first notice that a \textbf{unified world dynamics} is the key component that can be shared across tasks and should be learned, and planning with the learned model gives us a RL agent that maximizes long-term rewards \citep{garcia1989mpc,hutter2000aixi}. Since the planner usually has no trainable parameters \citep{de2005cemtutorial, williams2015model}, the goal of $\OA$ is to learn an online world model with low regret incrementally. To this end, we employ the recent Follow-The-Leader (FTL) shallow models \citep{liu2024losse} which permit efficient online updates for world modelling, and plan with the learned models using cross-entropy method (CEM) for acting. Under mild assumptions, we theoretically show the sparsely updating models \citep{liu2024losse} are no-regret, providing a definitive answer for the design of $\OA$. 
Consequently, $\OA$'s every interaction with the real world brings new information to learn a more accurate world model without forgetting, which immediately benefits the planning at the next step.

To evaluate $\OA$ in the CRL settings, we further develop Continual Bench, which explicitly focuses on the design of a unified world dynamics. It also takes account of forgetting and transfer simultaneously and is computationally lightweight (see motivations in \cref{sec:related_work} part $2$). The empirical results demonstrate the superior performance of $\OA$ over strong CRL baselines under the same model-planning framework, suggesting its great promise in developing future autonomous artificially agents.

\vspace{-0.1cm}
\section{Related Work}
\vspace{-0.1cm}
\label{sec:related_work}
In this section, we provide an overview of how our work integrates with existing literature. Since this work tries to contribute in both algorithm and benchmark aspects, the discussion is structured into the following two subsections.

\textbf{Algorithm perspective}. The most related work to ours is \citet{liu2024losse}, where the authors propose a non-linear encoding technique and an analytic update rule to learn world models online. We follow their work to employ similar online shallow networks to learn the world dynamics. To bridge the gap between practice and theory, we further analyze the convergence of the online sparse model learning, and identify that regularization is necessary to guarantee it converges to the offline optimal solution. Our work is also methodology-wise distinct from \citet{liu2024losse}, which is based on the Dyna architecture \citep{sutton1990dyna} and still learns model-free components such as value and policy functions using model-synthetic data. This could further compound difficulties of applying model-free methods to develop continual agents, such as the need to maintain a replay buffer \citep{isele2018selective}, the loss of plasticity \citep{abbas23plasticity}, and even the dependency on task-specific models \citep{garcia2019metamdp,wolczyk2021continual}. In the supervised CL settings, \cite{Zhuang_ACIL_NeurIPS2022, Zhuang_GKEAL_CVPR2023, Zhuang_DSAL_AAAI2024} also employ an analytic solution to solve class incremental learning without forgetting (an FTL approach), but they rely on pretrained deep features over all tasks, which is impossible for online agents.
On the contrary, we propose to use a model-planning framework to tackle the CRL problem, given a unified world dynamics representation and external reward function, making our agent ($\OA$) potential to generalize to new tasks zero-shot \citep{sancaktar2022structuredwm}.

\textbf{Benchmark perspective}. CRL has typically been evaluated on a sequence of Atari games \citep{kirkpatrick2017ewc, schwarz2018progress,rolnick2019experience}. CORA \citep{powers22bcora} recently extends this conventional protocol to a set of image-based discrete control environments, additionally including procedurally-generated environments \citep{cobbe2020procgen, kuttler2020nethack} and realistic home simulators \citep{kolve2017ai2, shridhar2020alfred}. Their focus on image-based environments stems from the need for a consistent shared observation space, which they choose to be the pixel space. The games included are also visually-distinct with little to no overlapping, making the benchmark suitable for studying \textit{forgetting}. However, the image-based environments demand prohibitive computation resources, and the lack of meaningful overlapping prevents us from evaluating the transfer of CRL algorithms. Continual-World \citep{wolczyk2021continual}, on the other hand, focuses on continuous control and \textit{transfer}, using a sequence of $10$ robot manipulation tasks selected from Meta-World \citep{yu2019meta}. Nevertheless, their state-based inputs encode the positions of different objects at fixed locations, which results in \textit{inconsistent state space} (further explained in \cref{sec:cb}), hindering the original purpose of studying transfer. In this work, we propose a lightweight but realistic benchmark environment, Continual Bench, that has a consistent state space and takes both forgetting and transfer into consideration.

\begin{figure}
    \centering
    \includegraphics[width=0.95\linewidth]{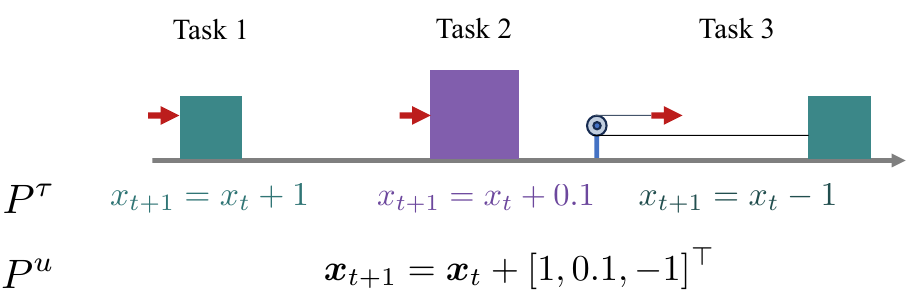}
    \vspace{-0.1cm}
    \caption{A motivating example comparing task-indexed dynamics ($P^\tau$) with unified dynamics ($P^u$). Under the task-indexed viewpoint, the state space of each task is the box's location $x \in \sR$ while the action space is \{``apply force rightwards", ``apply force leftwards"\}. Clearly, for the same action ``apply force rightwards" (denoted as red arrows) applied at the same state $x_t$, the dynamics $P^\tau$ is conflicting to each other. Hence, there is \textbf{no solution} for a CRL agent to model the dynamics, unless incorporating task IDs. In contrast, the unified dynamics $P^u$ treats each task individually and builds a ``global" view of the world, where the solutions to all tasks can co-exist without the need of task IDs.}
    \label{fig:motivating_eg}
    \vspace{-0.4cm}
\end{figure}
\vspace{-0.1cm}
\section{Preliminaries}
\vspace{-0.1cm}
\label{sec:preliminaries}

\subsection{Continual reinforcement learning}
\vspace{-0.1cm}
We first review some necessary RL basics and then define the setting of continual RL. RL is usually formulated as a Markov decision process (MDP) \citep{bellman1957markovian} $\mathcal{M} = \left(\mathcal{S}, \mathcal{A}, P, R, \gamma, \rho_0\right)$, where $\vs_t \in \mathcal{S}$ and $\va_t \in \mathcal{A}$ are state and action at time step $t$, $P: \mathcal{S} \times \mathcal{A} \mapsto \mathcal{S}$ is the transition dynamics, $R: \mathcal{S} \times \mathcal{A} \mapsto \mathbb{R}$ is a reward function associated with a specific task, $\gamma \in (0, 1)$ is a discount factor, and $\rho_0$ is the initial state distribution. An RL agent aims to \textit{identify an optimal policy} $\pi: \mathcal{S} \mapsto \mathcal{A}$ that maximizes long-term reward (or return): $\pi^*=\arg \max _\pi \mathbb{E}_{\pi, P}\left[\sum_{t=0}^{\infty} \gamma^t R\left(\vs_t, \va_t, \vs_{t+1}\right) \mid \vs_0=\vs \sim \rho_0\right]$. 
A continual RL agent \citep{abel2024crldefinition}, on the other hand, needs to \textit{conduct indefinite search} for policies to control a sequence of MDPs $\gM_\tau=(\gS, \gA, P^u, R^\tau, \gamma, \rho_0^\tau)_\tau$ with a consistent and shared state and action space. 
We also consider a unified dynamics $P^u$ that describes the whole world an agent can see, instead of time-indexed pieces $P^\tau$ \citep{wolczyk2021continual,khetarpal2020towards} that can only be distinguished by explicit task IDs, or otherwise becoming POMDPs whose task ID information needs to be inferred \citep{nagabandi2018deep}. We depict a motivating example in \cref{fig:motivating_eg} to give intuitions on the differences between $P^\tau$ and $P^u$.
While our definition offers a more simplified formulation, it does not compromise nonstationarity, the core problem in CRL, because the nonstationary reward function $R^\tau$ will drive the CRL agent to perform different tasks, incurring distributional shift in state-action visitation.
This gives the objective of a continual agent 
\vspace{-3pt}
\begin{equation}
    J_T(\pi) 
    \! \coloneqq \!
    \frac{1}{T}
    \!
    \sum_{\tau=1}^T 
    \!
      \mathbb{E}_{\pi, P^u}
      \!
      \Biggl[
        \sum_{t=0}^{\infty} \! \gamma^t R^\tau \! (\vs_t, \va_t, \vs_{t+1}) | \vs_0 \!= \!\vs \sim \rho^\tau_0 \! 
      \Biggr] \! 
    .
    \label{eq:crl_objective}
\end{equation}
Note that $T$ in \cref{eq:crl_objective} is the number of tasks the agent has seen till now, instead of the total number of tasks, which is an unknown for a continual agent. In other words, a continual agent is expected to gain high return for all experienced tasks. In this paper, we develop such an agent by learning a world model online, and form its policy by planning with the learned model.





\vspace{-0.1cm}
\subsection{Planning and model predictive control}
\vspace{-0.1cm}
\textbf{Planning}. Given the world dynamics $P$ and reward function $R$, planning involves  optimizing action sequences $\va_{t:t+H}$ to maximize the $H$-step finite-horizon return given current state $\vs_t$:
\vspace{-10pt}
\begin{equation}
    \va_{t:t+H}^\star 
    = 
    \argmax_{\va_{t: t+H}} 
    \sum_{i=0}^{H} R\left(\vs_{t+i}, \va_{t+i}\right) | \vs_{t+1} \sim {P}(\vs_t, \va_t).
    \label{eq:plan_obj}
\end{equation}
As a model-based RL method, planning has the advantage that it can reuse the same world dynamics and optimize for different reward functions without adaptation. This is especially of our interest in developing a continual agent, because when facing a new task the agent can directly utilize prior knowledge about the world to attempt to solve it.

\textbf{Model predictive control} (MPC) synthesizes a closed-loop policy from the planning results. It takes the first action $\va_t$ from the optimal action sequence, executes it in the environment and re-plans based on the new environment state. An MPC-based agent collects experiences following such policy, while improving its internal estimation about the world through experiences \citep{negenborn2005learning} for better planning. In this paper, we focus on the unified world dynamics learning driven by an external time-varying reward function, which aligns with our CRL formulation.

\section{Online Agent for Continual Reinforcement Learning}
\vspace{-0.1cm}
In this section we present the proposed Online Agent ($\OA$). \cref{sec:online_wm_learning} introduces the online world model learning process, followed by our results on the regret of the sparse model learning (\cref{sec:analysis}). Based on the learned no-regret model, \cref{sec:plan_wm} describes how the policy is constructed using planning.

\vspace{-0.1cm}
\subsection{Online world model learning}
\vspace{-0.1cm}
\label{sec:online_wm_learning}
We present the learning process of $\OA$, where we employ shallow but wide networks that support efficient Follow-The-Leader (FTL) online learning to model the world. In particular, we learn a model of the form $\vy = \mW \sigma(\mP \vx)$, where $\mP$ is a projection matrix filled with fixed random values drawn from a Gaussian distribution, $\sigma$ is an activation function and $\mW$ contains learnable weights. Such architecture allows universal approximation \citep{huang2006universal} but requires a very high dimensional hidden layer to have great capacity. Following \citet{liu2024losse}, we learn a linear model online per time step, using a high-dimensional sparse feature encoder for $\sigma(\mP \vx)$ and solving for $\mW$ in closed form with FTL strategy. 

\textbf{Modeling}. Let $S$ and $A$ denote the dimenionality of state and action spaces. When the agent is in state $\vs_t$ it executes an action $\va_t$ and observes how the world changes. The observation may incur a loss thus correcting its previous perception of the world dynamics, which is modeled linearly by $\vy_t = \phi(\vx_t)^\top \mW$. $\mW \in \sR^{D \times S}$ is the linear layer, $\vx_t=[\vs_t, \va_t]\in \sR^{S+A}$ is the input for the world model and $\phi(\vx_t) \in \sR^D$ is its sparse high-dimensional projection, and $\vy_t=(\vs_{t+1} - \vs_t) \in \sR^S$ the state differences. At the start of time step $t$, we have $\mPhi_{t-1} = [\phi(\vx_1), \dots, \phi(\vx_{t-1})]^\top \in \sR^{(t-1) \times D}$ accumulating all the inputs and $\mY_{t-1} \in \sR^{(t-1)\times S}$ as well for targets. A FTL world model can be solved by regularized least squares:
\begin{equation}
    \forall \, t, \mW^{(t)} = \argmin_{\mW \in \R^{D \times S}} \left\| \mPhi_{t-1} \mW - \mY_{t-1} \right\|^2_F + \frac{1}{\lambda} \|\mW\|^2_F,
    \label{eq:batch_least_squares}
\end{equation}
where $\|\cdot\|_F$ denotes the Frobenius norm.

\textbf{Learning} the world model simply requires finding a solution to \cref{eq:batch_least_squares}, which can be obtained analytically as $\mW^{(t)} = (\mPhi_{t-1}^\top \mPhi_{t-1} + \frac{1}{\lambda} \mI)^{-1} \mPhi_{t-1}^\top \mY_{t-1}$. Though this computation does not grow with the amount of data, the inversion could still be inefficient since $D$ is usually large for modeling complex environments. We thus utilize the feature sparsity and conduct incremental update on the model:
\begin{equation}
    \mW_{s}^{(t)} 
    = 
    \Bigl(
      \mA_{ss}^{(t-1)} + \frac{1}{\lambda}\mI
    \Bigr)^{-1}(\mB_{s}^{(t-1)} - \mA_{s\overline{s}}^{(t-1)}\mW_{\overline{s}}^{(t-1)}),
    \label{eq:sparse_update_rule}
\end{equation}
where $\mA^{(t-1)} = \mPhi_{t-1}^\top \mPhi_{t-1}$ and $\mB^{(t-1)} = \mPhi_{t-1} ^\top \mY_{t-1}$, $s$ contains all the indices on which the latest input feature $\phi(\vx_{t-1})$ has non-zero values, \textit{i.e.}, the newly coming data point only activates $K=|s| \ll D$ nodes, and $\overline{s}$ is its complement. Using them in subscript means taking a sub-matrix from the original one.
The activation ratio $K/D$ is fixed and small for any input to guarantee a constant and low update overhead. Note that when $s$ covers all coordinates (thus the feature is dense), \cref{eq:sparse_update_rule} recovers the analytic solution to \cref{eq:batch_least_squares}. We refer to \cref{sec:app:losse} for a more detailed sparse feature construction process.

Despite the resemblance to the update rule in Algorithm 2 of \citet{liu2024losse}, \cref{eq:sparse_update_rule} highlights a crucial difference that $\frac{1}{\lambda} \mI$ derived from the ridge regularization term is important for the local update to have a unique minimizer (\cref{app:lem:sparse-solution}), or otherwise it may not converge \citep{peng2023block} when $\mA_{ss}^{(t-1)}$ is rank-deficient, which is the case during the initial phase of the learning when data points are only a few. We provide theoretical analysis in \cref{sec:analysis}.

\subsection{Regret analysis}
\label{sec:analysis}

In this section we show the sparse update rule (\cref{eq:sparse_update_rule}) learns a no-regret ($\gO(\log(T)$) world model. With the following mild assumptions for hyperparameters and inputs, we have \cref{thrm:main-thrm}.




\begin{assumption}\label{assump:lambda}
    For all $t \geq 1$ and $\vx_1, \vx_2, \dots, \vx_t$, we assume that there exists $\lambda \geq 1$ such that
    \begin{equation}\label{eq:assump1}
        \sup_{\vx}
        \Bigl\|
          \phi(\vx)\phi(\vx)^\top 
          - 
          \frac{1}{t}
          \sum_{i=1}^t
          \phi(\vx_i)\phi(\vx_i)^\top
        \Bigr\|_2 
        \leq 
        \frac{1}{\lambda t}.
    \end{equation}
\end{assumption}

\begin{assumption}\label{assump:bounded_W}
    Assume that for all $t \geq 1$, $\|\vy_t\|_2 \leq c_y$ with $\evy_{t,i} \geq 0$, and $\|\mW\|_F \leq c_W$.
\end{assumption}


\begin{assumption}\label{assump:sparse}
    For all $t \geq 1$, we set $K$ such that
    \begin{align*}
        {} &
        K \geq 
        \min\Bigl\{
          D, \\
          &
          c_y^2
          \Bigl(
            \tfrac{\sum_{i=1}^t\phi(\vx_i)\phi(\vx_i)^\top}{\phi(\vx_t)^\top\phi(\vx_t)}
            \!+\! 1
          \Bigr)
          \sqrt{
          \Bigl\|
            \mA_{\overline{s}s}^{(t)}
            \bigl(
              \mA_{ss}^{(t)} \!+\! \frac{1}{\lambda}\mI
            \bigr)^{-1}
          \Bigr\|_2^2
          \!+\! 1
          }
        \Bigr\}.
    \end{align*}
\end{assumption}

{
\begin{remark}[Discussions on the assumptions]
    \cref{assump:lambda} ensures that as more data points are observed, the feature mappings stabilize. 
    The difference between the outer product of any new input's feature mapping and the average outer product of the observed mappings decreases as increases. 
    In essence, as more data is gathered, the empirical covariance matrix better approximates the true covariance matrix. 
    The term $\frac{1}{\lambda t}$ controls this stabilization rate, with $\lambda$ as a scaling factor. 
    It ensures the model's representations become more reliable over time, improving generalization to new data. 
    In practice, \cref{assump:lambda} holds if we explore new points near to the observed data, i.e., within $\frac{1}{\lambda t}$ away from the center in the feature space.
    \cref{assump:bounded_W} assumes bounded input and output, which is a mild condition.
    \cref{assump:sparse} states that the choice of the hyperparameter $K$ depends on two factors: (1) the weights of the features at time $t$ over all observed data and (2) the sparsity of the activated elements in the feature matrix $\mA^{(t)}$. 
    Intuitively, If $\mA_{ss}$ is more significant than $\mA_{\overline{s}s}$ in spectral norm, or if the feature value at $\vx_t$ is larger than previously observed data, a smaller $K$ suffices to ensure model performance. 
    In practice, we choose $K$ such that~\cref{assump:sparse} hold.
\end{remark}
}

\begin{theorem}[Regret Bounds for Sparse Online Model Learning]\label{thrm:main-thrm}
    Let $\widetilde{\mW}^{(1)}, \widetilde{\mW}^{(2)}, \dots$ be the sequence produced by the sparse online model learning~\cref{eq:sparse_update_rule}. 
    Let $r_M \geq 1$ be a constant defined as in~\cref{eq:r_M}. 
    Suppose that there exist $c_W, c_y > 0$ and $\lambda \geq 1$ such that  \cref{assump:lambda,assump:bounded_W,assump:sparse} hold. 
    Then, for all $T \geq 1$ and all $\xi \in \R^{D \times S}$, we have
    \begin{align}\label{eq:sparse-regret}
        \!\!\!\!\mathrm{Regret}&(T)  := \sum_{t=1}^Tf_t(\widetilde{\mW}^{(t)}) - \sum_{t=1}^Tf_t(\xi) \nonumber \\
        & \leq \frac{1}{\lambda}c_W^2 + 5\lambda(K^2r_M^2+1)c_y^2D(\log(T)+1).
    \end{align}

    If 
    $\lambda = c_W / c_y\sqrt{5(K^2r_M^2 + 1)D(\log(T)+1)} \geq 1$ and it satisfies \cref{assump:lambda,assump:sparse}, then we have
    \begin{equation}
        \!\!\mathrm{Regret}(T) \leq c_Wc_y\sqrt{20(K^2r_M^2+1)D(\log(T)+1)}.
    \end{equation}
\end{theorem}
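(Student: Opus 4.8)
The plan is to view the regularized least-squares problem \cref{eq:batch_least_squares} as Follow-The-Regularized-Leader (FTRL) with per-round losses $f_t(\mW)=\|\phi(\vx_t)^\top\mW-\vy_t^\top\|_2^2$ and the fixed quadratic regularizer $\tfrac{1}{\lambda}\|\cdot\|_F^2$, and then to decompose the cumulative excess loss as a ``sparse-update error'' term $\sum_{t=1}^T\bigl(f_t(\widetilde{\mW}^{(t)})-f_t(\mW^{(t)})\bigr)$ plus an ``exact FTRL regret'' term $\sum_{t=1}^T\bigl(f_t(\mW^{(t)})-f_t(\xi)\bigr)$, where $\mW^{(t)}$ is the exact minimizer of \cref{eq:batch_least_squares}. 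I would bound the second term by the standard FTRL/exp-concave argument, bound the first using \cref{assump:lambda,assump:sparse}, combine, and optimize over $\lambda$. Throughout I use \cref{assump:bounded_W} to take the comparator to satisfy $\|\xi\|_F\le c_W$ (for larger $\xi$ the be-the-leader estimate below already dominates the claimed right-hand side) and to keep all residuals $\phi(\vx_t)^\top\mW-\vy_t^\top$ bounded on the relevant domain so that the square losses are exp-concave there.

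For the exact FTRL term, since $\mW^{(t+1)}$ minimizes $\sum_{i=1}^{t}f_i+\tfrac{1}{\lambda}\|\cdot\|_F^2$ and $\mW^{(1)}=\vzero$, the be-the-leader lemma gives $\sum_t f_t(\mW^{(t+1)})\le \sum_t f_t(\xi)+\tfrac{1}{\lambda}\|\xi\|_F^2\le \sum_t f_t(\xi)+\tfrac{1}{\lambda}c_W^2$, hence $\sum_t\bigl(f_t(\mW^{(t)})-f_t(\xi)\bigr)\le \tfrac{1}{\lambda}c_W^2+\sum_t\bigl(f_t(\mW^{(t)})-f_t(\mW^{(t+1)})\bigr)$. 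Each stability term is controlled by convexity of $f_t$ together with the closed form of the rank-one ridge update, $\mW^{(t+1)}-\mW^{(t)}=(\mA^{(t)}+\tfrac{1}{\lambda}\mI)^{-1}\phi(\vx_t)\bigl(\vy_t-\mW^{(t)\top}\phi(\vx_t)\bigr)^\top$; the residual factor is bounded through $c_y$ and $c_W$, reducing the sum to $\sum_t\phi(\vx_t)^\top(\mA^{(t)}+\tfrac{1}{\lambda}\mI)^{-1}\phi(\vx_t)$ up to constants. The standard $\log\det$ telescoping $\sum_t\phi(\vx_t)^\top(\mA^{(t)}+\tfrac{1}{\lambda}\mI)^{-1}\phi(\vx_t)\le \log\tfrac{\det(\mA^{(T)}+\tfrac{1}{\lambda}\mI)}{\det(\tfrac{1}{\lambda}\mI)}$, combined with boundedness of $\|\phi\|_2$ implied by \cref{assump:lambda}, then yields the $D(\log(T)+1)$ factor; here the regularizer $\tfrac{1}{\lambda}\mI$ both well-poses the early rounds (\cref{app:lem:sparse-solution}) and appears as the base of the telescoping matrix $\mA^{(t)}+\tfrac1\lambda\mI$.

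For the sparse-update error I would write $f_t(\widetilde{\mW}^{(t)})-f_t(\mW^{(t)})\le \langle\nabla f_t(\widetilde{\mW}^{(t)}),\widetilde{\mW}^{(t)}-\mW^{(t)}\rangle$, so it suffices to bound $\|\widetilde{\mW}^{(t)}-\mW^{(t)}\|$ summably. The block-coordinate step \cref{eq:sparse_update_rule} optimizes only the active rows $s$ of the round-$t$ objective and freezes the rows $\overline{s}$; since the round-$t$ data only perturbs the $(s,s)$ blocks of $\mA$ and $\mB$, the residual gradient on the frozen rows is $2\mA^{(t)}_{\overline{s} s}\bigl(\widetilde{\mW}_s^{(t)}-\widetilde{\mW}_s^{(t-1)}\bigr)$, which \cref{assump:sparse} is exactly designed to control by forcing $K=|s|$ large relative to $\|\mA^{(t)}_{\overline{s}s}(\mA^{(t)}_{ss}+\tfrac{1}{\lambda}\mI)^{-1}\|_2$ and to the relative feature weight at time $t$; meanwhile \cref{assump:lambda} keeps the empirical second-moment matrix stable so these freezing errors decay like $1/t$ and do not compound across rounds. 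Carrying the activation size $K$ and the constant $r_M$ of \cref{eq:r_M} through this estimate converts this term into the $\mathcal{O}\bigl(\lambda K^2 r_M^2 c_y^2 D(\log(T)+1)\bigr)$ overhead seen in the bound.

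Adding the two contributions and collecting constants gives $\mathrm{Regret}(T)\le \tfrac{1}{\lambda}c_W^2+5\lambda(K^2 r_M^2+1)c_y^2 D(\log(T)+1)$, the first display. Minimizing the right-hand side over $\lambda$ balances the two summands at $\lambda=c_W/\bigl(c_y\sqrt{5(K^2 r_M^2+1)D(\log(T)+1)}\bigr)$, where each equals $\tfrac{1}{2}c_W c_y\sqrt{20(K^2 r_M^2+1)D(\log(T)+1)}$, whence the second display — valid provided this choice of $\lambda$ is $\ge 1$ and still compatible with \cref{assump:lambda,assump:sparse}, as the statement assumes. I expect the genuinely hard step to be the sparse-update error: showing that the block-coordinate freezing error is not only small per round but truly non-accumulating requires a careful interplay between the covariance-stability \cref{assump:lambda} and the activation-size \cref{assump:sparse}, and it is this part that departs from the textbook FTRL analysis; the $\log\det$ telescoping and the one-dimensional optimization over $\lambda$ are routine by comparison.
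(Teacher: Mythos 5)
Your high-level decomposition — splitting $\mathrm{Regret}(T)$ into the sparse-update error $\sum_t\bigl(f_t(\widetilde{\mW}^{(t)})-f_t(\mW^{(t)})\bigr)$ plus the exact-FTL regret $\sum_t\bigl(f_t(\mW^{(t)})-f_t(\xi)\bigr)$, and handling the latter via be-the-leader plus a stability sum — is exactly the paper's structure (\cref{app:lem:lem4,app:lem:lem5}). But two of your steps diverge from what the stated assumptions actually support. First, for the stability term you invoke the standard online-ridge $\log\det$ telescoping $\sum_t\phi(\vx_t)^\top(\mA^{(t)}+\tfrac{1}{\lambda}\mI)^{-1}\phi(\vx_t)\le\log\det(\mI+\lambda\mA^{(T)})$, "combined with boundedness of $\|\phi\|_2$ implied by \cref{assump:lambda}". \cref{assump:lambda} does not bound $\|\phi\|_2$; it bounds the deviation of $\phi(\vx)\phi(\vx)^\top$ from the empirical average, and its actual role in the paper is to give the operator inequality $(\sum_{i\le t}\phi(\vx_i)\phi(\vx_i)^\top+\tfrac{1}{\lambda}\mI)^{-1}\phi(\vx_t)\phi(\vx_t)^\top\preceq\tfrac{1}{t}\mI$ directly, so that each stability term is $O(\lambda c_y^2D/t)$ and the sum is $5\lambda c_y^2D(\log T+1)$ with no telescoping at all (\cref{app:lem:lem3}). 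The $\log\det$ route would instead produce a $D\log(1+\lambda\sup_t\|\phi(\vx_t)\|_2^2T/D)$ factor whose $\lambda$- and $\|\phi\|$-dependence is not controlled by the hypotheses and does not match the claimed constant structure; also, bounding the residual factor "through $c_y$ and $c_W$" injects a $c_W^2$ into the $\log T$ term, which the stated bound does not have.

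Second, and more seriously, the sparse-update error — which you correctly identify as the hard part — is left essentially open. Your sketch (the frozen-row residual gradient is $2\mA^{(t)}_{\overline{s}s}(\widetilde{\mW}^{(t)}_s-\widetilde{\mW}^{(t-1)}_s)$, "which \cref{assump:sparse} is designed to control") does not produce a quantitative, summable bound on $\widetilde{\mW}^{(t)}-\mW^{(t)}$, and it is not clear it closes: the difficulty is that the sparse iterate freezes $\mW^{(t-1)}_{\overline{s}}$ while the exact minimizer moves \emph{all} rows, so one must compare against the exact solution's Schur-complement form. The paper's argument (\cref{app:prop:matrix-inverse,app:prop:gap}) inverts $\mA^{(t)}+\tfrac{1}{\lambda}\mI$ blockwise, shows the discrepancy on the active block equals $\bigl(\mS^{-1}+(\mA_{ss}+\tfrac{1}{\lambda}\mI)^{-1}\mM\mS^{-1}-(\mA_{ss}+\tfrac{1}{\lambda}\mI)^{-1}\bigr)\mB_s\preceq(r_M-1)(\mA_{ss}+\tfrac{1}{\lambda}\mI)^{-1}\mB_s$ (this is where $r_M$ from \cref{eq:r_M} enters), and then uses \cref{thrm:assump3} — the actual content of \cref{assump:sparse} — to dominate $(\mA_{ss}+\tfrac{1}{\lambda}\mI)^{-1}\mB_s$ by $K(\mA^{(t)}+\tfrac{1}{\lambda}\mI)^{-1}\phi(\vx_t)\vy_t^\top=K\Delta_t$, yielding $\mW^{(t+1)}-\widetilde{\mW}^{(t+1)}\preceq Kr_M\Delta_t$ and hence, after hitting with $\phi(\vx_t)\phi(\vx_t)^\top$ and applying \cref{assump:lambda} again, a per-round error of order $\lambda K^2r_M^2c_y^2D/(t-1)$. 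Without this chain — block inversion, the $r_M$ contraction, and the $K\Delta_t$ domination — the $K^2r_M^2$ factor in the theorem cannot be recovered, so your proposal as written has a genuine gap precisely at the step that distinguishes this theorem from textbook FTRL. The final $\lambda$-balancing step is fine and matches the paper.
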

\begin{proof}[Proof Sketch (for formal proof see~\cref{sec:analysis:sparse_regret})]
    Note that we can decompose the regret into:
    \begin{align*}
        \mathrm{Regret}(T) 
        = {} &
        \sum_{t=1}^T\Bigl(f_t(\widetilde{\mW}^{(t)}) - f_t(\mW^{(t)})\Bigr) 
        \\ 
        &\quad\quad\quad + 
        \left(\sum_{t=1}^Tf_t(\mW^{(t)}) - \sum_{t=1}^Tf_t(\xi)\right).
    \end{align*}
    The first term involves the gap between the solutions of \cref{eq:batch_least_squares} and the sparse online model learning \cref{eq:sparse_update_rule} at each time step.
    By Schur's Complement Lemma for block matrix inversion~\citep{zhang2006schur} together with the sparse update rule~\cref{eq:sparse_update_rule}, we can upper-bound the difference between $\widetilde{\mW}^{(t)}$ and $\mW^{(t)}$ (see \cref{app:prop:gap}), which is further used to bound $f_t(\widetilde{\mW}^{(t)}) - f_t(\mW^{(t)})$ (see the proof in \cref{sec:analysis:sparse_regret}).
    The second term is the regret bound for online model learning~\cref{eq:batch_least_squares}.
    By bounding the difference between $\mW^{(t)}$ and $\mW^{(t+1)}$ (see \cref{app:lem:lem2}), we can upper-bound $f(\mW^{(t)}) - f(\mW^{(t+1)})$ (see \cref{app:lem:lem3}).
    Then, following a similar proof~\citep{shalev2012online} for the regret bounds of FTL models, we obtain the regret bound for online model learning (see \cref{app:thrm:batch-regret}).
    Combining the bounds for the two terms we conclude the proof.
    We refer to \cref{sec:analysis:batch,sec:analysis:sparse} for more details and other results.
\end{proof}

\vspace{-0.1cm}
\subsection{Planning with online world models}
\vspace{-0.1cm}
\label{sec:plan_wm}
Based on its internal no-regret model about the world, $\OA$ acts in the environment by planning and MPC. For planning we use the cross-entropy method (CEM) \citep{rubinstein1999cem, de2005cemtutorial}, a stochastic derivative-free optimization technique that has demonstrated effectiveness in different model-based RL scenarios \citep{chua2018deep,wang2019exploring}. It solves \cref{eq:plan_obj} in an iterative manner as described below.

In each iteration, it first generates $N$ action sequence candidates $\gC = \{ \va_{t:t+H} \}_{n=1}^N$ for a planning horizon $H$, in which each $\va_i$ is independently sampled from $\gN(\bm{\mu}, \operatorname{diag}(\bm{\sigma}^2))$ with $\bm{\mu}, \bm{\sigma} \in \sR^{A}$. Then, individual candidates are evaluated by simulating the action sequences in the learned model to compute the total rewards. Finally, only a fixed number of elite candidates with high total rewards is selected and used to estimate the parameters $\{\bm{\mu}, \bm{\sigma}\}$ by maximum likelihood for the next iteration. After a few iterations, an approximately optimal action sequence can be found and used by MPC.

Instead of the plain CEM, we adopt several improvements to make it efficient for model-based RL. The first modification is \textit{shift-initialization}. At time step $(t+1)$, we initialize the candidates $\gC^0_{t+1}$ using the solution of the previous decision step $\gC^\star_t$, shifted by one step. This is because MPC only takes the first best action and discards the remaining $(H-1)$ results, which could be utilized to provide better initialization for the next time step. We also incorporate \textit{colored noise} and \textit{memory} proposed by \citet{pinneri2021sample} for better sample efficiency. The colored noise injects temporal correlation along the planning horizon when generating $\va_{t:t+H}$, which potentially leads to deeper exploration of the state space. Meanwhile, the memory keeps the elite candidates generated at each CEM iteration and carry a fraction of them over to initialize the next iteration. Reusing these high quality samples could speed up the convergence of CEM iterations.
We refer to \cref{sec:app:agent_env_loop,sec:app:oca_details} for a detailed algorithm and hyperparameters used in this paper.


\vspace{-0.1cm}
\section{Continual Bench: An Environment for CRL Evaluation}
\vspace{-0.1cm}
\label{sec:cb}

The field of machine learning in general benefits from properly designed datasets and benchmarks to iterate algorithms. However, in CRL, there are not many widely adopted benchmarks, and prior works tend to themselves create a sequence of tasks from an existing suite of environments. For example, \citet{kirkpatrick2017ewc} select a sequence of Atari games \citep{bellemare2013arcade} to learn sequentially, and \citet{huang2021continual} vary the world properties (\textit{e.g.} the density of the same object) in a single environment. 
The former type may overlook knowledge transfer across tasks due to the \textit{lack of meaningful overlapping}, and the latter kind could be \textit{unrealistic} because real world physics does not change.

Recently, \citet{wolczyk2021continual} propose a benchmark named Continual-World, where they select a sequence of tasks from Meta-World \citep{yu2019meta} and run experiments using soft actor-critic (SAC) \citep{haarnoja2018soft} with various continual learning techniques. This benchmark has put explicit focus on low-level transfer, because the same robotic arm is instructed to achieve different goals by interacting with real-world objects. However, the potential underlying \textit{physical conflict} impedes the original design purpose.

\begin{figure}[t]
    \centering
    \includegraphics[width=0.8\linewidth]{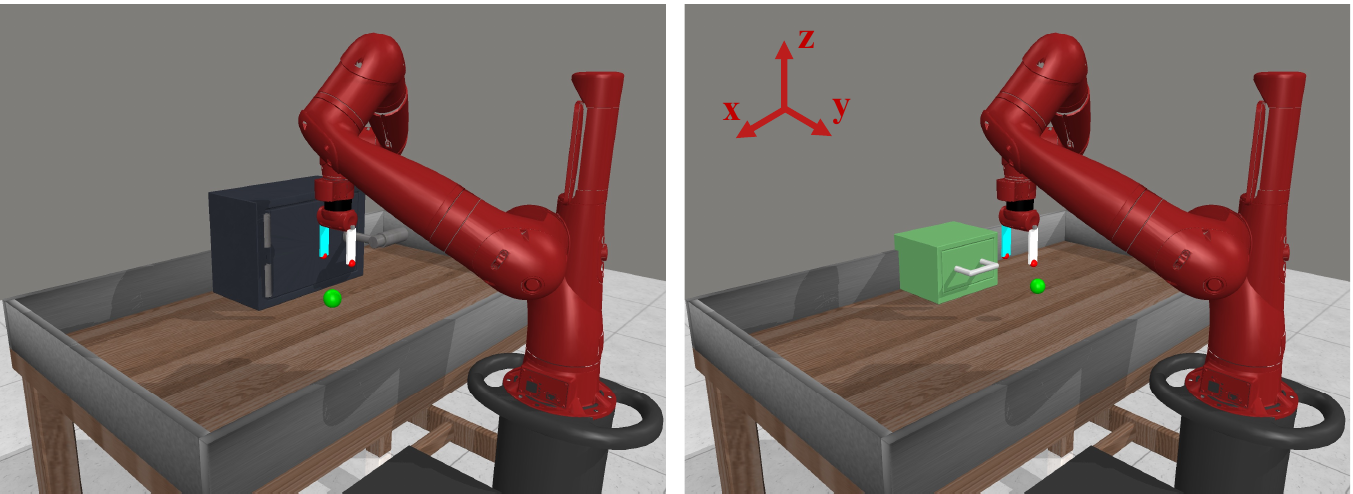}
    \caption{Potential physical conflicts between two consecutive tasks in Continual-World~\citep{wolczyk2021continual}. The handles of the door and the drawer follow different trajectory when being opened.}
    \vspace{-0.7cm}
  \label{fig:conflicts_illustration}
\end{figure}

We use \cref{fig:conflicts_illustration} to illustrate how such conflict would appear and why it is undesirable. The figure shows two similar tasks: open the door (left) and open the drawer (right). As different tasks all share the same observation space \citep{wolczyk2021continual}, the locations of both the door handle and the drawer handle will be regarded as ``the first object location'', thus their values are placed into the same positions of the observation vector. However, when the gripper pulls the handle, the door will allow a circular movement in the $xy$ plane, but the drawer will constrain the displacement to the $y$ axis, immediately resulting in conflicting world dynamics. This could prevent us from studying forgetting or transfer in CRL, because there is even no common solution about the world, not to mention how the skills learned from one task can transfer to the next one. This issue is not obvious in \citet{wolczyk2021continual} because they work with model-free methods, take advantage of task ID information, and learn separate heads for different tasks.

Motivated by above-mentioned issues, we design Continual Bench, a dedicated environment for CRL evaluation, for assessing $\OA$ and comparing it with baselines. Our development is based on the task primitives proposed by Meta-World, but instead of directly concatenating several tasks temporally like Continual-World, we redesign the environment to arrange different tasks \textit{spatially}. This ensures the existence of a \textit{unified world dynamics}, which corresponds to the multi-task solution, a basic assumption for achieving an ideal continual learner \citep{peng2023ideal}. \cref{fig:continual_bench} gives an overview of the environment. Starting from the lower left and anti-clockwise, the $6$ selected tasks with diverse difficulty levels are: \texttt{peg-unplug}, \texttt{faucet-close}, \texttt{pick-place}, \texttt{door-open}, \texttt{window-close}, \texttt{button-press}. They are placed in a circle with maximal distances to each other, facilitating test on forgetting in the presence of distributional shift in state-action visitation when switching tasks. Different tasks also share meaningful overlapping, allowing us to study transfer, though it is not the focus of this work. We open source the code of Continual Bench\footnote{\url{https://github.com/sail-sg/ContinualBench}} and hope this realistic but lightweight environment can accelerate the progress of CRL research. Please see \cref{sec:app:benchmark_details} for detailed environment specifications.

\begin{figure}[t]
    \centering
    \includegraphics[width=\linewidth]{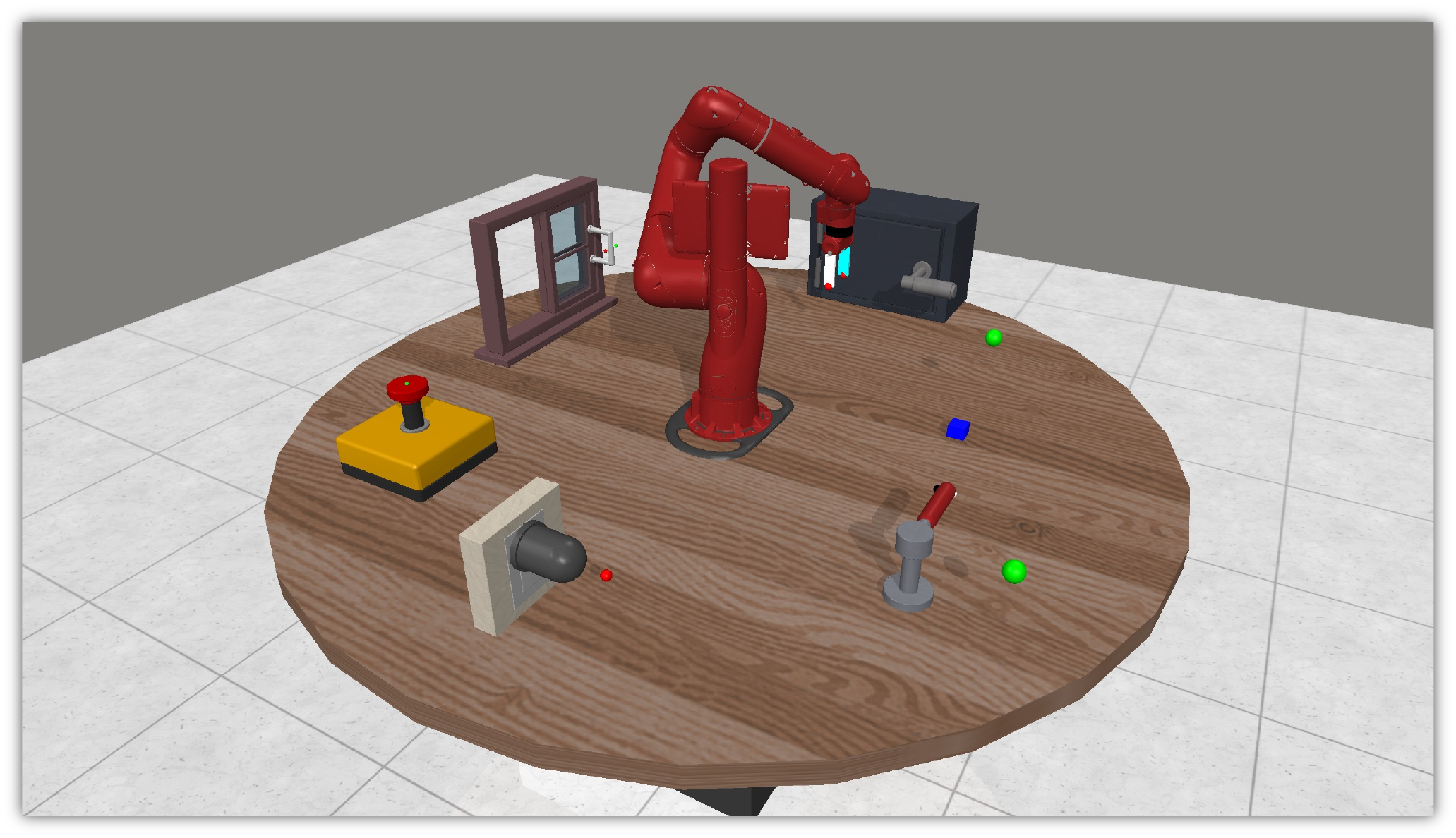}
    \caption{The Continual Bench environment consists of 6 tasks with diverse difficulty levels. All tasks share the same unified dynamics.}
    \label{fig:continual_bench}
\end{figure}


\vspace{-0.1cm}
\section{Experiment}
\vspace{-0.1cm}
We evaluate $\OA$ using Continual Bench, and compare its CRL capability with several continual learning baselines under the same agent design framework.

\vspace{-0.1cm}
\subsection{Setup and metrics}
\vspace{-0.1cm}
\label{sec:setup}
We mainly focus on the model-based planning framework, where all agents we compare are model-based RL agents that learn a world model using experiences and act using CEM planning with MPC (refer to \cref{fig:oca}). In this setup, $\OA$ learns the world model online (\cref{sec:online_wm_learning}), while the baselines employ a deep world models, a default choice of many prior approaches \citep{chua2018deep, wang2019exploring, kessler23wmcrl}.

The reward function is provided to the agent and changed upon task switch, but no task boundary information is given to the world model learning (unless the continual learning baseline requires it).
We let all agents run in Continual Bench to continuously solve a sequence of $6$ tasks in the order (\texttt{pick-place}, \texttt{button-press}, \texttt{door-open}, \texttt{peg-unplug}, \texttt{window-close}, \texttt{faucet-close}). The order is chosen such that adjacent tasks are as spatially far away as possible to increase the nonstationarity. 

The performance is measured in accordance to the CRL objective defined in \cref{eq:crl_objective}. Since all the tasks in Continual Bench have a binary success measure depending on current and goal state, we define \textbf{average performance} on all seen tasks to evaluate agents at global time step $w$:
\begin{equation}
\begin{aligned}
    &AP(w) = \frac{1}{T_w} \sum_{\tau=1}^{T_w} p_\tau(w), \quad \text{with}\\
     p_\tau(w) &= \mathbb{E}_{\pi, P^u} \! \Biggl[\prod_{t=0}^{\infty} \sI(\|\vs_t-\vg\|^2_2<\delta) | \vs_0 \!= \! \vs \sim \rho^\tau_0 \! \Biggr].
    \label{eq:avg_success_rate}
\end{aligned}
\end{equation}
$T_w$ is the number of tasks an agent has experienced at step $w$, $p_\tau(w)$ is the success rate of the policy at time $w$ on task $\tau$, and $\sI$ is the indicator function. Intuitively, this metric reflects the \textit{offline} performance of $\pi$ at time step $w$ across all observed tasks, thus accounting for forgetting.

We also define \textbf{regret} to better capture the \textit{online} performance of an agent:
\begin{equation}
    Reg(w) = \frac{1}{w} \int_{\tau=0}^w (1 - p_{T_\tau}(\tau)) d\tau,
\end{equation}
which effectively calculates the normalized area enclosed by the online agent's performance curve and that of an oracle agent (with success rate always being 1), as depicted in \cref{fig:regret_illustration}.

\begin{figure}[t]
    \centering
    \includegraphics[width=0.9\linewidth]{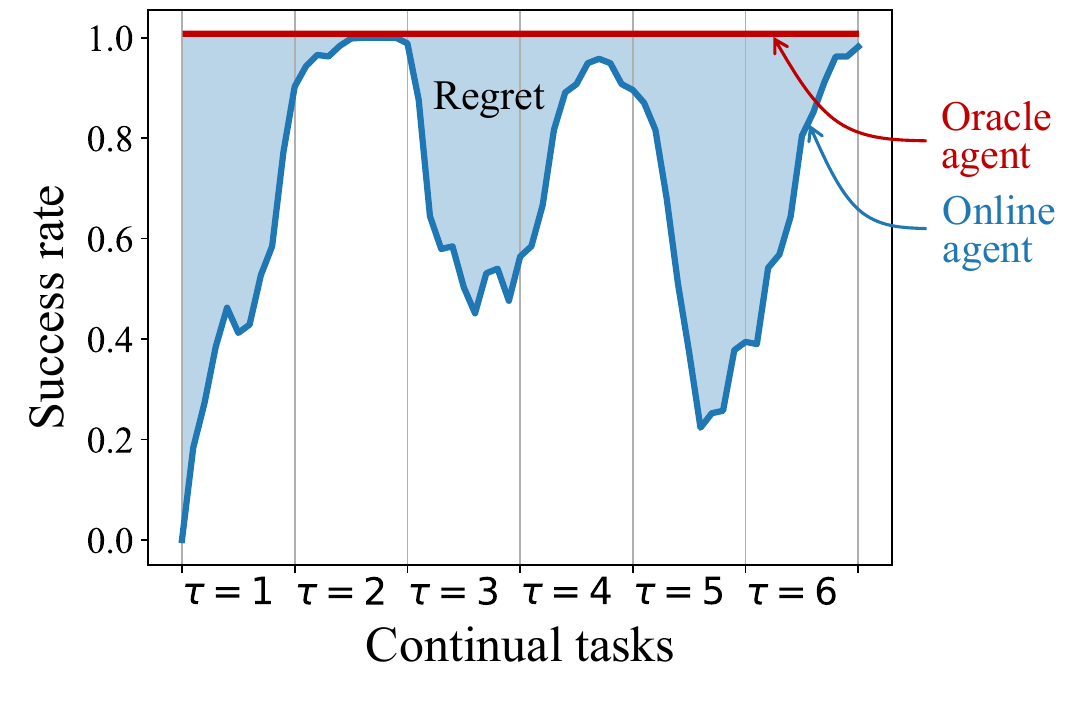}
    \caption{Regret can be calculated with the area in blue, which integrates the sub-optimality gap of the online agent.}
    \label{fig:regret_illustration}
    \vspace{-0.4cm}
\end{figure}

\begin{figure*}[t!]
    \centering
    \includegraphics[width=\textwidth]{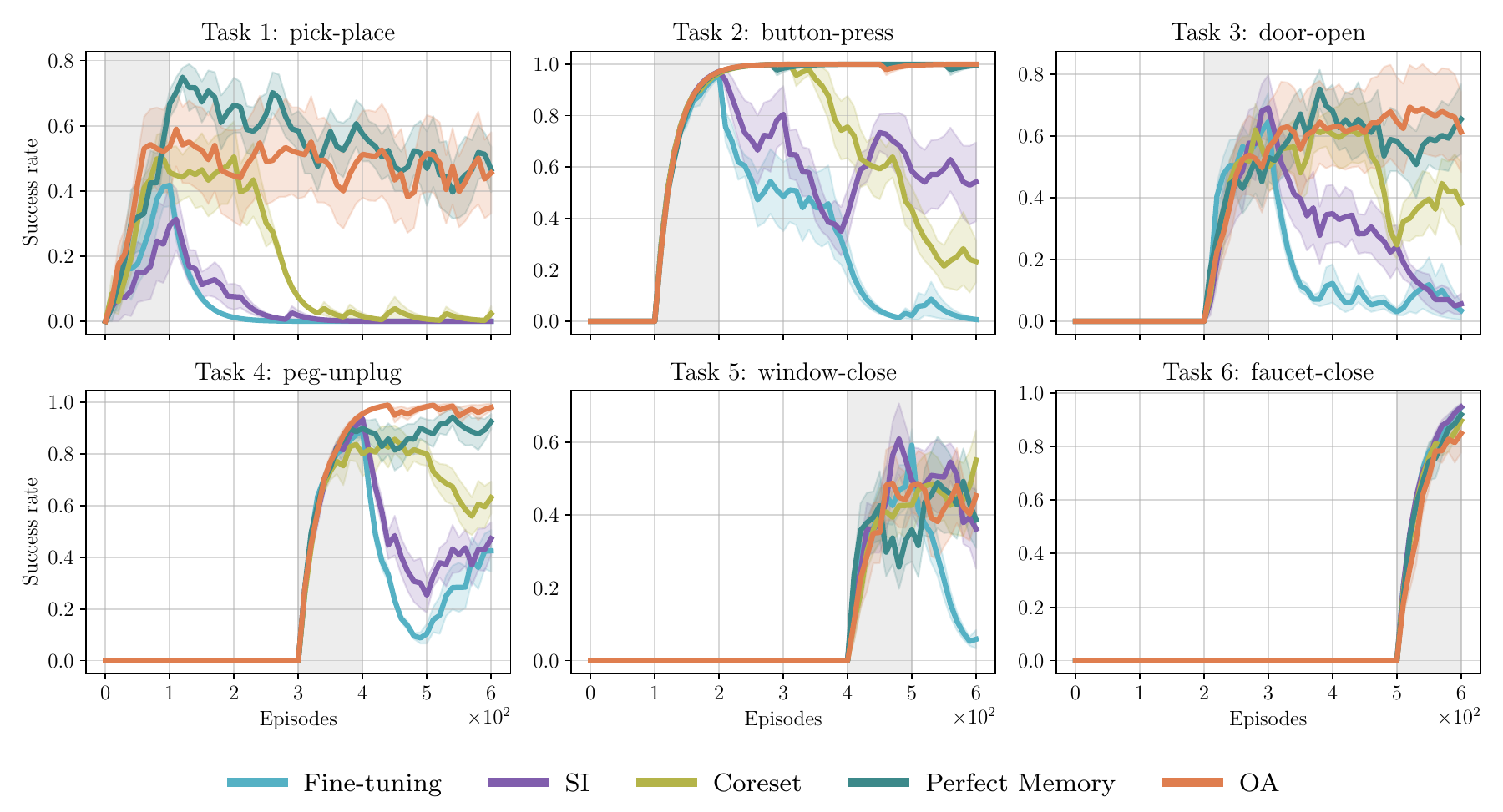}
    \caption{Performance comparison of our agent $\OA$ with different deep model-based planning agents on Continual Bench environment. Results are aggregated from $7$ runs with different seeds. The solid lines show the means and shaded areas are the standard errors. The grey regions denote the learning period for the current task $\tau \in [1, 2, \dots, 6]$.}
    \label{fig:result_invididual}
    \vspace{-0.2cm}
\end{figure*}
\vspace{-0.1cm}
\subsection{Baselines}
\vspace{-0.1cm}
\label{sec:baselines}
Continual learning (CL) methods can be categorized into several paradigms, such as regularization-based methods \citep{kirkpatrick2017ewc, zenke2017continual,aljundi2018selfless}, replay-based methods \citep{riemer2018learning, aljundi2019gradient, chaudhry2019continual} and architecture-based methods \citep{mallya2017packnet,kang2023forgetfree}. Prior CRL works typically apply CL methods to base RL agents, such as the model-free Soft Actor-Critic (SAC)~\citep{haarnoja2018soft,yang2023continual} or the model-based Dyna~\citep{sutton1990dyna,liu2024losse}. We follow the practices to test multiple representative CL methods with both a model-free actor-critic agent and a model-based planning agent. Concretely, we consider \textbf{EWC}~\citep{kirkpatrick2017ewc} and \textbf{SI}~\citep{zenke2017continual} for regularization-based methods, \textbf{PackNet}~\citep{mallya2017packnet}  for architecture-based methods, and \textbf{Coreset}~\citep{vitter1985random} for replay-based methods. We also include the vanilla baseline of \textbf{Fine-tuning} which fine-tunes the model only on the current task, and upper-bound baselines \textbf{Perfect Memory} that train on all previous data. More details about the baselines can be found in \cref{sec:app:baselines}.

\vspace{-0.1cm}
\subsection{Learning curves of model-based agents}
\vspace{-0.1cm}
We first present the comparison of learning curves between $\OA$ and baseline agents in the model-based planning setting on the Continual Bench environment.
\cref{fig:result_invididual} shows the performance (success rate) of different agents for all $6$ tasks, measured along the learning process. Different tasks are presented to the agents sequentially, with the shaded region in grey denoting the learning period of that particular task. We first observe that during the learning period of a particular task, all agents effectively learn to perform the task well, showing the \textit{plasticity} of model-based agents: it can reuse the knowledge of the world dynamics and quickly adapt to new tasks by planning with corresponding reward functions. Focusing on the \textit{stability} (to retain the performance on previously seen tasks), we notice that \textbf{Fine-tuning} agents immediately cease to perform well on the old task when switching to the new one due to catastrophic forgetting. Using \textbf{SI} to regularize the weight updates can alleviate the forgetting slightly, resulting in a bit higher final performance on two tasks (\texttt{button-press} and \texttt{window-close}). Maintaining a \textbf{Coreset} for experience replay performs better than \textbf{SI}, resembling the state-of-the-art performance of replay-based methods in supervised continual learning \citep{boschini2022class}. However, its performance diminishes as the proportion of older experiences decreases. This suggests that determining the size of the replay is a challenging design choice, especially when the number of tasks and the number of samples in each task are unknown. In comparison, \textbf{$\OA$} stands out by demonstrating non-forgetting capability building on the online FTL world models. For all tasks it has seen, $\OA$ maintains a high performance over all remaining time steps, matching the performance of \textbf{Perfect Memory}. However, \textbf{Perfect Memory} ensures non-forgetting by keeping all the data and performing SGD updates until convergence, which is inefficient with unboundedly growing computation for lifelong agents. $\OA$, on the other hand, achieves the same performance by a much more efficient online update with constant overhead.


\begin{figure*}[t]
    \includegraphics[width=\linewidth]{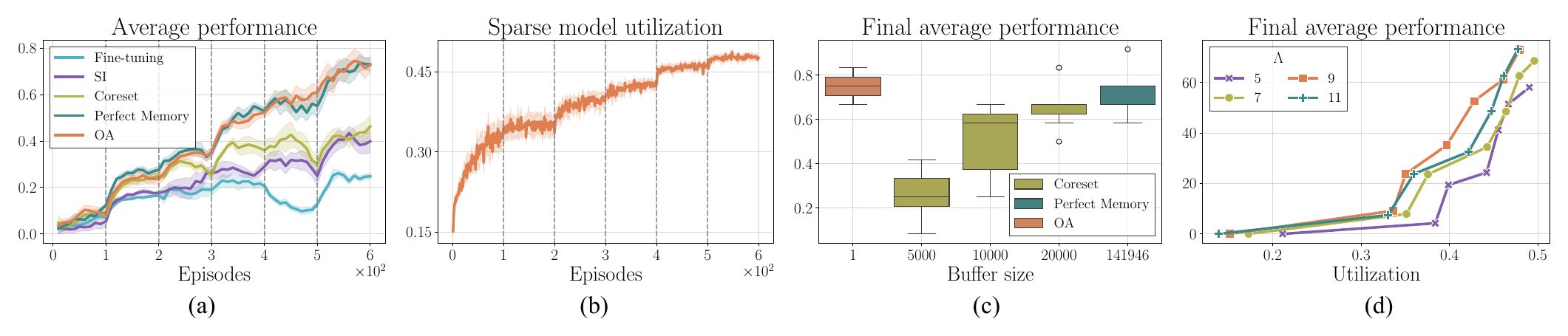}
    \captionof{figure}{\textbf{(a)} The average performance curves of different methods. \textbf{(b)} Ratio of the activated weights of the $\OA$ sparse world model. \textbf{(c)} Final agent performance with different buffer budgets. \textbf{(d)} Ablation on the world model sparsity.}
    \label{fig:analysis}
\end{figure*}

\vspace{-0.1cm}
\subsection{Average performance of model-based and model-free agents}
\vspace{-0.1cm}
We show the average performance (\cref{eq:avg_success_rate}) curves over all seen tasks in \cref{fig:analysis}(a). We scale the performance by $\frac{T_w}{6}$ for better visual effect. The dashed lines denote task switching, upon which we can observe a performance jump of different scales for all agents. Due to the loss of stability, deep agents (even with various continual learning techniques) struggle to succeed on all sequentially seen tasks. Only $\OA$ (and deep agent with perfect memory) can improve the average performance continually. \cref{tab:ap_and_f} gives quantitative results of the average performance (AP) and the regret (Reg) measured over all tasks at the final step of agent learning. For model-free agents, various CL methods improve AP compared to the Fine-tuning baseline, while they all incur high regret. This might be because the learned policy and value are hard to adapt to new tasks quickly. On the other hand, model-based agents generally exhibit lower regret by planning with the learned world model, which is shared across tasks thus being more generalizable. Notably, the proposed method achieves even lower regret than deep agents with perfect memory. This is because our incremental update ensures the optimal solution at each step, while SGD updates on all previous data are not guaranteed and could be less efficient.
While model-based results in \cref{fig:result_invididual} \& \cref{tab:ap_and_f} measure the agent's CRL performance (a joint result of world model and planner), the world model accuracy provides a more direct measure. Please see \cref{sec:app:more_empirical_results} for more results.

\begin{table}
\centering
  \captionof{table}{Comparison on average performance and regret (in $\%$) of regularization-based (\faThumbTack), architecture-based (\faObjectUngroup), replay-based (\faTasks) and perfect memory (\faSitemap) methods.}
  \label{tab:ap_and_f}
{\small
\begin{tabular}{lcl|rr}
\toprule
         \shortstack{Base} & \multicolumn{2}{c}{Methods} & $AP$ ($\uparrow$)  & $Reg$ ($\downarrow$) \\
\midrule
\midrule
\multirow{6}{*}{\rotatebox[origin=c]{90}{\shortstack{Model-free \\ (SAC)}}} &  & Fine-tuning & 0.69 & 67.68 \\
& \faThumbTack & EWC & 31.45 & 77.38 \\
& \faObjectUngroup & PackNet & 41.72 & 78.50 \\
& \faTasks & Coreset & 37.31 & 77.97 \\
& \faSitemap & Perfect Memory & 41.51 & 77.58 \\
\midrule
\multirow{4}{*}{\rotatebox[origin=c]{90}{\shortstack{Model\\based}}}&  &
Fine-tuning    & 24.86               & 37.74 \\
& \faThumbTack & SI             & 39.96               & 33.57 \\
& \faTasks & Coreset        & 61.83               & 30.83 \\
& \faSitemap & Perfect Memory & \textbf{73.09}               & 30.95 \\
\midrule
& & OA (ours)            & \textbf{72.93}               & \textbf{27.62} \\
\bottomrule

\end{tabular}
  }

\end{table}

\vspace{-0.1cm}
\subsection{Ablation analysis}
\vspace{-0.1cm}
$\OA$ perceives the state-action inputs as high-dimensional sparse features (\cref{sec:online_wm_learning}), and naturally learns a sparse world model. 
\cref{fig:analysis}(b) shows the change of the model utilization (ratio of activated weights) of $\OA$. Interestingly, we can observe the sparse model gradually grows its utilization as learning more new tasks. Note that even if $\OA$ reaches nearly full model utilization, it can still learn from newly coming data because the model solves for an overall least squares solution based on sufficient statistics (\cref{eq:batch_least_squares}) and the sparse learning is no-regret (\cref{thrm:main-thrm}). In contrast, methods based on iterative solution finding from a prior solution subspace \citep{FarajtabarAML20,peng2023ideal} may exhaust the solution space quickly and fail to learn from new data. \cref{fig:analysis}(c) compares the performance against the buffer size, and shows that $\OA$ achieves the best performance under buffer constraints.
Finally, in \cref{fig:analysis}(d) we ablate the effect of different sparsity of the model. $\Lambda$ denotes the number of bins in Losse (\citep{liu2024losse}, \cref{sec:app:losse}); larger $\Lambda$ means sparser models. The results show that sparser model contains greater capacity and reaches higher performance with less activated weights. This is a natural result as $D$ increases with $\Lambda$ (the network is becoming wider), but the attractive property is that both the update and inference computation remains almost the same since $K$ is fixed. This shows the flexibility of choosing suitable sparsity levels to model environments with different complexities.


\section{Conclusion}
\label{sec:conclusion}
In this paper, we propose to tackle the CRL problem by developing Online Agents, which should learn a shared component throughout the agent's lifetime and update incrementally. In environments with a unified world dynamics across tasks, we learn such a shared component by online FTL world modeling and act by planning. Theoretically we prove that the sparse online update learns a no-regret world model.
To assess the agent's CRL capability, we further develop a benchmark named Continual Bench. Empirical results show that our Online Agents outperform several strong baselines under a fair setting, demonstrating its effectiveness and promise in building future autonomous agents.

\newpage
\section*{Impact Statement}

This paper presents work whose goal is to advance the field of Machine Learning. There are many potential societal consequences of our work, none which we feel must be specifically highlighted here.


\bibliography{reference}

\begin{thebibliography}{60}
\providecommand{\natexlab}[1]{#1}
\providecommand{\url}[1]{\texttt{#1}}
\expandafter\ifx\csname urlstyle\endcsname\relax
  \providecommand{\doi}[1]{doi: #1}\else
  \providecommand{\doi}{doi: \begingroup \urlstyle{rm}\Url}\fi

\bibitem[Abbas et~al.(2023)Abbas, Zhao, Modayil, White, and Machado]{abbas23plasticity}
Abbas, Z., Zhao, R., Modayil, J., White, A., and Machado, M.~C.
\newblock Loss of plasticity in continual deep reinforcement learning.
\newblock In \emph{Proceedings of The 2nd Conference on Lifelong Learning Agents}, 2023.

\bibitem[Abel et~al.(2024)Abel, Barreto, Van~Roy, Precup, van Hasselt, and Singh]{abel2024crldefinition}
Abel, D., Barreto, A., Van~Roy, B., Precup, D., van Hasselt, H.~P., and Singh, S.
\newblock A definition of continual reinforcement learning.
\newblock \emph{Advances in Neural Information Processing Systems}, 36, 2024.

\bibitem[Aljundi et~al.(2019{\natexlab{a}})Aljundi, Lin, Goujaud, and Bengio]{aljundi2019gradient}
Aljundi, R., Lin, M., Goujaud, B., and Bengio, Y.
\newblock Gradient based sample selection for online continual learning.
\newblock \emph{Neural Information Processing Systems}, 2019{\natexlab{a}}.

\bibitem[Aljundi et~al.(2019{\natexlab{b}})Aljundi, Rohrbach, and Tuytelaars]{aljundi2018selfless}
Aljundi, R., Rohrbach, M., and Tuytelaars, T.
\newblock Selfless sequential learning.
\newblock \emph{International Conference on Learning Representations}, 2019{\natexlab{b}}.

\bibitem[Bellemare et~al.(2013)Bellemare, Naddaf, Veness, and Bowling]{bellemare2013arcade}
Bellemare, M.~G., Naddaf, Y., Veness, J., and Bowling, M.
\newblock The arcade learning environment: An evaluation platform for general agents.
\newblock \emph{Journal of Artificial Intelligence Research}, 2013.

\bibitem[Bellman(1957)]{bellman1957markovian}
Bellman, R.
\newblock A markovian decision process.
\newblock \emph{Journal of mathematics and mechanics}, 1957.

\bibitem[Boschini et~al.(2022)Boschini, Bonicelli, Buzzega, Porrello, and Calderara]{boschini2022class}
Boschini, M., Bonicelli, L., Buzzega, P., Porrello, A., and Calderara, S.
\newblock Class-incremental continual learning into the extended der-verse.
\newblock \emph{IEEE Transactions on Pattern Analysis and Machine Intelligence}, 2022.

\bibitem[Chaudhry et~al.(2019{\natexlab{a}})Chaudhry, Ranzato, Rohrbach, and Elhoseiny]{Chaudhry19agem}
Chaudhry, A., Ranzato, M., Rohrbach, M., and Elhoseiny, M.
\newblock Efficient lifelong learning with {A-GEM}.
\newblock In \emph{International Conference on Learning Representations}, 2019{\natexlab{a}}.

\bibitem[Chaudhry et~al.(2019{\natexlab{b}})Chaudhry, Rohrbach, Elhoseiny, Ajanthan, Dokania, Torr, and Ranzato]{chaudhry2019continual}
Chaudhry, A., Rohrbach, M., Elhoseiny, M., Ajanthan, T., Dokania, P., Torr, P., and Ranzato, M.
\newblock Continual learning with tiny episodic memories.
\newblock In \emph{Workshop on Multi-Task and Lifelong Reinforcement Learning at ICML}, 2019{\natexlab{b}}.

\bibitem[Chua et~al.(2018)Chua, Calandra, McAllister, and Levine]{chua2018deep}
Chua, K., Calandra, R., McAllister, R., and Levine, S.
\newblock Deep reinforcement learning in a handful of trials using probabilistic dynamics models.
\newblock \emph{Advances in Neural Information Processing Systems}, 2018.

\bibitem[Cobbe et~al.(2020)Cobbe, Hesse, Hilton, and Schulman]{cobbe2020procgen}
Cobbe, K., Hesse, C., Hilton, J., and Schulman, J.
\newblock Leveraging procedural generation to benchmark reinforcement learning.
\newblock \emph{International Conference on Machine Learning}, 2020.

\bibitem[De~Boer et~al.(2005)De~Boer, Kroese, Mannor, and Rubinstein]{de2005cemtutorial}
De~Boer, P.-T., Kroese, D.~P., Mannor, S., and Rubinstein, R.~Y.
\newblock A tutorial on the cross-entropy method.
\newblock \emph{Annals of operations research}, 2005.

\bibitem[Farajtabar et~al.(2020)Farajtabar, Azizan, Mott, and Li]{FarajtabarAML20}
Farajtabar, M., Azizan, N., Mott, A., and Li, A.
\newblock Orthogonal gradient descent for continual learning.
\newblock In Chiappa, S. and Calandra, R. (eds.), \emph{The 23rd International Conference on Artificial Intelligence and Statistics}, 2020.

\bibitem[Garcia et~al.(1989)Garcia, Prett, and Morari]{garcia1989mpc}
Garcia, C.~E., Prett, D.~M., and Morari, M.
\newblock Model predictive control: Theory and practice—a survey.
\newblock \emph{Automatica}, 1989.

\bibitem[Garcia \& Thomas(2019)Garcia and Thomas]{garcia2019metamdp}
Garcia, F.~M. and Thomas, P.
\newblock A meta-mdp approach to exploration for lifelong reinforcement learning.
\newblock \emph{Neural Information Processing Systems}, 2019.

\bibitem[Gaya et~al.(2023)Gaya, Doan, Caccia, Soulier, Denoyer, and Raileanu]{gaya2022building}
Gaya, J.-B., Doan, T., Caccia, L., Soulier, L., Denoyer, L., and Raileanu, R.
\newblock Building a subspace of policies for scalable continual learning.
\newblock \emph{International Conference on Learning Representations}, 2023.

\bibitem[Haarnoja et~al.(2018)Haarnoja, Zhou, Abbeel, and Levine]{haarnoja2018soft}
Haarnoja, T., Zhou, A., Abbeel, P., and Levine, S.
\newblock Soft actor-critic: Off-policy maximum entropy deep reinforcement learning with a stochastic actor.
\newblock In \emph{International Conference on Machine Learning}, 2018.

\bibitem[Huang et~al.(2006)Huang, Chen, Siew, et~al.]{huang2006universal}
Huang, G.-B., Chen, L., Siew, C.~K., et~al.
\newblock Universal approximation using incremental constructive feedforward networks with random hidden nodes.
\newblock \emph{IEEE Trans. Neural Networks}, 2006.

\bibitem[Huang et~al.(2021)Huang, Xie, Bharadhwaj, and Shkurti]{huang2021continual}
Huang, Y., Xie, K., Bharadhwaj, H., and Shkurti, F.
\newblock Continual model-based reinforcement learning with hypernetworks.
\newblock In \emph{IEEE International Conference on Robotics and Automation}, 2021.

\bibitem[Hutter(2000)]{hutter2000aixi}
Hutter, M.
\newblock A theory of universal artificial intelligence based on algorithmic complexity.
\newblock \emph{arXiv preprint arXiv: cs.0004001}, 2000.

\bibitem[Isele \& Cosgun(2018)Isele and Cosgun]{isele2018selective}
Isele, D. and Cosgun, A.
\newblock Selective experience replay for lifelong learning.
\newblock \emph{AAAI Conference on Artificial Intelligence}, 2018.

\bibitem[Johnson \& Lindenstrauss(1984)Johnson and Lindenstrauss]{johnson1984extensions}
Johnson, W.~B. and Lindenstrauss, J.
\newblock Extensions of lipschitz mappings into a hilbert space.
\newblock \emph{Contemporary Mathematics}, 1984.

\bibitem[Kang et~al.(2022)Kang, Yoon, Madjid, Hwang, and Yoo]{kang2023forgetfree}
Kang, H., Yoon, J., Madjid, S.~R., Hwang, S.~J., and Yoo, C.~D.
\newblock Forget-free continual learning with soft-winning subnetworks.
\newblock \emph{International Conference on Machine Learning}, 2022.

\bibitem[Kessler et~al.(2022)Kessler, Parker-Holder, Ball, Zohren, and Roberts]{kessler2021state}
Kessler, S., Parker-Holder, J., Ball, P.~J., Zohren, S., and Roberts, S.~J.
\newblock Same state, different task: Continual reinforcement learning without interference.
\newblock \emph{AAAI Conference on Artificial Intelligence}, 2022.

\bibitem[Kessler et~al.(2023)Kessler, Ostaszewski, Bortkiewicz, {\.{Z}arski}, Wolczyk, Parker-Holder, Roberts, and {Milo{\'s}}]{kessler23wmcrl}
Kessler, S., Ostaszewski, M., Bortkiewicz, M.~P., {\.{Z}arski}, M., Wolczyk, M., Parker-Holder, J., Roberts, S.~J., and {Milo{\'s}}, P.
\newblock The effectiveness of world models for continual reinforcement learning.
\newblock In \emph{Proceedings of The 2nd Conference on Lifelong Learning Agents}, 2023.

\bibitem[Khetarpal et~al.(2022)Khetarpal, Riemer, Rish, and Precup]{khetarpal2020towards}
Khetarpal, K., Riemer, M., Rish, I., and Precup, D.
\newblock Towards continual reinforcement learning: A review and perspectives.
\newblock \emph{Journal of Artificial Intelligence Research}, 2022.

\bibitem[Kingma \& Ba(2015)Kingma and Ba]{kingma2014adam}
Kingma, D.~P. and Ba, J.
\newblock Adam: A method for stochastic optimization.
\newblock In \emph{International Conference on Learning Representations}, 2015.

\bibitem[Kirkpatrick et~al.(2017)Kirkpatrick, Pascanu, Rabinowitz, Veness, Desjardins, Rusu, Milan, Quan, Ramalho, Grabska-Barwinska, et~al.]{kirkpatrick2017ewc}
Kirkpatrick, J., Pascanu, R., Rabinowitz, N., Veness, J., Desjardins, G., Rusu, A.~A., Milan, K., Quan, J., Ramalho, T., Grabska-Barwinska, A., et~al.
\newblock Overcoming catastrophic forgetting in neural networks.
\newblock \emph{Proceedings of the national academy of sciences}, 2017.

\bibitem[Kolve et~al.(2017)Kolve, Mottaghi, Han, VanderBilt, Weihs, Herrasti, Deitke, Ehsani, Gordon, Zhu, et~al.]{kolve2017ai2}
Kolve, E., Mottaghi, R., Han, W., VanderBilt, E., Weihs, L., Herrasti, A., Deitke, M., Ehsani, K., Gordon, D., Zhu, Y., et~al.
\newblock Ai2-thor: An interactive 3d environment for visual ai.
\newblock \emph{arXiv preprint arXiv:1712.05474}, 2017.

\bibitem[K{\"u}ttler et~al.(2020)K{\"u}ttler, Nardelli, Miller, Raileanu, Selvatici, Grefenstette, and Rockt{\"a}schel]{kuttler2020nethack}
K{\"u}ttler, H., Nardelli, N., Miller, A., Raileanu, R., Selvatici, M., Grefenstette, E., and Rockt{\"a}schel, T.
\newblock The nethack learning environment.
\newblock \emph{Advances in Neural Information Processing Systems}, 2020.

\bibitem[Liu et~al.(2024)Liu, Du, Lee, and Lin]{liu2024losse}
Liu, Z., Du, C., Lee, W.~S., and Lin, M.
\newblock Locality sensitive sparse encoding for learning world models online.
\newblock In \emph{International Conference on Learning Representations}, 2024.

\bibitem[Mallya \& Lazebnik(2018)Mallya and Lazebnik]{mallya2017packnet}
Mallya, A. and Lazebnik, S.
\newblock Packnet: Adding multiple tasks to a single network by iterative pruning.
\newblock \emph{IEEE/CVF Conference on Computer Vision and Pattern Recognition}, 2018.

\bibitem[Nagabandi et~al.(2018)Nagabandi, Finn, and Levine]{nagabandi2018deep}
Nagabandi, A., Finn, C., and Levine, S.
\newblock Deep online learning via meta-learning: Continual adaptation for model-based rl.
\newblock \emph{International Conference on Learning Representations}, 2018.

\bibitem[Negenborn et~al.(2005)Negenborn, De~Schutter, Wiering, and Hellendoorn]{negenborn2005learning}
Negenborn, R.~R., De~Schutter, B., Wiering, M.~A., and Hellendoorn, H.
\newblock Learning-based model predictive control for markov decision processes.
\newblock \emph{IFAC Proceedings Volumes}, 2005.

\bibitem[Peng \& Vidal(2023)Peng and Vidal]{peng2023block}
Peng, L. and Vidal, R.
\newblock Block coordinate descent on smooth manifolds: Convergence theory and twenty-one examples.
\newblock \emph{Conference on Parsimony and Learning}, 2023.

\bibitem[Peng et~al.(2023)Peng, Giampouras, and Vidal]{peng2023ideal}
Peng, L., Giampouras, P.~V., and Vidal, R.
\newblock The ideal continual learner: An agent that never forgets.
\newblock \emph{International Conference on Machine Learning}, 2023.

\bibitem[Pineda et~al.(2021)Pineda, Amos, Zhang, Lambert, and Calandra]{Pineda2021MBRL}
Pineda, L., Amos, B., Zhang, A., Lambert, N.~O., and Calandra, R.
\newblock Mbrl-lib: A modular library for model-based reinforcement learning.
\newblock \emph{Arxiv}, 2021.

\bibitem[Pinneri et~al.(2021)Pinneri, Sawant, Blaes, Achterhold, Stueckler, Rolinek, and Martius]{pinneri2021sample}
Pinneri, C., Sawant, S., Blaes, S., Achterhold, J., Stueckler, J., Rolinek, M., and Martius, G.
\newblock Sample-efficient cross-entropy method for real-time planning.
\newblock In \emph{Conference on Robot Learning}, 2021.

\bibitem[Powers et~al.(2022)Powers, Xing, Kolve, Mottaghi, and Gupta]{powers22bcora}
Powers, S., Xing, E., Kolve, E., Mottaghi, R., and Gupta, A.
\newblock Cora: Benchmarks, baselines, and metrics as a platform for continual reinforcement learning agents.
\newblock In \emph{Proceedings of The 1st Conference on Lifelong Learning Agents}, 2022.

\bibitem[Riemer et~al.(2018)Riemer, Cases, Ajemian, Liu, Rish, Tu, and Tesauro]{riemer2018learning}
Riemer, M., Cases, I., Ajemian, R., Liu, M., Rish, I., Tu, Y., and Tesauro, G.
\newblock Learning to learn without forgetting by maximizing transfer and minimizing interference.
\newblock \emph{International Conference on Learning Representations}, 2018.

\bibitem[Rolnick et~al.(2019)Rolnick, Ahuja, Schwarz, Lillicrap, and Wayne]{rolnick2019experience}
Rolnick, D., Ahuja, A., Schwarz, J., Lillicrap, T., and Wayne, G.
\newblock Experience replay for continual learning.
\newblock \emph{Neural Information Processing Systems}, 2019.

\bibitem[Rubinstein(1999)]{rubinstein1999cem}
Rubinstein, R.
\newblock The cross-entropy method for combinatorial and continuous optimization.
\newblock \emph{Methodology and computing in applied probability}, 1999.

\bibitem[Sancaktar et~al.(2022)Sancaktar, Blaes, and Martius]{sancaktar2022structuredwm}
Sancaktar, C., Blaes, S., and Martius, G.
\newblock Curious exploration via structured world models yields zero-shot object manipulation.
\newblock In \emph{Advances in Neural Information Processing Systems}, 2022.

\bibitem[Schwarz et~al.(2018)Schwarz, Luketina, Czarnecki, Grabska-Barwinska, Teh, Pascanu, and Hadsell]{schwarz2018progress}
Schwarz, J., Luketina, J., Czarnecki, W.~M., Grabska-Barwinska, A., Teh, Y.~W., Pascanu, R., and Hadsell, R.
\newblock Progress \& compress: A scalable framework for continual learning.
\newblock \emph{International Conference on Machine Learning}, 2018.

\bibitem[Shalev-Shwartz et~al.(2012)]{shalev2012online}
Shalev-Shwartz, S. et~al.
\newblock Online learning and online convex optimization.
\newblock \emph{Foundations and Trends in Machine Learning}, 2012.

\bibitem[Sharma et~al.(2022)Sharma, Xu, Sardana, Gupta, Hausman, Levine, and Finn]{sharma2021autonomous}
Sharma, A., Xu, K., Sardana, N., Gupta, A., Hausman, K., Levine, S., and Finn, C.
\newblock Autonomous reinforcement learning: Formalism and benchmarking.
\newblock \emph{International Conference on Learning Representations}, 2022.

\bibitem[Shridhar et~al.(2020)Shridhar, Thomason, Gordon, Bisk, Han, Mottaghi, Zettlemoyer, and Fox]{shridhar2020alfred}
Shridhar, M., Thomason, J., Gordon, D., Bisk, Y., Han, W., Mottaghi, R., Zettlemoyer, L., and Fox, D.
\newblock Alfred: A benchmark for interpreting grounded instructions for everyday tasks.
\newblock In \emph{Proceedings of the IEEE/CVF conference on computer vision and pattern recognition}, 2020.

\bibitem[Sutton(1990)]{sutton1990dyna}
Sutton, R.~S.
\newblock Integrated architectures for learning, planning, and reacting based on approximating dynamic programming.
\newblock In \emph{Machine Learning Proceedings}. 1990.

\bibitem[Todorov et~al.(2012)Todorov, Erez, and Tassa]{todorov2012mujoco}
Todorov, E., Erez, T., and Tassa, Y.
\newblock Mujoco: A physics engine for model-based control.
\newblock In \emph{IEEE International Conference on Intelligent Robots and Systems}, 2012.

\bibitem[Vitter(1985)]{vitter1985random}
Vitter, J.~S.
\newblock Random sampling with a reservoir.
\newblock \emph{ACM Transactions on Mathematical Software}, 1985.

\bibitem[Wang \& Ba(2020)Wang and Ba]{wang2019exploring}
Wang, T. and Ba, J.
\newblock Exploring model-based planning with policy networks.
\newblock \emph{International Conference on Learning Representations}, 2020.

\bibitem[Williams et~al.(2015)Williams, Aldrich, and Theodorou]{williams2015model}
Williams, G., Aldrich, A., and Theodorou, E.
\newblock Model predictive path integral control using covariance variable importance sampling.
\newblock \emph{arXiv preprint arXiv: 1509.01149}, 2015.

\bibitem[Wo{\l}czyk et~al.(2021)Wo{\l}czyk, Zaj{\k{a}}c, Pascanu, Kuci{\'n}ski, and Mi{\l}o{\'s}]{wolczyk2021continual}
Wo{\l}czyk, M., Zaj{\k{a}}c, M., Pascanu, R., Kuci{\'n}ski, {\L}., and Mi{\l}o{\'s}, P.
\newblock Continual world: A robotic benchmark for continual reinforcement learning.
\newblock \emph{Advances in Neural Information Processing Systems}, 2021.

\bibitem[Yang et~al.(2023)Yang, Zhou, Jiang, Long, and Shi]{yang2023continual}
Yang, Y., Zhou, T., Jiang, J., Long, G., and Shi, Y.
\newblock Continual task allocation in meta-policy network via sparse prompting.
\newblock In \emph{International Conference on Machine Learning}, pp.\  39623--39638. PMLR, 2023.

\bibitem[Yu et~al.(2019)Yu, Quillen, He, Julian, Hausman, Finn, and Levine]{yu2019meta}
Yu, T., Quillen, D., He, Z., Julian, R., Hausman, K., Finn, C., and Levine, S.
\newblock Meta-world: A benchmark and evaluation for multi-task and meta reinforcement learning.
\newblock In \emph{Conference on Robot Learning}, 2019.

\bibitem[Zenke et~al.(2017)Zenke, Poole, and Ganguli]{zenke2017continual}
Zenke, F., Poole, B., and Ganguli, S.
\newblock Continual learning through synaptic intelligence.
\newblock In \emph{International Conference on Machine Learning}, 2017.

\bibitem[Zhang(2006)]{zhang2006schur}
Zhang, F.
\newblock \emph{The Schur complement and its applications}, volume~4.
\newblock Springer Science \& Business Media, 2006.

\bibitem[Zhuang et~al.(2022)Zhuang, Weng, Wei, Xie, Toh, and Lin]{Zhuang_ACIL_NeurIPS2022}
Zhuang, H., Weng, Z., Wei, H., Xie, R., Toh, K.-A., and Lin, Z.
\newblock {ACIL}: Analytic class-incremental learning with absolute memorization and privacy protection.
\newblock In \emph{Advances in Neural Information Processing Systems}, 2022.

\bibitem[Zhuang et~al.(2023)Zhuang, Weng, He, Lin, and Zeng]{Zhuang_GKEAL_CVPR2023}
Zhuang, H., Weng, Z., He, R., Lin, Z., and Zeng, Z.
\newblock {GKEAL}: Gaussian kernel embedded analytic learning for few-shot class incremental task.
\newblock In \emph{Proceedings of the IEEE/CVF Conference on Computer Vision and Pattern Recognition}, 2023.

\bibitem[Zhuang et~al.(2024)Zhuang, He, Tong, Zeng, Chen, and Lin]{Zhuang_DSAL_AAAI2024}
Zhuang, H., He, R., Tong, K., Zeng, Z., Chen, C., and Lin, Z.
\newblock {DS-AL}: A dual-stream analytic learning for exemplar-free class-incremental learning.
\newblock \emph{Proceedings of the AAAI Conference on Artificial Intelligence}, 2024.

\end{thebibliography}
\bibliographystyle{icml2025}

\newpage
\appendix
\onecolumn

\section{Agent details}
In this section we provide more details about $\OA$, including the sparse non-linear feature encoding, the learning and acting process, the model hyperparameters, and the planing algorithm. We also include baseline and experimental details for reproducibility.
\subsection{Sparse feature encoding}
\label{sec:app:losse}
We revisit the feature construction process of Losse \citep{liu2024losse}, on which we will build our online world models for planning. At time step $t$, given an input vector $\vx_t \in \mathbb{R}^{S+A}$, we first randomly project it to obtain bounded random features $\sigma(\vx_t) = \operatorname{sigmoid}(\mP \vx_t)$, where $\mP \in \mathbb{R}^{d \times (S+A)}$ is sampled from a multivariate Gaussian $\mathcal{N}(\bm{0}, \frac{1}{S+A}\mI)$ to preserve feature similarity \citep{johnson1984extensions}, and $\operatorname{sigmoid}$ is applied element-wise to ensure each feature value is bounded between $(0, 1)$, such that the binning edges can be clearly defined. Next, each element of $\sigma(\vx_t)$ is binned softly by locating its neighboring edges and computing the distances from them. We illustrate using $1$-dimensional feature grid with $\Lambda$ bins. For the $i$-th element of $\sigma(\vx_t)$, let $I_{i} = (\Lambda - 1) \cdot \sigma(\vx_t)_i$ denote its projection location on the grid. Then the ``activated'' indices are $$s_i = [\floor{I_{i}} , \floor{I_{i}} +1],$$ and their associated values are $$v_i = [1 - \left(I_i - \floor{I_{i}} \right), I_i - \floor{I_{i}} ].$$ Therefore, the $i$-th slice of the resulting feature vector is a $\Lambda$-long zero vector with non-zero values filled at indices $s_i$ with values $v_i$, and the final feature $\phi(\vx_t)$ is the concatenation of $d$ such slices.

\subsection{The learning and acting process}
\label{sec:app:agent_env_loop}
\begin{figure}[h]
\begin{AIBox}{Online Agent ($\OA$) Learning and Acting Loop}
\begin{algorithmic}[1]
\Require zero-initialized agent memories $\mA^{(0)}$, $\mB^{(0)}$ and world model weights $\mW^{(0)}$; sparse encoder $\phi: \sR^{S+A} \mapsto \sR^D$; initial index $s=[D]$; planner CEM; 
sequence of tasks $(R^\tau)_{\tau=1}^\infty$; initial state $\vs_1 \sim \rho_0$; time step $t=1$.
\Statex
\Loop \Comment{$\OA$ runs forever, updates per step}
    \If{task changes}
        \State $\bm{\mu}_t \gets \text{init values} \in \sR^{A \times H}$
    \Else{}
        \State $\bm{\mu}_t \gets \text{shifted } \bm{\mu}_{t-1}$ (fill the last column with init values)
    \EndIf
    \Statex
    \State $\mW_s^{(t)} \gets (\mA_{ss}^{(t-1)} + \frac{1}{\lambda}\mI)^{-1} (\mB_s^{(t-1)} -\mA_{s \overline{s}}^{(t-1)}\mW_{\overline{s}}^{(t-1)})$
    \State $\va_t, \bm{\mu}_{t+1} \gets$ \Call{CEM}{$\vs_t, \mW^{(t)}, \bm{\mu}_t, R^\tau$} \Comment{\cref{sec:app:cem} for details}
    \State $\vs_{t+1} \gets \texttt{environment}(\vs_{t}, \va_{t}$)
    \Statex
    \State $\vx_{t} \gets [\vs_t, \va_t]; \, \vy_{t} \gets \vs_{t+1}-\vs_t$
    \State {$s \gets \texttt{nonzero\_index}(\phi(\rvx_t))$}
    \State $\mA_{ss}^{(t)} \gets \mA_{ss}^{(t-1)} + \phi_{s}(\vx_t) \phi_{s}(\vx_t)^\top$
    \State $\mB_{s}^{(t)} \gets \mB_{s}^{(t-1)} + \phi_{s}(\vx_t) \vy_t^\top$
    \State $t \gets t+1$
\EndLoop
\end{algorithmic}
\end{AIBox}
\caption{$\OA$ learning and acting loop.}
\label{fig:oca}
\end{figure}

\subsection{$\OA$ hyperparameters}
\label{sec:app:oca_details}
To build the sparse world model, we use $300$ $2$-dimensional Losse features with $\Lambda=9$. We perform a coarse sweeping over different sparsity levels ($\Lambda = (5, 7, 9, 11)$) and find $\Lambda = 5$ slightly under-fits the environment while all others give robust good performance. See \cref{fig:analysis}(d) for a comparison. We find $\frac{1}{\lambda}=0.005$ as a good regularization strength without any tuning.

For the planner, we use a candidate sampling size of $N=150$ with planning horizon $H=15$ and $K=3$ iterations. The ratio of elite candidates is $0.1$. 

\subsection{CEM planning}
\label{sec:app:cem}
We give a detailed algorithm of CEM that we use to extract policy from a learned model below.
\begin{algorithm}
\begin{algorithmic}[1]
\Require candidate sampling size $N$; number of iteration $K$; planning horizon $H$; init noise $\sigma_{\text{init}}$.
\Statex
\Function{CEM}{$\vs, \mW, \bm{\mu}, R$}
\State $\bm{\sigma} \gets \text{init values} \in \sR^{A \times H}$
\For{$k \gets 1,\, K$}
\Statex \qquad \quad /* \quad \textit{add \color{mydarkblue}{colored} noise} \quad */
\State \texttt{candidates} $\gC = \{ \va_{0:H} \}_{n=1}^N$ $\gets N$ samples from $\texttt{clip}(\bm{\mu} + \color{mydarkblue}{C^\beta}(A, H) \odot \bm{\sigma})$
\Statex \qquad \quad /* \quad \textit{memory} \quad */
\State add a fraction of \texttt{elite-set} into $\gC$ if $k>1$
\State $\vr \gets$ unroll $N$ trajectories to get returns $$\sum_{t=0}^H R(\vs_t, \va_t) \mid \vs_0 = \vs, \vs_{t+1}=\vs_t + \phi([\vs_t, \va_t])^\top \mW$$ 
\State \texttt{elite-set} $\gets $ best $K$ candidates
\State $\bm{\mu}, \bm{\sigma} \gets $ fit Gaussian distribution to \texttt{elite-set}
\EndFor
\EndFunction
\end{algorithmic}
\label{algo:mbrl_online_wm}
\end{algorithm}

\subsection{Baseline details}
\label{sec:app:baselines}
\textbf{Fine-tuning}. This serves as a minimal baseline for agents that use deep world models. The agent is allowed to keep all the data within the current task, but cannot carry the data over to the next task. The world model is repeatedly fine-tuned using a sequence of datasets.

\textbf{Coreset}. This refers to replay-based CL methods where a small buffer is kept across all tasks. We use reservoir sampling \citep{vitter1985random}, which updates the buffer such that its distribution approximates the empirical distribution of all observed samples. We use a large buffer ($10K$) in our experiments.

\textbf{Synaptic Intelligence} \citep{zenke2017continual}. As a regularization-based method, synaptic intelligence (SI) estimates the importance of each parameter based on the gradients and weights deviation, and regularizes the parameter updates based on the recorded importance. Similar to \textbf{Fine-tuning} only the data of the current task can be kept, and task boundary information is needed for SI to recompute the importance measure. The two hyper-parameters used in SI, $c=0.5$ and $\xi=0.05$, are selected using a coarse grid search.

\textbf{Perfect Memory}. It is impractical to assume unlimited buffer capacity, but this baseline serves as the upper bound for agents using deep world models. It can also be regarded as a deep version of the FTL world model \citep{liu2024losse} -- the current model is optimized using all previous data -- but in an inefficient way.

In all above baselines, we use a $4$-layer MLP with hidden size $200$ and \texttt{ReLU} activation for world modeling. We use the Adam optimizer \citep{kingma2014adam} with learning rate of $4\times10^{-4}$ and minibatch size of $256$. Since updating deep models per time step is computationally impractical, especially for CRL settings where the dataset size grows along interactions, we update the models every $250$ environment step. On every update, the deep models are  trained until convergence over all available data (stopped when the validation loss on $5\%$ holdout data does not improve over $5$ consecutive epochs). All hyperparameters of the planner are the same as $\OA$'s for a fair comparison (see \cref{sec:app:oca_details}).
\vspace{-0.1cm}
\subsection{Experimental details}
\vspace{-0.1cm}
\label{sec:app:exp_details}
All experiments in the paper are run on the internal cluster, with each job consuming one A100 GPU and $16$ CPUs. The training time of a single experiment ranges from $10$ hours to $15$ hours, given a total of $600$ episodes budget. We develop our agent and the baselines using the MBRL Library\footnote{\url{https://github.com/facebookresearch/mbrl-lib}} \citep{Pineda2021MBRL} by Meta (MIT license, v0.2.0).

\vspace{-0.1cm}
\section{Continual Bench details}
\vspace{-0.1cm}
\label{sec:app:benchmark_details}
We build Continual Bench based on the Mujoco \citep{todorov2012mujoco} physics engine and task primitives from Meta-World \citep{yu2019meta}. We include in total $6$ tasks (\texttt{pick-place}, \texttt{button-press}, \texttt{door-open}, \texttt{peg-unplug}, \texttt{window-close}, \texttt{faucet-close}) with diverse difficulty levels. Incorporating all the information about the world on the table, we have $\texttt{dim}(\gS)=26$. The state includes: hand position ($3$), gripper openness ($1$), button position ($3$), door handle position ($3$), door opening angle ($1$), window handle position ($3$), faucet handle position ($3$), peg position ($3$), block position ($3$), gripper velocity ($1$), gripper pad offsets ($2$). The action space is the same as Meta-World, a $2$-tuple consisting the change of end-effector position in $3D$ space ($3$) and a normalized torque applied to gripper fingers ($1$).
When the environment starts, each episode can be at most $500$-step long, with early termination on task success. The reward function $R^\tau$ is changed according to the task on hand upon switching. All reward functions are defined based on their well-crafted subroutines (Section E of \citet{yu2019meta}).

\vspace{-0.1cm}
\section{More empirical results}
\vspace{-0.1cm}
\label{sec:app:more_empirical_results}
\begin{figure}[t]
    \centering
    \vspace{-0.25cm}
    \includegraphics[width=\textwidth]{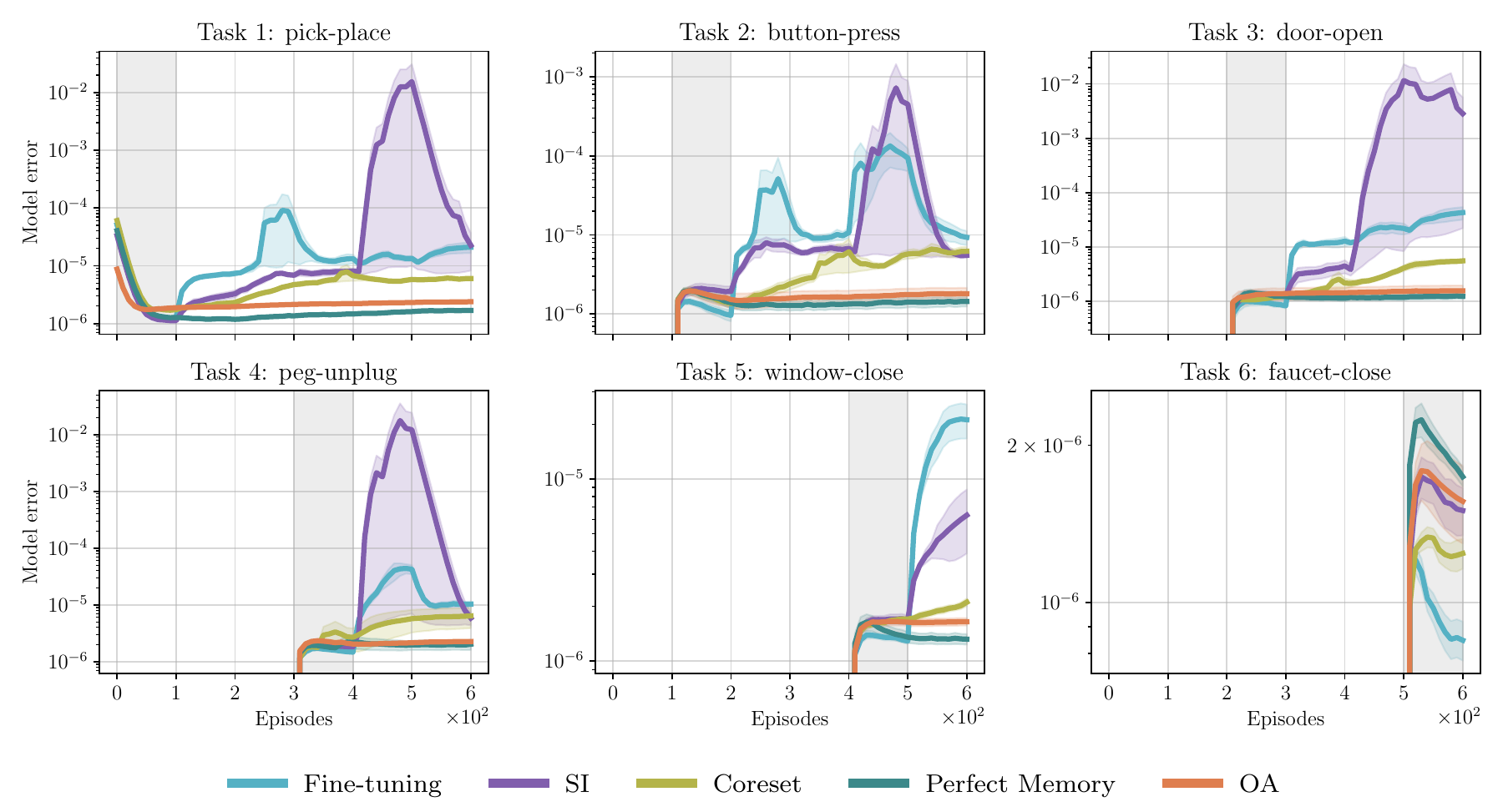}
    \vspace{-0.6cm}
    \caption{Curves of the world model errors. Same as the performance measure in the main text, results are aggregated from $7$ runs with different seeds. The solid lines show the means and shaded areas are the standard errors. The grey regions denote the learning period for the current task $\tau \in [1, 2, \dots, 6]$.}
    \vspace{-0.45cm}
    \label{fig:model_error}
\end{figure}

We present more empirical results in this section. In \cref{fig:model_error}, we show the mean squared error (MSE) of world models when the agents learn sequentially on the Continual World environment, to more directly investigate the forgetting issues. We keep a separate buffer for each task, and measure the MSE of the model on the data of every seen task. We can observe that the baseline agents (except the one with perfect memory) all exhibit increasing model errors when leaving the current task, but $\OA$ can maintain nearly the same accuracy along the whole learning process.

\section{Limitations and Future Work}
\label{sec:limitations_future_work}


One limitation of the current $\OA$ is that the online world model can only deal with moderate-dimensional state-based observation and does not capture world uncertainties. 
We envisage that developing highly capable probabilistic models that permits efficient online learning is a future research avenue. Besides, though the current framework is simple yet effective, the model planing does not incorporate explicit exploration. We leave this topic for future research.
The proposed benchmark environment, Continual Bench, is limited to episodic settings with explicit task switch across episodes. This is mainly due to the challenge brought by irreversible states \citep{sharma2021autonomous} since the underlying MDP is non-ergodic (\textit{e.g.}, a block falling outside the table can never be placed back). Developing a reset-free CRL environment consisting of many tasks is a potential future work.

\section{Theoretical analysis and proofs}
\label{sec:app:analysis_proofs}
\subsection{Closed-form solutions for sparse online world model learning}

\begin{lemma}[Sparse Online Model Learning Minimizer]\label{app:lem:sparse-solution}
    For every $t \in [1, T], s \subseteq [D]$, the following problem at time step $t$ has a unique global minimizer
    \begin{equation}\label{eq:sparse-obj}
        \widetilde{\mW}_s^{(t)} := \argmin_{\mW_s \in \R^{K \times S}}\|\Phi_{t-1}([\mW_{\overline{s}}^{(t-1)}; \mW_s]) - \mY_{t-1}\|_F^2 + \frac{1}{\lambda}\|[\mW_{\overline{s}}^{(t-1)}; \mW_s]\|_F^2.
    \end{equation}
    The global minimizer $\widetilde{\mW}_s^{(t)}$ has a closed-form solution as \cref{eq:sparse_update_rule}, i.e.:
    \begin{equation*}
        \widetilde{\mW}_s^{(t)} = \left(\mA_{ss}^{(t)} + \frac{1}{\lambda}\mI\right)^{-1}(\mB_s^{(t)} - \mA_{s\overline{s}}^{(t)}\mW_{\overline{s}}^{(t-1)}),
    \end{equation*}
    where $\widetilde{\mW}^{(t)} = [\mW_{\overline{s}}^{(t-1)}; \widetilde{\mW}_s^{(t)}], \mA^{(t)} = \Phi_{t-1}^\top\Phi_{t-1}, \mB^{(t)} = \Phi_{t-1}^\top\mY_{t-1}$ and they can be decomposed into
    \begin{equation*}
        \mA^{(t)} = \left(
        \begin{array}{ll}
            \mA_{\overline{ss}}^{(t)} & \mA_{\overline{s}s}^{(t)} \\
            \mA_{s\overline{s}}^{(t)} & \mA_{ss}^{(t)} \\
        \end{array}\right),
        \mB^{(t)} = \left(
        \begin{array}{l}
            \mB_{\overline{s}}^{(t)}  \\
            \mB_s^{(t)} \\
        \end{array}\right),
        \widetilde{\mW}^{(t)} = \left(
        \begin{array}{l}
            \mW_{\overline{s}}^{(t-1)} \\
            \widetilde{\mW}_s^{(t)} \\
        \end{array}\right).
    \end{equation*}
\end{lemma}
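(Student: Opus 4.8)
The plan is to recognize \cref{eq:sparse-obj} as an ordinary ridge-regression problem in the single free block $\mW_s$, and to solve it via the normal equations, letting the regularizer supply both uniqueness of the minimizer and invertibility of the matrix in the closed form. First I would reduce to standard form: partition the columns of $\Phi_{t-1}$ along $s$ and its complement, writing $\Phi_{t-1}=[\Phi_{\overline{s}}\,,\,\Phi_s]$ with $\Phi_s\in\R^{(t-1)\times K}$, so that $\Phi_{t-1}[\mW_{\overline{s}}^{(t-1)};\mW_s]=\Phi_{\overline{s}}\mW_{\overline{s}}^{(t-1)}+\Phi_s\mW_s$. Setting $\mR:=\mY_{t-1}-\Phi_{\overline{s}}\mW_{\overline{s}}^{(t-1)}$ (fixed) and noting that $\tfrac{1}{\lambda}\|\mW_{\overline{s}}^{(t-1)}\|_F^2$ is constant in $\mW_s$, the objective becomes $g(\mW_s)=\|\Phi_s\mW_s-\mR\|_F^2+\tfrac{1}{\lambda}\|\mW_s\|_F^2+\mathrm{const}$. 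Since the Frobenius norms decouple over the $S$ columns, $g$ splits into $S$ independent ridge regressions, each with Hessian $2(\Phi_s^\top\Phi_s+\tfrac{1}{\lambda}\mI)$; as $\Phi_s^\top\Phi_s\succeq 0$ and $\lambda\geq 1>0$, this Hessian is positive definite, so $g$ is strictly convex and coercive and therefore has a unique global minimizer, determined by its unique stationary point.

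Next I would set the gradient to zero: $\nabla_{\mW_s}g=2\Phi_s^\top(\Phi_s\mW_s-\mR)+\tfrac{2}{\lambda}\mW_s=0$, i.e. $(\Phi_s^\top\Phi_s+\tfrac{1}{\lambda}\mI)\mW_s=\Phi_s^\top\mY_{t-1}-\Phi_s^\top\Phi_{\overline{s}}\mW_{\overline{s}}^{(t-1)}$. Using the block decomposition of $\mA^{(t)}=\Phi_{t-1}^\top\Phi_{t-1}$ and $\mB^{(t)}=\Phi_{t-1}^\top\mY_{t-1}$ stated in the lemma, one identifies $\Phi_s^\top\Phi_s=\mA_{ss}^{(t)}$, $\Phi_s^\top\Phi_{\overline{s}}=\mA_{s\overline{s}}^{(t)}$ and $\Phi_s^\top\mY_{t-1}=\mB_s^{(t)}$; since $\mA_{ss}^{(t)}+\tfrac{1}{\lambda}\mI$ is positive definite it is invertible, and solving yields $\widetilde{\mW}_s^{(t)}=(\mA_{ss}^{(t)}+\tfrac{1}{\lambda}\mI)^{-1}(\mB_s^{(t)}-\mA_{s\overline{s}}^{(t)}\mW_{\overline{s}}^{(t-1)})$, exactly the claimed expression, and $\widetilde{\mW}^{(t)}=[\mW_{\overline{s}}^{(t-1)};\widetilde{\mW}_s^{(t)}]$ by construction.

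The argument is essentially textbook linear algebra, so I do not expect a genuine obstacle; the one point that actually matters — and that the surrounding text highlights — is the role of the $\tfrac{1}{\lambda}\mI$ term. Without it the Gram block $\mA_{ss}^{(t)}=\Phi_s^\top\Phi_s$ can be singular (e.g. whenever $t-1<K$, which is typical in the early phase of learning), in which case neither the uniqueness of the minimizer nor the well-definedness of the closed-form inverse would hold; the ridge term with $\lambda\geq 1$ repairs both at once. The only care point is keeping the block bookkeeping of $\mA^{(t)}$ and $\mB^{(t)}$ consistent with the ordering $[\overline{s};s]$ of coordinates; beyond that the proof is a direct computation, and one may replace the column-decoupling remark by the equivalent observation that the vectorized Hessian is $2(\Phi_s^\top\Phi_s+\tfrac{1}{\lambda}\mI)\otimes\mI_S\succ 0$.
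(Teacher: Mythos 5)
Your proposal is correct and follows essentially the same route as the paper's proof: isolate the quadratic dependence on the free block $\mW_s$, set the gradient to zero, and read off the normal equations in terms of the blocks $\mA_{ss}^{(t)}$, $\mA_{s\overline{s}}^{(t)}$, $\mB_s^{(t)}$ (the paper does this by expanding the objective into traces, you by first reducing to a standard ridge regression with residual $\mR=\mY_{t-1}-\Phi_{\overline{s}}\mW_{\overline{s}}^{(t-1)}$, which is only a cosmetic difference). If anything, your uniqueness argument is slightly sharper: you correctly attribute strict convexity to the positive definite Hessian $2(\Phi_s^\top\Phi_s+\tfrac{1}{\lambda}\mI)$ supplied by the ridge term, whereas the paper's one-line appeal to convexity of the Frobenius norm glosses over the fact that the data-fit term alone need not be strictly convex when $\mA_{ss}^{(t)}$ is rank-deficient.
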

\begin{proof}
    \textbf{Uniqueness.}
    By the convexity of Frobenius norm and both two terms in \cref{eq:sparse-obj} are quadratic in $\mW_s$, the objective function~\cref{eq:sparse-obj} is strictly convex and has a unique solution.
    
    \textbf{Closed-form solution.}
    Note that
    \begin{align*}
        & \| \Phi\mW - \mY \|_F^2 + \frac{1}{\lambda}\|\mW\|_F^2 \\
        =& \| \Phi\mW \|_F^2 + \| \mY \|_F^2 - 2\langle \Phi\mW, \mY \rangle_F + \frac{1}{\lambda}\|\mW\|_F^2 \\
        =& \Tr\left(\mW^\top\Phi^\top\Phi\mW\right) + \Tr\left(\mY^\top\mY\right) - 2\Tr\left(\mW^\top\Phi^\top\mY\right) + \frac{1}{\lambda}\|\mW\|_F^2 \\
        =& \Tr\left(\mW^\top\mA\mW\right) + \Tr\left(\mY^\top\mY\right) - 2\Tr\left(\mW^\top\mB\right) + \frac{1}{\lambda}\|\mW\|_F^2\\
        =& \Tr\left(
        \left(
        \begin{array}{l}
            \mW_{\overline{s}}  \\
            \mW_s \\
        \end{array}
        \right)^\top 
        \left(
        \begin{array}{ll}
            \mA_{\overline{ss}} & \mA_{\overline{s}s} \\
            \mA_{s\overline{s}} & \mA_{ss} \\
        \end{array}
        \right)
        \left(
        \begin{array}{l}
            \mW_{\overline{s}}  \\
            \mW_s \\
        \end{array}\right)
        \right) 
        + \Tr\left(\mY^\top\mY\right)
        - 2\Tr\left(
        \left(
        \begin{array}{l}
            \mW_{\overline{s}}  \\
            \mW_s \\
        \end{array}
        \right)^\top
        \left(
        \begin{array}{l}
            \mB_{\overline{s}}  \\
            \mB_s \\
        \end{array}
        \right)\right) \\
        &\qquad+ \frac{1}{\lambda}\Tr\left(
        \left(
        \begin{array}{l}
            \mW_{\overline{s}}  \\
            \mW_s \\
        \end{array}
        \right)^\top
        \left(
        \begin{array}{l}
            \mW_{\overline{s}}  \\
            \mW_s \\
        \end{array}
        \right)\right) \\
        = {} & \Tr\left(\mW_{\overline{s}}^\top\mA_{\overline{ss}}\mW_{\overline{s}}\right) + \Tr\left(\mW_s^\top\mA_{ss}\mW_s\right) 
        + 2\Tr\left(\mW_s^\top\mA_{s\overline{s}}\mW_{\overline{s}}\right)
        + \Tr\left(\mY^\top\mY\right) \\
        &\qquad - 2\Tr\left(\mW_{\overline{s}}^\top\mB_{\overline{s}}\right) 
        - 2\Tr\left(\mW_s^\top\mB_s\right) 
        + \frac{1}{\lambda}\Tr\left(\mW_{\overline{s}}^\top\mW_{\overline{s}}\right) 
        + \frac{1}{\lambda}\Tr\left(\mW_s^\top\mW_s\right) 
    \end{align*}
    As a result,
    \begin{align}
        & \min_{\mW_s \in \R^{K \times S}}\|\Phi_t(\mW_{\overline{s}}; \mW_s) - \mY_t\|_F^2 + \frac{1}{\lambda}\|\mW\|_F \notag \\
        \iff {} & \min_{\mW_s \in \R^{K \times S}}\Tr\left(\mW_s^\top\mA_{ss}\mW_s\right) 
        + 2\Tr\left(\mW_s^\top\mA_{s\overline{s}}\mW_{\overline{s}}\right)
        - 2\Tr\left(\mW_s^\top\mB_s\right) 
        + \frac{1}{\lambda}\Tr\left(\mW_s^\top\mW_s\right) \label{eq:equiv-obj}
    \end{align}
    Differentiate the objective function~\cref{eq:equiv-obj} w.r.t $\mW_s$ and set it to zero, we obtain
    \begin{equation*}
        \widetilde{\mW}_s = \left(\mA_{ss} + \frac{1}{\lambda}\mI\right)^{-1}(\mB_s - \mA_{s\overline{s}}\mW_{\overline{s}}).
    \end{equation*}
\end{proof}

\subsection{Theoretical analysis for online model Learning}\label{sec:analysis:batch}

Before we derive the regret bounds for sparse online model learning~\cref{eq:sparse-obj}, we first derive the regret bound for online model learning~\cref{eq:batch_least_squares}.
The developed regret bound of online model learning, as given in \cref{app:thrm:batch-regret}, will be further used to derive the regret bounds for sparse online model learning~\cref{eq:sparse-obj} in \cref{sec:analysis:sparse_regret}.

\subsubsection{Online model learning}

For $t \geq 1$, $i = 1, \dots, t$, we define
\begin{align*}
    f_i(\mW) = {} & \Tr(\mW^\top\phi(\vx_i)\phi(\vx_i)^\top\mW) + \Tr(\vy_i^\top\vy_i) - 2\Tr(\mW^\top\phi(\vx_i)\vy_i^\top), \\
    \ell_{\text{reg}}(\mW) = {} & \frac{1}{\lambda}\|\mW\|_F^2.
\end{align*}

Then, the regularized least squares problem~\cref{eq:batch_least_squares} can be rewritten as
%
%
\begin{equation}\label{eq:upd}
    \forall t, \quad \mW^{(t)} = \argmin_{\mW \in \R^{D \times S}}\sum_{i=1}^{t-1}f_i(\mW) + R(\mW).
\end{equation}


\begin{lemma}\label{app:lem:lem2}
    Let $\mW^{(1)}, \mW^{(2)}, \dots$ be the sequence produced by \cref{eq:upd} (or equivalently \cref{eq:batch_least_squares}). 
    Suppose that there exist $\lambda \geq 1$ such that \cref{assump:lambda} holds, then for all $t \geq 1$, we have
    \begin{equation*}
        \left(1 - \frac{1}{t}\right)\mW^{(t)} + \Delta_t \preceq \mW^{(t+1)} \preceq \mW^{(t)} + \Delta_t,
    \end{equation*}
    where 
    \begin{equation}\label{eq:delta_t}
        \Delta_t := \left(\sum_{i=1}^{t}\phi(\vx_i)\phi(\vx_i)^\top + \frac{1}{\lambda}\mI\right)^{-1}\phi(\vx_t)\vy_t^\top.
    \end{equation}
\end{lemma}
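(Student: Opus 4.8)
The plan is to reduce the two-sided bound to a single rank-one operator inequality via the closed-form ridge solutions. Write $\mA_t := \sum_{i=1}^t \phi(\vx_i)\phi(\vx_i)^\top$, $\mB_t := \sum_{i=1}^t \phi(\vx_i)\vy_i^\top$, and $\mG_t := \mA_t + \tfrac1\lambda\mI$; then \cref{eq:upd} gives $\mW^{(t)} = \mG_{t-1}^{-1}\mB_{t-1}$ and $\mW^{(t+1)} = \mG_t^{-1}\mB_t$, while $\mG_{t-1} = \mG_t - \phi(\vx_t)\phi(\vx_t)^\top$. Substituting $\mB_{t-1} = \mG_{t-1}\mW^{(t)} = (\mG_t - \phi(\vx_t)\phi(\vx_t)^\top)\mW^{(t)}$ into $\mB_t = \mB_{t-1} + \phi(\vx_t)\vy_t^\top$ and left-multiplying by $\mG_t^{-1}$ produces the key identity
\begin{equation*}
    \mW^{(t+1)} = (\mI - \mM_t)\mW^{(t)} + \Delta_t, \qquad \mM_t := \mG_t^{-1}\phi(\vx_t)\phi(\vx_t)^\top,
\end{equation*}
where $\Delta_t = \mG_t^{-1}\phi(\vx_t)\vy_t^\top$ coincides with \cref{eq:delta_t}. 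Hence $\mW^{(t)} + \Delta_t - \mW^{(t+1)} = \mM_t\mW^{(t)}$ and $\mW^{(t+1)} - (1-\tfrac1t)\mW^{(t)} - \Delta_t = (\tfrac1t\mI - \mM_t)\mW^{(t)}$, so it suffices to show the spectrum of $\mM_t$ lies in $[0,\tfrac1t]$ and then read $\preceq$ as ``the difference is a positive-semidefinite operator applied to $\mW^{(t)}$''.

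To control $\mM_t$ I would use only \cref{assump:lambda}. Since $\mG_t \succ 0$, the matrix $\mM_t = \mG_t^{-1/2}\bigl(\mG_t^{-1/2}\phi(\vx_t)\phi(\vx_t)^\top\mG_t^{-1/2}\bigr)\mG_t^{1/2}$ is similar to a symmetric rank-one PSD matrix, hence diagonalizable with a single nonzero eigenvalue $\rho_t := \phi(\vx_t)^\top\mG_t^{-1}\phi(\vx_t) \geq 0$ (eigenvector $\mG_t^{-1}\phi(\vx_t)$). Instantiating the supremum in \cref{eq:assump1} at the observed point $\vx = \vx_t$ yields $\phi(\vx_t)\phi(\vx_t)^\top - \tfrac1t\mA_t \preceq \tfrac{1}{\lambda t}\mI$, i.e. $\phi(\vx_t)\phi(\vx_t)^\top \preceq \tfrac1t\mA_t + \tfrac1{\lambda t}\mI = \tfrac1t\mG_t$; conjugating by $\mG_t^{-1/2}$ gives $\rho_t \leq \tfrac1t$. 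Therefore the eigenvalues of $\mM_t$ lie in $[0,\tfrac1t]$, those of $\mI - \mM_t$ in $[1-\tfrac1t,1]$, and those of $\tfrac1t\mI - \mM_t$ in $[0,\tfrac1t]$; plugging these back into the two difference identities above closes both inequalities. It is worth recording the base case $t=1$ separately: there $\mW^{(1)} = \vzero$ and both sides collapse to the identity $\mW^{(2)} = \Delta_1$.

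The only genuinely delicate step is extracting the operator sandwich $0 \preceq \mM_t \preceq \tfrac1t\mI$ from \cref{assump:lambda}: one must (i) observe that it is legitimate to instantiate the $\sup$ over $\vx$ at the already-seen point $\vx_t$ and that this single evaluation is all that is needed, and (ii) handle the non-symmetry of $\mM_t$ --- the spectral bound applies directly only to the symmetric matrix $\mG_t^{-1/2}\phi(\vx_t)\phi(\vx_t)^\top\mG_t^{-1/2}$, so the similarity transform by $\mG_t^{1/2}$ (equivalently, the explicit rank-one eigendecomposition) is needed to transfer it to $\mM_t$ itself. Everything else is routine bookkeeping with the regularized normal equations.
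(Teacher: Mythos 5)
Your proof is correct and follows essentially the same route as the paper's: both derive the rank-one update identity $\mW^{(t+1)} = (\mI - \mG_t^{-1}\phi(\vx_t)\phi(\vx_t)^\top)\mW^{(t)} + \Delta_t$ from the closed-form ridge solutions and invoke \cref{assump:lambda} at $\vx = \vx_t$ to bound the correction term by $\tfrac{1}{t}\mI$. The only (cosmetic) difference is the upper bound, which the paper obtains via $\mG_t^{-1} \preceq \mG_{t-1}^{-1}$ applied to $\mB_{t-1}$ while you reuse the same identity with $\mM_t \succeq 0$ --- an equivalent fact, and your explicit handling of the non-symmetry of $\mM_t$ via the similarity transform is a detail the paper leaves implicit.
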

\begin{proof}
\begin{equation*}
    \forall t, \quad \frac{\mathrm{d}f_t(\mW)}{\mathrm{d}\mW} = 2\phi(\vx_t)\phi(\vx_t)^\top\mW - 2\phi(\vx_t)\vy_t^\top.
\end{equation*}

Then we have
\begin{align*}
    \frac{\mathrm{d}\sum_{i=1}^tf_i(\mW)}{\mathrm{d}\mW} 
    + \frac{\mathrm{d}\ell_{\text{reg}}(\mW)}{\mathrm{d}\mW} 
    = {} & 
    \sum_{i=1}^t\frac{\mathrm{d}f_i(\mW)}{\mathrm{d}\mW} + \frac{2}{\lambda}\mW 
    %
    = 
    \sum_{i=1}^t2\phi(\vx_i)\phi(\vx_i)^\top\mW - \sum_{i=1}^t2\phi(\vx_i)\vy_i^\top + \frac{2}{\lambda}\mW.
\end{align*}

Therefore, we get
\begin{align*}
    \mW^{(t+1)} = {} & \left(\sum_{i=1}^t\phi(\vx_i)\phi(\vx_i)^\top + \frac{1}{\lambda}\mI\right)^{-1}\left(\sum_{i=1}^t\phi(\vx_i)\vy_i^\top\right) \\
    = {} & \left(\sum_{i=1}^{t}\phi(\vx_i)\phi(\vx_i)^\top + \frac{1}{\lambda}\mI\right)^{-1}\left(\sum_{i=1}^{t-1}\phi(\vx_i)\vy_i^\top\right) 
    + \left(\sum_{i=1}^{t}\phi(\vx_i)\phi(\vx_i)^\top + \frac{1}{\lambda}\mI\right)^{-1}\phi(\vx_t)\vy_t^\top \\
    = {} & \left(\sum_{i=1}^{t}\phi(\vx_i)\phi(\vx_i)^\top + \frac{1}{\lambda}\mI\right)^{-1}\left(\sum_{i=1}^{t-1}\phi(\vx_i)\phi(\vx_i)^\top + \frac{1}{\lambda}\mI\right)\mW^{(t)} + \left(\sum_{i=1}^{t}\phi(\vx_i)\phi(\vx_i)^\top + \frac{1}{\lambda}\mI\right)^{-1}\phi(\vx_t)\vy_t^\top \\
    = {} & \left(\mI - \left(\sum_{i=1}^{t}\phi(\vx_i)\phi(\vx_i)^\top + \frac{1}{\lambda}\mI\right)^{-1}\phi(\vx_t)\phi(\vx_t)^\top\right)\mW^{(t)} + \Delta_t \\
    \succeq {} & \left(\mI - \frac{1}{t}\mI\right)\mW^{(t)} + \Delta_t \tag{\cref{assump:lambda}} \\
    = {} & \left(1 - \frac{1}{t}\right)\mW^{(t)} + \Delta_t.
\end{align*}

On the other hand,
\begin{align*}
    \mW^{(t+1)} = {} & \left(\sum_{i=1}^t\phi(\vx_i)\phi(\vx_i)^\top + \frac{1}{\lambda}\mI\right)^{-1}\left(\sum_{i=1}^t\phi(\vx_i)\vy_i^\top\right) \\
    = {} & \left(\sum_{i=1}^{t}\phi(\vx_i)\phi(\vx_i)^\top + \frac{1}{\lambda}\mI\right)^{-1}\left(\sum_{i=1}^{t-1}\phi(\vx_i)\vy_i^\top\right) 
    + \left(\sum_{i=1}^{t}\phi(\vx_i)\phi(\vx_i)^\top + \frac{1}{\lambda}\mI\right)^{-1}\phi(\vx_t)\vy_t^\top \\
    \preceq {} & \left(\sum_{i=1}^{t-1}\phi(\vx_i)\phi(\vx_i)^\top + \frac{1}{\lambda}\mI\right)^{-1}\left(\sum_{i=1}^{t-1}\phi(\vx_i)\vy_i^\top\right) 
    + \left(\sum_{i=1}^{t}\phi(\vx_i)\phi(\vx_i)^\top + \frac{1}{\lambda}\mI\right)^{-1}\phi(\vx_t)\vy_t^\top \\
    = {} & \mW^{(t)} + \Delta_t.
\end{align*}

\end{proof}

\begin{lemma}\label{app:lem:lem3}
    Let $\mW^{(1)}, \mW^{(2)}, \dots$ be the sequence produced by \cref{eq:upd} (or equivalently \cref{eq:batch_least_squares}). 
    Suppose that there exist $\lambda \geq 1, c_y > 0$ such that \cref{assump:lambda,assump:bounded_W} hold. Then for all $t \geq 1$, we have
    \begin{equation*}
        |f_t(\mW^{(t)}) - f_t(\mW^{(t+1)})| 
        \leq 
        5\lambda c_y^2D\frac{1}{t}.
    \end{equation*}
\end{lemma}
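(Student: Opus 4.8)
The plan is to use the exact one-step recursion for the ridge iterates that is already established inside the proof of \cref{app:lem:lem2}. Write $\phi_t:=\phi(\vx_t)$, $\mG_t:=\sum_{i=1}^{t}\phi_i\phi_i^\top+\tfrac1\lambda\mI$, and let $\vr_t:=\mW^{(t)\top}\phi_t-\vy_t\in\R^{S}$ be the residual. The first step is to record that the identity $\mW^{(t+1)}=\mM_t\mW^{(t)}+\Delta_t$ from \cref{app:lem:lem2} (with $\mM_t=\mI-\mG_t^{-1}\phi_t\phi_t^\top$ and $\Delta_t$ as in \cref{eq:delta_t}) rearranges to
\[
  \mW^{(t+1)}=\mW^{(t)}-\mG_t^{-1}\phi_t\vr_t^\top .
\]
Also recall that $f_t$ is the convex quadratic $f_t(\mW)=\|\mW^\top\phi_t-\vy_t\|_2^2$, with rank-one Hessian $2\phi_t\phi_t^\top$ and $\nabla f_t(\mW)=2\phi_t(\mW^\top\phi_t-\vy_t)^\top$.

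Next I would contract $f_t$ along this step. Transposing the recursion and right-multiplying by $\phi_t$, and setting the scalar $\alpha_t:=\phi_t^\top\mG_t^{-1}\phi_t\in[0,1)$ (it is $<1$ because $\mG_t\succ\phi_t\phi_t^\top$), one gets $\mW^{(t+1)\top}\phi_t=(1-\alpha_t)\mW^{(t)\top}\phi_t+\alpha_t\vy_t$, hence $\mW^{(t+1)\top}\phi_t-\vy_t=(1-\alpha_t)\vr_t$, and therefore
\[
  f_t(\mW^{(t+1)})=(1-\alpha_t)^2 f_t(\mW^{(t)}).
\]
Consequently $f_t(\mW^{(t)})-f_t(\mW^{(t+1)})=\alpha_t(2-\alpha_t)\,f_t(\mW^{(t)})\le 2\alpha_t\,f_t(\mW^{(t)})$, and since $\alpha_t\in[0,1)$ this quantity is $\ge 0$, which also discharges the absolute value. (Equivalently, without the contraction, $f_t(\mW^{(t)})-f_t(\mW^{(t+1)})\le\langle\nabla f_t(\mW^{(t)}),\mW^{(t)}-\mW^{(t+1)}\rangle=2\Tr(\vr_t\phi_t^\top\mG_t^{-1}\phi_t\vr_t^\top)=2\alpha_t f_t(\mW^{(t)})$, using the quadratic expansion and the rank-one step.)

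It then remains to bound the two factors using \cref{assump:lambda,assump:bounded_W}. For $\alpha_t$: \cref{assump:lambda} at index $t$ applied to $\vx=\vx_t$ gives $\tfrac1t\sum_{i=1}^{t}\phi_i\phi_i^\top\succeq\phi_t\phi_t^\top-\tfrac1{\lambda t}\mI$, hence $\mG_t\succeq t\,\phi_t\phi_t^\top$; conjugating by $\mG_t^{-1/2}$ shows $t\,\mG_t^{-1/2}\phi_t\phi_t^\top\mG_t^{-1/2}\preceq\mI$, so its only nonzero eigenvalue satisfies $t\alpha_t=t\|\mG_t^{-1/2}\phi_t\|^2\le 1$, i.e.\ $\alpha_t\le\tfrac1t$. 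For $f_t(\mW^{(t)})$: write $f_t(\mW^{(t)})\le(\|\mW^{(t)\top}\phi_t\|+\|\vy_t\|)^2$, bound $\|\vy_t\|\le c_y$ by \cref{assump:bounded_W}, and bound $\|\mW^{(t)\top}\phi_t\|$ by the same leverage trick at index $t-1$: since $\mW^{(t)\top}\phi_t=\sum_{i<t}\vy_i\,(\phi_i^\top\mG_{t-1}^{-1}\phi_t)$ with $\mG_{t-1}\succeq(t-1)\phi_t\phi_t^\top$, we get $\sum_{i<t}(\phi_i^\top\mG_{t-1}^{-1}\phi_t)^2\le\phi_t^\top\mG_{t-1}^{-1}\phi_t\le\tfrac1{t-1}$, whence $\|\mW^{(t)\top}\phi_t\|\le c_y\sqrt{t-1}\cdot\tfrac1{\sqrt{t-1}}=c_y$ (the crude alternative $\|\mW^{(t)\top}\phi_t\|\le\|\mW^{(t)}\|_F\|\phi_t\|$ with $\|\phi_t\|^2\le D$ from the bounded sparse feature map, and $\|\mW^{(t)}\|_F$ bounded, costs the extra factor $\lambda D$). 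Either way $f_t(\mW^{(t)})=O(\lambda c_y^2 D)$, so $f_t(\mW^{(t)})-f_t(\mW^{(t+1)})\le 2\alpha_t f_t(\mW^{(t)})\le\tfrac2t\cdot O(\lambda c_y^2 D)$, and tracking the constants (using $\lambda\ge1$, $D\ge1$ to absorb numerical factors) yields the stated bound $5\lambda c_y^2 D\cdot\tfrac1t$. The case $t=1$, where $\mW^{(1)}=\mathbf 0$, is immediate.

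I expect the main obstacle to be obtaining the $\tfrac1t$ rate rather than a weaker $\tfrac1{\sqrt t}$: this forces one to exploit the rank-one structure of $\mW^{(t+1)}-\mW^{(t)}=-\mG_t^{-1}\phi_t\vr_t^\top$ so that the first-order term collapses exactly to $2\alpha_t f_t(\mW^{(t)})$ — a plain Cauchy--Schwarz bound $\|\nabla f_t(\mW^{(t)})\|_F\,\|\mW^{(t)}-\mW^{(t+1)}\|_F$ only gives $O(1/\sqrt t)$ — and to read \cref{assump:lambda} precisely as the PSD estimate $\mG_t\succeq t\,\phi_t\phi_t^\top$, which is the single place the "$\tfrac1{\lambda t}$" stabilization rate is spent and which simultaneously pins $\alpha_t\le\tfrac1t$ and keeps $\|\mW^{(t)\top}\phi_t\|$ from growing with $t$.
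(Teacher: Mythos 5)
Your proof is correct, but it takes a genuinely different route from the paper's. The paper works from the two-sided bound $\Delta_t-\tfrac1t\mW^{(t)}\preceq\mW^{(t+1)}-\mW^{(t)}\preceq\Delta_t$ of \cref{app:lem:lem2}, expands $f_t(\mW^{(t+1)})-f_t(\mW^{(t)})$ into three trace terms, substitutes $\Delta_t$ for the increment inside each, and bounds each trace separately via \cref{assump:lambda,assump:bounded_W}. You instead exploit the exact rank-one form of the FTL/recursive-least-squares step, $\mW^{(t+1)}=\mW^{(t)}-\bigl(\sum_{i\le t}\phi(\vx_i)\phi(\vx_i)^\top+\tfrac1\lambda\mI\bigr)^{-1}\phi(\vx_t)\vr_t^\top$, which yields the exact contraction $f_t(\mW^{(t+1)})=(1-\alpha_t)^2f_t(\mW^{(t)})$ with leverage score $\alpha_t\in[0,1)$; this makes the difference manifestly nonnegative (so the absolute value is free), reduces everything to the two scalar bounds $\alpha_t\le\tfrac1t$ and $f_t(\mW^{(t)})\le 4c_y^2$, and reads \cref{assump:lambda} cleanly as the PSD statement $\sum_{i\le t}\phi(\vx_i)\phi(\vx_i)^\top+\tfrac1\lambda\mI\succeq t\,\phi(\vx_t)\phi(\vx_t)^\top$ rather than as an ordering of non-symmetric products. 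Your argument is tighter and avoids the delicate use of $\preceq$ on rectangular matrices that the paper's proof relies on; it even removes the factor $D$ from the bound. The only caveat is numerical: you end with $8c_y^2/t$, which is below the stated $5\lambda c_y^2D/t$ whenever $5\lambda D\ge 8$ (so for any $D\ge2$, and a fortiori for the wide feature maps used here), but in the degenerate corner $D=1$, $\lambda<8/5$ your constant does not literally sit under the paper's; since the lemma only feeds a harmonic sum into an $\mathcal{O}(\log T)$ regret bound, this is immaterial, but worth a remark if you want the statement verbatim.
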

\begin{proof}
Recall from \cref{app:lem:lem2} that $\Delta_t - \frac{1}{t}\mW^{(t)} \preceq \mW^{t+1} - \mW^{(t)} \preceq \Delta_t$. 
Then we have
\begin{align*}
    & 
    |f_t(\mW^{(t+1)}) - f_t(\mW^{(t)})| 
    \\
    = {} & 
    \bigl|
      \Tr(\phi(\vx_t)\phi(\vx_t)^\top\mW^{(t+1)}\mW^{(t+1)\top}) 
      - 
      2\Tr(\phi(\vx_t)\vy_t^\top\mW^{(t+1)\top}) \\
      &\qquad- 
      \Tr(\phi(\vx_t)\phi(\vx_t)^\top\mW^{(t)}\mW^{(t)\top}) 
      + 
      2\Tr(\phi(\vx_t)\vy_t^\top\mW^{(t)\top}) 
    \bigr|  
    \\
    = {} &
    \Bigl|
      \Tr\Bigl(
        \phi(\vx_t)\phi(\vx_t)^\top
        \bigl(
          \mW^{(t+1)} - \mW^{(t)}
        \bigr)
        \bigl(
          \mW^{(t+1)} - \mW^{(t)}
        \bigr)^\top
      \Bigr) 
      + 
      2\Tr\Bigl(
        \phi(\vx_t)\phi(\vx_t)^\top
        \bigl(
          \mW^{(t+1)} - \mW^{(t)}
        \bigr)
        \mW^{(t)\top}
      \Bigr)
      \\
      &\qquad- 
      2\Tr(\phi(\vx_t)\vy_t^\top\mW^{(t+1)\top}) 
      + 
      2\Tr(\phi(\vx_t)\vy_t^\top\mW^{(t)\top}) 
    \Bigr|
    \\
    \leq {} &
    \Bigl|
      \Tr\Bigl(
        \phi(\vx_t)\phi(\vx_t)^\top
        \bigl(
          \mW^{(t+1)} - \mW^{(t)}
        \bigr)
        \bigl(
          \mW^{(t+1)} - \mW^{(t)}
        \bigr)^\top
      \Bigr)
    \Bigr|
    + 
    2\Bigl|
      \Tr\Bigl(
        \phi(\vx_t)\phi(\vx_t)^\top
        \bigl(
          \mW^{(t+1)} - \mW^{(t)}
        \bigr)
        \mW^{(t)\top}
      \Bigr)
    \Bigr|
    \\
    &\qquad+ 
    2\Bigl|
      \Tr\bigl(
        \phi(\vx_t)\vy_t^\top
        \bigl(
          \mW^{(t+1)} - \mW^{(t)}
        \bigr)^\top 
      \bigr) 
    \Bigr|
    \\
    \leq {} & 
    \Bigl|
      \Tr\bigl(
        \phi(\vx_t)\phi(\vx_t)^\top
        \Delta_t\Delta_t^\top
      \bigr)
    \Bigr|
    +
    2\Bigl|
      \Tr\bigl(
        \phi(\vx_t)\phi(\vx_t)^\top
        \Delta_t
        \mW^{(t)\top}
      \bigr)
    \Bigr|
    + 
    2\Bigl|
      \Tr\bigl(
        \phi(\vx_t)\vy_t^\top
        \Delta_t^\top
      \bigr) 
    \Bigr|
    \tag{\cref{app:lem:lem2}} \\
    = {} & 
    \Bigl|
    \Tr\bigl(
      \phi(\vx_t)\phi(\vx_t)^\top
      \Delta_t\Delta_t^\top
    \bigr)
    \Bigr|
    +
    2\Bigl|
      \Tr\Bigl(
        \underbrace{
        \phi(\vx_t)\phi(\vx_t)^\top
        \Bigl(
          \sum_{i=1}^{t}
            \phi(\vx_i)\phi(\vx_i)^\top 
          + 
          \frac{1}{\lambda}\mI
        \Bigr)^{-1}
        }_{\preceq \frac{1}{t}\mI \text{ by~\cref{assump:lambda}}}
        \phi(\vx_t)\vy_t^\top\mW^{(t)\top}
      \Bigr) 
    \Bigr|
    \\
    &\qquad+ 
    2\Bigl|
      \Tr\Bigl(
        \phi(\vx_t)\vy_t^\top
        \vy_t\phi(\vx_t)^\top
        \Bigl(
          \Bigl(
            \sum_{i=1}^t
              \phi(\vx_i)\phi(\vx_i)^\top 
            + 
            \frac{1}{\lambda}\mI
          \Bigr)^{-1}
        \Bigr)^\top
      \Bigr) 
    \Bigr|
    \\
    \leq {} & 
    \Tr\bigl(
      \phi(\vx_t)\phi(\vx_t)^\top
      \Delta_t\Delta_t^\top
    \bigr) 
    +
    \frac{2}{t}
    \Bigl|
      \Tr\Bigl(
        \phi(\vx_t)\vy_t^\top
        \mW^{(t)\top}
      \Bigr)
    \Bigr|
    + 
    2C_y^2
    \Bigl|
      \Tr\Bigl(
        \underbrace{
        \phi(\vx_t)\phi(\vx_t)^\top
        \Bigl(
          \sum_{i=1}^t
            \phi(\vx_i)\phi(\vx_i)^\top 
          + 
          \frac{1}{\lambda}\mI
        \Bigr)^{-1}
        }_{\preceq \frac{1}{t}\mI \text{ by~\cref{assump:lambda}}}
      \Bigr) 
    \Bigr|
    \\
    \leq {} &
    \Tr\bigl(
      \phi(\vx_t)\phi(\vx_t)^\top
      \Delta_t\Delta_t^\top
    \bigr) 
    +
    \frac{2}{t}
    \Bigl|
      \Tr\Bigl(
        \phi(\vx_t)\vy_t^\top
        \mW^{(t)\top}
      \Bigr)
    \Bigr|
    + 
    C_y^2D\frac{2}{t}. 
\end{align*}

For the first term on the right hand side of the above inequality, we have

\begin{align*}
    & \Tr\left(\phi(\vx_t)\phi(\vx_t)^\top\Delta_t\Delta_t^\top\right) \\
    = {} & 
    \Tr\left(\phi(\vx_t)\phi(\vx_t)^\top\left(\sum_{i=1}^{t}\phi(\vx_i)\phi(\vx_i)^\top + \frac{1}{\lambda}\mI\right)^{-1}\phi(\vx_t)\vy_t^\top\vy_t\phi(\vx_t)^\top\left(\left(\sum_{i=1}^{t}\phi(\vx_i)\phi(\vx_i)^\top + \frac{1}{\lambda}\mI\right)^\top\right)^{-1}\right) \\
    = {} & \|\vy_t\|_2^2\Tr\left(\phi(\vx_t)\phi(\vx_t)^\top\left(\sum_{i=1}^{t}\phi(\vx_i)\phi(\vx_i)^\top + \frac{1}{\lambda}\mI\right)^{-1}\phi(\vx_t)\phi(\vx_t)^\top\left(\sum_{i=1}^{t}\phi(\vx_i)\phi(\vx_i)^\top + \frac{1}{\lambda}\mI\right)^{-1}\right) \\
    \leq {} & 
    c_y^2\Tr\left(
    \phi(\vx_t)\phi(\vx_t)^\top\left(\sum_{i=1}^{t}\phi(\vx_i)\phi(\vx_i)^\top + \frac{1}{\lambda}\mI\right)^{-1}
    \phi(\vx_t)\phi(\vx_t)^\top\left(\sum_{i=1}^{t}\phi(\vx_i)\phi(\vx_i)^\top + \frac{1}{\lambda}\mI\right)^{-1}\right) \\
    = {} & 
    c_y^2\Tr\left(\underbrace{\left(\sum_{i=1}^{t}\phi(\vx_i)\phi(\vx_i)^\top + \frac{1}{\lambda}\mI\right)^{-1}\phi(\vx_t)\phi(\vx_t)^\top}_{\preceq \frac{1}{t}\mI}
    \underbrace{\left(\sum_{i=1}^{t}\phi(\vx_i)\phi(\vx_i)^\top + \frac{1}{\lambda}\mI\right)^{-1}\phi(\vx_t)\phi(\vx_t)^\top}_{\preceq \frac{1}{t}\mI}\right) \\
    \leq {} & c_y^2D\frac{1}{t^2} \tag{\cref{assump:lambda}} \\
    \leq {} & c_y^2D\lambda\frac{1}{t}. \tag{$\lambda \geq 1 \geq \frac{1}{t}$ for all $t = 1, 2, \dots$}
\end{align*}

For the second term, we have
\begin{align*}
    \frac{1}{t}\Tr\left(\phi(\vx_t)\vy_t^\top\mW^{(t)\top}\right) = {} & \Tr\left(\phi(\vx_t)\vy_t^\top\left(\sum_{i=1}^{t-1}\vy_i\phi(\vx_i)^\top\right)\left(\sum_{i=1}^{t-1}\phi(\vx_i)\phi(\vx_i)^\top + \frac{1}{\lambda}\mI\right)^{-1}\right) \\
    = {} & \frac{1}{t}\Tr\left(\left(\sum_{i=1}^{t-1}\phi(\vx_t)\vy_t^\top\vy_i\phi(\vx_i)^\top\right)\left(\sum_{i=1}^{t-1}\phi(\vx_i)\phi(\vx_i)^\top + \frac{1}{\lambda}\mI\right)^{-1}\right) \\
    \leq {} & c_y^2\frac{1}{t}\Tr\left(\left(\sum_{i=1}^{t-1}\phi(\vx_t)\phi(\vx_i)^\top\right)\left(\sum_{i=1}^{t-1}\phi(\vx_i)\phi(\vx_i)^\top + \frac{1}{\lambda}\mI\right)^{-1}\right) \\
    = {} & c_y^2\frac{1}{t}\Tr\left(\left(\left(\sum_{i=1}^{t-1}\phi(\vx_i)\phi(\vx_i)^\top + \frac{1}{\lambda}\mI\right)^\top\right)^{-1}\left(\sum_{i=1}^{t-1}\phi(\vx_t)\phi(\vx_i)^\top\right)^\top\right) \\
    = {} & c_y^2\frac{1}{t}\Tr\left(\left(\sum_{i=1}^{t-1}\phi(\vx_i)\phi(\vx_i)^\top + \frac{1}{\lambda}\mI\right)^{-1}
    \left(\sum_{i=1}^{t-1}\phi(\vx_i)\phi(\vx_t)^\top\right)\right) \\
    \leq {} &  c_y^2\frac{1}{t}D. \tag{\cref{assump:lambda}}
\end{align*}

Combining the above results, we obtain
\begin{align*}
    |f_t(\mW^{(t+1)}) - f_t(\mW^{(t)})| 
    %
    \leq {} & 
    \Tr\left(\phi(\vx_t)\phi(\vx_t)^\top\Delta_t\Delta_t^\top\right) 
    + 
    \frac{4}{t}\Tr\left(\phi(\vx_t)\vy_t^\top\mW^{(t)\top}\right) \\
    \leq {} & 
    c_y^2D\lambda\frac{1}{t} + 4c_y^2D\frac{1}{t}  \\
    \leq {} & 
    c_y^2D\lambda\frac{1}{t} + 4\lambda c_y^2D\frac{1}{t} 
    \tag{$\lambda \geq 1$} \\ 
    = {} & 
    5\lambda c_y^2D\frac{1}{t}. 
\end{align*}
\end{proof}

\begin{lemma}\label{app:lem:lem4}
    Let $\mW^{(1)}, \mW^{(2)}, \dots$ be the sequence produced by \cref{eq:upd} (or equivalently \cref{eq:batch_least_squares}). 
    For all $t \geq 1$ and all $\xi \in \R^{D \times S}$ we have
    \begin{equation*}
        \sum_{t=1}^Tf_t(\mW^{(t+1)}) \leq \sum_{t=1}^Tf_t(\xi)
    \end{equation*}
\end{lemma}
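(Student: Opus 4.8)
The plan is to establish \cref{app:lem:lem4} through the classical Be-The-Leader induction, using the one structural fact that \cref{eq:upd} hands us for free: the iterate $\mW^{(t+1)}$ is the exact minimizer of the cumulative regularized objective $R(\cdot) + \sum_{i=1}^{t} f_i(\cdot)$, i.e. it is the ``leader'' that has already absorbed the round-$t$ loss before being scored on it. The informal principle is that such a one-step-ahead leader can never be beaten in hindsight by a single fixed comparator $\xi$. Since the regularizer sits inside the leader's objective but not in the target inequality, I would fold $R$ into the sequence as a phantom round-zero loss $f_0 := R$, so that the leader of the augmented sequence $f_0, f_1, \dots, f_T$ after round $t$ is exactly $\mW^{(t+1)}$ and the very first leader is $\mW^{(1)} = \argmin_{\mW} R(\mW) = \vzero$.

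First I would prove the augmented Be-The-Leader inequality $\sum_{t=0}^{T} f_t(\mW^{(t+1)}) \le \sum_{t=0}^{T} f_t(\xi)$ for all $\xi$ by induction on $T$, following the skeleton of \citet{shalev2012online}. The base case $T=0$ is just $R(\mW^{(1)}) \le R(\xi)$, which is immediate from $\mW^{(1)} = \argmin_{\mW} R(\mW)$. For the step, I would invoke the induction hypothesis at the particular comparator $\xi = \mW^{(T+1)}$, add $f_T(\mW^{(T+1)})$ to both sides, and then recognize the resulting right-hand side as the full augmented objective evaluated at its own minimizer $\mW^{(T+1)}$; its defining optimality in \cref{eq:upd} bounds it above by the same objective at any other point, in particular at $\xi$, which closes the induction.

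The main obstacle is reconciling this augmented inequality with the $R$-free statement of \cref{app:lem:lem4}. Peeling off the round-zero term gives $R(\mW^{(1)}) + \sum_{t=1}^{T} f_t(\mW^{(t+1)}) \le R(\xi) + \sum_{t=1}^{T} f_t(\xi)$; the left boundary term vanishes because $\mW^{(1)} = \vzero$ minimizes $R$, so the delicate point is the surviving right-hand contribution $R(\xi) = \tfrac{1}{\lambda}\|\xi\|_F^2$. Disposing of it so as to land exactly on $\sum_{t=1}^{T} f_t(\xi)$ is where the argument must work hardest, and I would look to charge $R(\xi)$ against the strong-convexity margin that the regularizer injects into each leader's objective — quantifying the gap between $\mW^{(T+1)}$ and $\xi$ via that margin — in order to recover the stated inequality.
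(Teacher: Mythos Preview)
Your induction with the phantom loss $f_0 := R$ is exactly the Be-The-Leader argument the paper runs, and it cleanly delivers the augmented inequality
\[
R(\mW^{(1)}) + \sum_{t=1}^{T} f_t(\mW^{(t+1)}) \;\le\; R(\xi) + \sum_{t=1}^{T} f_t(\xi).
\]
The gap is in your last paragraph: you cannot dispose of the surviving $R(\xi)$, and no strong-convexity margin will absorb it, because the $R$-free inequality in the lemma's display is in fact \emph{false} when $\mW^{(t+1)}$ is read literally as the regularized leader from \cref{eq:upd}. A one-dimensional check: take $D=S=1$, $\phi(\vx_1)=1$, $\vy_1=y\neq 0$, so $f_1(w)=(w-y)^2$; then $\mW^{(2)}=\lambda y/(\lambda+1)$ and $f_1(\mW^{(2)})=y^2/(\lambda+1)^2>0=f_1(y)$, contradicting $f_1(\mW^{(2)})\le f_1(\xi)$ at $\xi=y$.

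The paper's proof sidesteps this by silently treating $\mW^{(t+1)}$ as the \emph{unregularized} leader $\argmin_{\mW}\sum_{i=1}^{t}f_i(\mW)$ --- note its base case ``$\mW^{(2)}=\argmin_{\mW}f_1(\mW)$'' and its final identification ``$\sum_{t=1}^{T}f_t(\mW^{(T+1)})=\min_{\mW}\sum_{t=1}^{T}f_t(\mW)$'' --- which is inconsistent with the statement's reference to \cref{eq:upd}. What is actually used downstream, in \cref{app:lem:lem5}, is precisely your augmented inequality with $f_0=R$: there the $R(\xi)$ term is not dropped but carried forward and becomes the $\ell_{\text{reg}}(\xi)$ in that lemma's conclusion. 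So your first two paragraphs already constitute the correct and complete argument for what the pipeline requires; the third paragraph pursues a sharpening that is both unnecessary and, as the counterexample shows, impossible in general.
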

\begin{proof}
    We prove this lemma by induction.
    For $T = 1$, it follows directly from the definition of $\mW^{(t+1)}$, i.e., $\mW^{(2)} = \argmin_{\mW \in \R^{D \times S}}f_1(\mW)$, that
    \begin{equation*}
        f_1(\mW^{(2)}) \leq f_1(\xi).
    \end{equation*}

    Assume that the inequality holds for $T-1$, then for all $\xi \in \R^{D \times S}$ we have
    \begin{equation*}
        \sum_{t=1}^{T-1}f_t(\mW^{(t+1)}) \leq \sum_{t=1}^{T-1}f_t(\xi).
    \end{equation*}

    Adding $f_T(\mW^{(T+1)})$ to both sides we obtain
    \begin{equation*}
        \sum_{t=1}^{T}f_t(\mW^{(t+1)}) \leq f_T(\mW^{(T+1)}) + \sum_{t=1}^{T-1}f_t(\xi)
    \end{equation*}
    The above holds for all $\xi$ and in particular for $\xi = \mW^{(T+1)}$.
    Thus,
    \begin{equation*}
        \sum_{t=1}^{T}f_t(\mW^{(t+1)}) \leq \sum_{t=1}^Tf_t(\mW^{(T+1)}) = \min_{\mW \in \R^{D \times S}}\sum_{t=1}^{T}f_t(\mW)
    \end{equation*}
\end{proof}

\begin{lemma}\label{app:lem:lem5}
    Let $\mW^{(1)}, \mW^{(2)}, \dots$ be the sequence of vectors produced by \cref{eq:upd} (or equivalently \cref{eq:batch_least_squares}). 
    Then, for all $t \geq 1$ and all $\xi \in \R^{D \times S}$ we have
    \begin{equation*}
        \sum_{t=1}^T(f_t(\mW^{(t)}) - f_t(\xi)) \leq \ell_{\text{reg}}(\xi) - \ell_{\text{reg}}(\mW^{(1)}) + \sum_{t=1}^T\left(f_t(\mW^{(t)}) - f_t(\mW^{(t+1)})\right)
    \end{equation*}
\end{lemma}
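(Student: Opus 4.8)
The plan is to run the textbook Follow-The-Leader regret decomposition (cf.\ \citet{shalev2012online}): split each round's comparison against the fixed $\xi$ into a one-step ``stability'' gap and a ``be-the-leader'' gap. For every $t$ I would write
\begin{equation*}
    f_t(\mW^{(t)}) - f_t(\xi) = \bigl(f_t(\mW^{(t)}) - f_t(\mW^{(t+1)})\bigr) + \bigl(f_t(\mW^{(t+1)}) - f_t(\xi)\bigr),
\end{equation*}
and sum over $t=1,\dots,T$. The first group of terms sums to exactly $\sum_{t=1}^T\bigl(f_t(\mW^{(t)}) - f_t(\mW^{(t+1)})\bigr)$, which is precisely the term already on the right-hand side of the claim (and the quantity that \cref{app:lem:lem3} controls); it is carried along untouched. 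Hence the proof reduces to the single estimate $\sum_{t=1}^T\bigl(f_t(\mW^{(t+1)}) - f_t(\xi)\bigr) \le \ell_{\text{reg}}(\xi) - \ell_{\text{reg}}(\mW^{(1)})$.

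For that estimate I would fold the regularizer into the loss stream, setting $f_0 := \ell_{\text{reg}}$, so that \cref{eq:upd} becomes $\mW^{(t+1)} = \argmin_{\mW}\sum_{i=0}^{t}f_i(\mW)$ for all $t \ge 0$; thus $\mW^{(1)},\mW^{(2)},\dots$ is exactly the unregularized ``leader'' sequence of the augmented stream $f_0,f_1,f_2,\dots$. Re-running the induction of \cref{app:lem:lem4} on this augmented stream (now with no extra regularizer, so the be-the-leader inequality holds in its usual form) gives, for every $\xi$,
\begin{equation*}
    \ell_{\text{reg}}(\mW^{(1)}) + \sum_{t=1}^T f_t(\mW^{(t+1)}) \;\le\; \ell_{\text{reg}}(\xi) + \sum_{t=1}^T f_t(\xi),
\end{equation*}
which is the desired $\sum_{t=1}^T\bigl(f_t(\mW^{(t+1)}) - f_t(\xi)\bigr) \le \ell_{\text{reg}}(\xi) - \ell_{\text{reg}}(\mW^{(1)})$. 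Plugging this back into the summed decomposition yields the claim. As a sanity check, $\mW^{(1)} = \argmin_{\mW}\ell_{\text{reg}}(\mW) = \vzero$, so $\ell_{\text{reg}}(\mW^{(1)}) = 0$ and the bound is really $\ell_{\text{reg}}(\xi) + \sum_{t=1}^T\bigl(f_t(\mW^{(t)})-f_t(\mW^{(t+1)})\bigr)$; equivalently, one may just apply \cref{app:lem:lem4} verbatim to get $\sum_{t=1}^T\bigl(f_t(\mW^{(t+1)}) - f_t(\xi)\bigr) \le 0 \le \ell_{\text{reg}}(\xi) - \ell_{\text{reg}}(\mW^{(1)})$.

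I do not expect any real obstacle here: this is the standard FTRL regret-decomposition identity and every ingredient is already in place. The one place to be careful is the bookkeeping with the regularizer — $\ell_{\text{reg}}$ must be treated as the leader's round-$0$ loss so that the ``leader'' in \cref{app:lem:lem4} coincides with the \emph{regularized} leader of \cref{eq:upd}, and $\mW^{(1)}$ must be recognized as the minimizer $\vzero$ of $\ell_{\text{reg}}$. Downstream, combining this lemma with the per-step stability bound of \cref{app:lem:lem3} (summing $5\lambda c_y^2 D/t$ over $t \le T$, which contributes the $\log(T)+1$ factor) and bounding $\ell_{\text{reg}}(\xi) \le \tfrac{1}{\lambda}c_W^2$ for $\|\xi\|_F \le c_W$ recovers the online-model-learning part of the regret bound in \cref{thrm:main-thrm}.
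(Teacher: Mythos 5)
Your argument is correct and is essentially the paper's own proof: both fold the regularizer into the loss stream as $f_0 := \ell_{\text{reg}}$, apply the be-the-leader induction of \cref{app:lem:lem4} to the augmented sequence (for which $\mW^{(t+1)}$ is the exact unregularized leader), and rearrange, with the $t=0$ terms producing $\ell_{\text{reg}}(\xi) - \ell_{\text{reg}}(\mW^{(1)})$. One caveat on your final aside: \cref{app:lem:lem4} ``verbatim'' (without the $f_0$ term) does not actually hold for the regularized leader of \cref{eq:upd}, since $\mW^{(t+1)}$ minimizes $\sum_{i\le t} f_i + \ell_{\text{reg}}$ rather than $\sum_{i\le t} f_i$, so the augmented-stream route you use as your main argument is the one to keep.
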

\begin{proof}
    Let $f_0(\mW) = \ell_{\text{reg}}(\mW)$.
    Using \cref{app:lem:lem4} we obtain
    \begin{equation}
        \sum_{t=0}^Tf_t(\mW^{(t+1)}) \leq \sum_{t=0}^Tf_t(\xi).
    \end{equation}
     
    Adding $\sum_{t=0}^{T}f_t(\mW^{(t)})$ to both sides and rearrange, we get
    \begin{equation*}
        \sum_{t=0}^Tf_t(\mW^{(t)}) - \sum_{t=0}^Tf_t(\xi) \leq \sum_{t=0}^Tf_t(\mW^{(t)}) - \sum_{t=0}^Tf_t(\mW^{(t+1)}).
    \end{equation*}
\end{proof}

\subsubsection{Regret bound for online model learning}

Now, we are ready to prove the regret bound for the online model learning problem~\cref{eq:batch_least_squares}:

\begin{corollary}[Regret Bound for Online Model Learning]\label{app:thrm:batch-regret}
    Let $\mW^{(1)}, \mW^{(2)}, \dots$ be the sequence produced by \cref{eq:batch_least_squares} (or equivalently \cref{eq:upd}). 
    Suppose that there exist $\lambda \geq 1, c_W, c_y > 0$ such that \cref{assump:lambda,assump:bounded_W} hold.
    Then, for all $T \geq 1$ and all $\xi \in \R^{D \times S}$, we have
    \begin{equation*}
        \sum_{t=1}^Tf_t(\mW^{(t)}) 
        - 
        \sum_{t=1}^Tf_t(\xi) 
        \leq 
        \frac{1}{\lambda}c_W^2 
        + 
        5\lambda c_y^2D(\log(T)+1).
    \end{equation*}
    
\end{corollary}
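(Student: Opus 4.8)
The plan is to run the textbook Follow-the-Regularized-Leader regret argument, treating the ridge term $\ell_{\text{reg}}$ as a fictitious round $t=0$, and then to control the per-round stability gap $f_t(\mW^{(t)}) - f_t(\mW^{(t+1)})$ using the lemmas already established above. Concretely, the heavy lifting is all upstream: \cref{app:lem:lem2} pins $\mW^{(t+1)} - \mW^{(t)}$ between $\Delta_t - \tfrac1t\mW^{(t)}$ and $\Delta_t$, \cref{app:lem:lem3} converts this together with \cref{assump:lambda,assump:bounded_W} into the clean estimate $|f_t(\mW^{(t)}) - f_t(\mW^{(t+1)})| \le 5\lambda c_y^2 D/t$, and \cref{app:lem:lem4,app:lem:lem5} package the "be-the-leader" induction into the master inequality. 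Given these, the corollary is an assembly.

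First I would invoke \cref{app:lem:lem5}: for every $\xi \in \R^{D\times S}$,
\begin{equation*}
\sum_{t=1}^T\bigl(f_t(\mW^{(t)}) - f_t(\xi)\bigr) \le \ell_{\text{reg}}(\xi) - \ell_{\text{reg}}(\mW^{(1)}) + \sum_{t=1}^T\bigl(f_t(\mW^{(t)}) - f_t(\mW^{(t+1)})\bigr).
\end{equation*}
Since $\ell_{\text{reg}}(\mW) = \tfrac1\lambda\|\mW\|_F^2 \ge 0$, I drop the term $-\ell_{\text{reg}}(\mW^{(1)})$; and by \cref{assump:bounded_W} the comparator obeys $\ell_{\text{reg}}(\xi) = \tfrac1\lambda\|\xi\|_F^2 \le \tfrac1\lambda c_W^2$ (this is the one place where one should note that the stated bound really needs $\|\xi\|_F \le c_W$; for a general $\xi$ the first summand is $\tfrac1\lambda\|\xi\|_F^2$). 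Then I apply \cref{app:lem:lem3} termwise and sum, using the harmonic-series estimate $\sum_{t=1}^T \tfrac1t \le \log T + 1$, to get $\sum_{t=1}^T\bigl(f_t(\mW^{(t)}) - f_t(\mW^{(t+1)})\bigr) \le \sum_{t=1}^T 5\lambda c_y^2 D/t \le 5\lambda c_y^2 D(\log T + 1)$. Combining the two contributions yields exactly $\tfrac1\lambda c_W^2 + 5\lambda c_y^2 D(\log T + 1)$.

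The main obstacle is not in this corollary per se — here it reduces to two lines plus the harmonic sum — but in the ingredients it relies on: the matrix sandwich of \cref{app:lem:lem2} and, especially, the trace manipulations in \cref{app:lem:lem3} that exploit $\bigl(\sum_{i\le t}\phi(\vx_i)\phi(\vx_i)^\top + \tfrac1\lambda\mI\bigr)^{-1}\phi(\vx_t)\phi(\vx_t)^\top \preceq \tfrac1t\mI$ (from \cref{assump:lambda}) to turn each stability gap into an $O(1/t)$ quantity with the correct dependence on $D$, $\lambda$, and $c_y$. Assuming those, the remaining care points are purely bookkeeping: keeping the constant $5\lambda$ rather than $5$ (since $\lambda\ge1$ is used to absorb a $c_y^2 D/t$ term in \cref{app:lem:lem3}), using $\ell_{\text{reg}}(\mW^{(1)})\ge 0$ to discard it, and the harmonic bound $\sum_{t\le T}1/t\le\log T+1$.
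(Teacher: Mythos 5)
Your proposal is correct and follows exactly the paper's own argument: invoke \cref{app:lem:lem5}, bound $\ell_{\text{reg}}(\xi) \le \frac{1}{\lambda}c_W^2$ via \cref{assump:bounded_W} while dropping the nonnegative $\ell_{\text{reg}}(\mW^{(1)})$, apply \cref{app:lem:lem3} termwise, and finish with $\sum_{t=1}^T \frac{1}{t} \le \log(T)+1$. Your side remark that the stated bound really requires $\|\xi\|_F \le c_W$ for the comparator is a fair observation about how \cref{assump:bounded_W} is being used, and the paper makes the same implicit assumption.
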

\begin{proof}
    \begin{align*}
        \sum_{t=1}^Tf_t(\mW^{(t)}) - \sum_{t=1}^Tf_t(\xi) 
        \leq {} & 
        \ell_{\text{reg}}(\xi) - \ell_{\text{reg}}(\mW^{(1)}) 
        + 
        \sum_{t=1}^T\left(f_t(\mW^{(t)}) - f_t(\mW^{(t+1)})\right) \tag{\cref{app:lem:lem5}} \\
        \leq {} & 
        \frac{1}{\lambda}c_W^2 
        + 
        \sum_{t=1}^T
          5c_y^2D\lambda\frac{1}{t} 
        \tag{\cref{app:lem:lem3}} \\
        \leq {} & 
        \frac{1}{\lambda}c_W^2 
        + 
        5\lambda c_y^2D(\log(T)+1). 
        \tag{$\sum_{t=1}^T\frac{1}{t} \leq \log(T) + 1$}
    \end{align*}

\end{proof}

\subsection{Theoretical analysis for sparse online model learning}\label{sec:analysis:sparse}

\subsubsection{Useful lemmas}

\begin{lemma}[Schur Complement for Block Matrix Inversion~\citep{zhang2006schur}]\label{app:lem:schur}
    Given a matrix $\mM$ that can be partitioned into four blocks as below, 
    \begin{equation*}
        \mM = \left(
        \begin{array}{ll}
            \mM_{11} & \mM_{12} \\
            \mM_{21} & \mM_{22} \\
        \end{array}\right)
    \end{equation*}
    it can be inverted blockwise as:
    \begin{equation*}
        \mM^{-1} = \left(\begin{array}{ll}
        \mM_{11}^{-1} + \mM_{11}^{-1}\mM_{12}\mS^{-1}\mM_{21}\mM_{11}^{-1} & -\mM_{11}^{-1}\mM_{12}\mS^{-1} \\
        -\mS^{-1}\mM_{21}\mM_{11}^{-1} & \mS^{-1} \\
        \end{array}\right),
    \end{equation*}
    where $\mS = \mM_{22} - \mM_{21}\mM_{11}^{-1}\mM_{12}$ is the Schur complement.
\end{lemma}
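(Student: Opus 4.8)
The plan is to prove the formula by exhibiting a block lower–diagonal–upper (LDU) factorization of $\mM$ and then inverting each factor, which is the most transparent route and avoids having to guess the form of the inverse. Throughout I assume $\mM_{11}$ is invertible, so that the Schur complement $\mS = \mM_{22} - \mM_{21}\mM_{11}^{-1}\mM_{12}$ is well-defined, and that $\mS$ itself is invertible; these are exactly the conditions under which the right-hand side makes sense. In our application $\mM$ is a regularized Gram matrix of the form $\mA^{(t)} + \tfrac{1}{\lambda}\mI$, which is symmetric positive definite, so every principal submatrix $\mM_{11}$ and every Schur complement is positive definite and hence invertible, discharging both hypotheses automatically.

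First I would write down and verify the factorization
\begin{equation*}
    \mM = \begin{pmatrix} \mI & \mathbf{0} \\ \mM_{21}\mM_{11}^{-1} & \mI \end{pmatrix} \begin{pmatrix} \mM_{11} & \mathbf{0} \\ \mathbf{0} & \mS \end{pmatrix} \begin{pmatrix} \mI & \mM_{11}^{-1}\mM_{12} \\ \mathbf{0} & \mI \end{pmatrix}.
\end{equation*}
Multiplying the three factors gives $\mM_{11}$ in the $(1,1)$ block, $\mM_{12}$ in the $(1,2)$ block, $\mM_{21}$ in the $(2,1)$ block, and $\mM_{21}\mM_{11}^{-1}\mM_{12} + \mS = \mM_{22}$ in the $(2,2)$ block by the definition of $\mS$. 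This single computation is the only place where the block structure enters the argument.

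Next I would invert the factorization. The two outer factors are unit block-triangular, so their inverses are obtained simply by negating the off-diagonal block, and the middle block-diagonal factor inverts entrywise; reversing the order of the three factors yields
\begin{equation*}
    \mM^{-1} = \begin{pmatrix} \mI & -\mM_{11}^{-1}\mM_{12} \\ \mathbf{0} & \mI \end{pmatrix} \begin{pmatrix} \mM_{11}^{-1} & \mathbf{0} \\ \mathbf{0} & \mS^{-1} \end{pmatrix} \begin{pmatrix} \mI & \mathbf{0} \\ -\mM_{21}\mM_{11}^{-1} & \mI \end{pmatrix}.
\end{equation*}
Multiplying these three blocks together then produces the four claimed entries: the $(1,1)$ block becomes $\mM_{11}^{-1} + \mM_{11}^{-1}\mM_{12}\mS^{-1}\mM_{21}\mM_{11}^{-1}$, the $(1,2)$ block becomes $-\mM_{11}^{-1}\mM_{12}\mS^{-1}$, the $(2,1)$ block $-\mS^{-1}\mM_{21}\mM_{11}^{-1}$, and the $(2,2)$ block $\mS^{-1}$, matching the statement exactly.

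There is no genuine conceptual obstacle here; the proof is pure bookkeeping, and the one point demanding care is that block matrices do not commute, so every product must be kept in the stated left-to-right order and the triangular factors inverted in reverse order. As an independent sanity check I would, if desired, verify the formula directly by forming $\mM\,\mM^{-1}$ as a block product and confirming that its four blocks equal $\mI$, $\mathbf{0}$, $\mathbf{0}$, and $\mI$ respectively; this cross-check confirms correctness without relying on the factorization and makes the role of the invertibility hypotheses fully explicit.
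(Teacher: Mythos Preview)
Your proof is correct; the LDU factorization argument is the standard and cleanest way to derive the block-inverse formula, and your bookkeeping is accurate. The paper itself does not prove this lemma at all---it is stated with a citation to \citet{zhang2006schur} and used as a black box in the proof of \cref{app:prop:matrix-inverse}---so your write-up in fact supplies more than the paper does.
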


\begin{proposition}\label{app:prop:matrix-inverse}
    Given $\mA, \mB, \mW^{(t)}$ as defined by $\mA = \Phi_{t-1}^\top\Phi_{t-1}, \mB = \Phi_{t-1}^\top\mY_{t-1}$, and $\mW^{(t)} = \left(\mA + \frac{1}{\lambda}\mI\right)^{-1}\mB$ and they can be decomposed into
    \begin{equation*}
        \mA = \left(
        \begin{array}{ll}
            \mA_{\overline{ss}} & \mA_{\overline{s}s} \\
            \mA_{s\overline{s}} & \mA_{ss} \\
        \end{array}\right),
        \mB = \left(
        \begin{array}{l}
            \mB_{\overline{s}}  \\
            \mB_s \\
        \end{array}\right),
        \mW^{(t)} = \left(
        \begin{array}{l}
            \mW_{\overline{s}}^{(t)}  \\
            \mW_s^{(t)} \\
        \end{array}\right),
    \end{equation*}
    then, we have
    \begin{align*}
        \mW_{\overline{s}}^{(t)} = {} & \left(\left(\mA_{\overline{ss}} + \frac{1}{\lambda}\mI\right)^{-1} + \mD\right)\mB_{\overline{s}} - \left(\mA_{\overline{ss}} + \frac{1}{\lambda}\mI\right)^{-1}\mA_{\overline{s}s}\mS^{-1}\mB_s, \\
        \mW_s^{(t)} = {} & \mS^{-1}\mB_s - \mS^{-1}\mA_{s\overline{s}}\left(\mA_{\overline{ss}} + \frac{1}{\lambda}\mI\right)^{-1}\mB_{\overline{s}},
    \end{align*}
    where $\mS := \mA_{ss} + \frac{1}{\lambda}\mI - \mA_{s\overline{s}}\left(\mA_{\overline{ss}} + \frac{1}{\lambda}\mI\right)^{-1}\mA_{\overline{s}s}$, 
    $\mD := (\mA_{\overline{ss}} + \frac{1}{\lambda}\mI)^{-1} \mA_{\overline{s}s} \mS^{-1} \mA_{s\overline{s}} (\mA_{\overline{ss}} + \frac{1}{\lambda}\mI)^{-1}$.
\end{proposition}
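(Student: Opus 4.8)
The plan is to recognize $\mM := \mA + \frac{1}{\lambda}\mI$ as a $2\times 2$ block matrix in the index ordering $(\overline{s}, s)$ and apply the Schur complement formula of \cref{app:lem:schur} verbatim, then left-multiply by $\mB$.

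First I would write $\mM$ in block form with $\mM_{11} = \mA_{\overline{ss}} + \frac{1}{\lambda}\mI$, $\mM_{12} = \mA_{\overline{s}s}$, $\mM_{21} = \mA_{s\overline{s}}$ and $\mM_{22} = \mA_{ss} + \frac{1}{\lambda}\mI$, so that $\mW^{(t)} = \mM^{-1}\mB$. Before invoking the lemma I would observe that $\mM$ is symmetric positive definite, since $\mA = \mPhi_{t-1}^\top\mPhi_{t-1} \succeq \vzero$ and $\frac{1}{\lambda} > 0$ because $\lambda \geq 1$. Consequently the principal submatrix $\mM_{11}$ and the Schur complement $\mS = \mM_{22} - \mM_{21}\mM_{11}^{-1}\mM_{12}$ are both positive definite, hence invertible, so the hypotheses of \cref{app:lem:schur} are met and every inverse appearing in the statement is well defined.

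Applying \cref{app:lem:schur} then yields the explicit block form of $\mM^{-1}$. Identifying the $(2,2)$-block with $\mS$ and the product $\mM_{11}^{-1}\mM_{12}\mS^{-1}\mM_{21}\mM_{11}^{-1}$ with $\mD$, I would compute $\mW^{(t)} = \mM^{-1}\mB$ by left-multiplying $\mB = (\mB_{\overline{s}}; \mB_s)$ with each block row of $\mM^{-1}$; reading off the two components gives exactly the claimed expressions for $\mW_{\overline{s}}^{(t)}$ and $\mW_s^{(t)}$. (Equivalently, one could verify directly, by block multiplication, that $\mM$ times the claimed $\mW^{(t)}$ equals $\mB$.) The computation is entirely mechanical, so there is no substantive obstacle here; the only point requiring a one-line argument is the invertibility of $\mM_{11}$ and $\mS$, which is immediate from positive definiteness of $\mM$.
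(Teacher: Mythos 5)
Your proposal is correct and follows essentially the same route as the paper's proof: write $\mA + \frac{1}{\lambda}\mI$ in block form, apply the Schur-complement inversion of \cref{app:lem:schur}, and read off the two block rows of $(\mA + \frac{1}{\lambda}\mI)^{-1}\mB$. Your added remark justifying invertibility of $\mA_{\overline{ss}} + \frac{1}{\lambda}\mI$ and $\mS$ via positive definiteness is a small improvement the paper leaves implicit.
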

\begin{proof}

Consider the inverse of $\mA + \frac{1}{\lambda}\mI$, which can be written as:

\begin{equation*}
    \mA + \frac{1}{\lambda}\mI = 
    \left(\begin{array}{ll}
        \mA_{\overline{ss}} + \frac{1}{\lambda}\mI & \mA_{\overline{s}s} \\
        \mA_{s\overline{s}} & \mA_{ss} + \frac{1}{\lambda}\mI \\
    \end{array}\right)
\end{equation*}

Applying \cref{app:lem:schur} to our matrix \(\mA + \frac{1}{\lambda}\mI\), we get:

\begin{equation*}
    \left(\mA + \frac{1}{\lambda}\mI\right)^{-1} 
    = \left(\begin{array}{ll}
        (\mA_{\overline{ss}} + \frac{1}{\lambda}\mI)^{-1} + (\mA_{\overline{ss}} + \frac{1}{\lambda}\mI)^{-1} \mA_{\overline{s}s} \mS^{-1} \mA_{s\overline{s}} (\mA_{\overline{ss}} + \frac{1}{\lambda}\mI)^{-1} & -(\mA_{\overline{ss}} + \frac{1}{\lambda}\mI)^{-1} \mA_{\overline{s}s} \mS^{-1} \\
        -\mS^{-1} \mA_{s\overline{s}} (\mA_{\overline{ss}} + \frac{1}{\lambda}\mI)^{-1} & \mS^{-1} \\
    \end{array}\right),
\end{equation*}
where $\mS = \mA_{ss} + \frac{1}{\lambda}\mI - \mA_{s\overline{s}} (\mA_{\overline{ss}} + \frac{1}{\lambda}\mI)^{-1} \mA_{\overline{s}s}$.

Now, we can write the expression for \(\mW^{(t)}\) as:

\begin{equation*}
    \mW^{(t)} = 
    \left(\begin{array}{ll}
        (\mA_{\overline{ss}} + \frac{1}{\lambda}\mI)^{-1} + (\mA_{\overline{ss}} + \frac{1}{\lambda}\mI)^{-1} \mA_{\overline{s}s} \mS^{-1} \mA_{s\overline{s}} (\mA_{\overline{ss}} + \frac{1}{\lambda}\mI)^{-1} & -(\mA_{\overline{ss}} + \frac{1}{\lambda}\mI)^{-1} \mA_{\overline{s}s} \mS^{-1} \\
        -\mS^{-1} \mA_{s\overline{s}} (\mA_{\overline{ss}} + \frac{1}{\lambda}\mI)^{-1} & \mS^{-1} \\
    \end{array}\right)
    \left(\begin{array}{l}
        \mB_{\overline{s}} \\
        \mB_s \\
    \end{array}\right)
\end{equation*}

Therefore, we get
\begin{align*}
    \mW^{(t)}_{\overline{s}} = {} & \left((\mA_{\overline{ss}} + \frac{1}{\lambda}\mI)^{-1} + (\mA_{\overline{ss}} + \frac{1}{\lambda}\mI)^{-1} \mA_{\overline{s}s} \mS^{-1} \mA_{s\overline{s}} (\mA_{\overline{ss}} + \frac{1}{\lambda}\mI)^{-1}\right) \mB_{\overline{s}} - (\mA_{\overline{ss}} + \frac{1}{\lambda}\mI)^{-1} \mA_{\overline{s}s} \mS^{-1} \mB_s, \\
    \mW^{(t)}_s = {} & -\mS^{-1} \mA_{s\overline{s}} (\mA_{\overline{ss}} + \frac{1}{\lambda}\mI)^{-1} \mB_{\overline{s}} + \mS^{-1} \mB_s.
\end{align*}

Let $\mD := (\mA_{\overline{ss}} + \frac{1}{\lambda}\mI)^{-1} \mA_{\overline{s}s} \mS^{-1} \mA_{s\overline{s}} (\mA_{\overline{ss}} + \frac{1}{\lambda}\mI)^{-1}$, we obtain:

\begin{align*}
    \mW^{(t)}_{\overline{s}} = {} & \left(\left(\mA_{\overline{ss}} + \frac{1}{\lambda}\mI\right)^{-1} + \mD\right)\mB_{\overline{s}} - (\mA_{\overline{ss}} + \frac{1}{\lambda}\mI)^{-1} \mA_{\overline{s}s} \mS^{-1} \mB_s, \\
    \mW^{(t)}_s = {} & \mS^{-1}\mB_s - \mS^{-1}\mA_{s\overline{s}}\left(\mA_{\overline{ss}} + \frac{1}{\lambda}\mI\right)^{-1} \mB_{\overline{s}}.
\end{align*}
    
\end{proof}

\begin{proposition}\label{thrm:assump3}
    Suppose that \cref{assump:sparse} holds, then we have
    \begin{equation*}
        \binom{\vzero}{\left(\mA_{ss}^{(t)} + \frac{1}{\lambda}\mI\right)^{-1}\mB_s^{(t)}} \preceq K\left(\mA^{(t)} + \frac{1}{\lambda}\mI\right)^{-1}\phi(x_t)\vy_t^\top.
    \end{equation*}
\end{proposition}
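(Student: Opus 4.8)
The plan is to expand both sides in the block form induced by the support $s=\mathrm{supp}(\phi(\vx_t))$ of the latest feature and then compare the two blocks, using the Schur-complement identity (\cref{app:lem:schur}) for the inverse of $\mM:=\mA^{(t)}+\frac{1}{\lambda}\mI$, and invoking \cref{assump:lambda,assump:bounded_W} for the spectral and nonnegativity estimates, with \cref{assump:sparse} entering only at the very end. Since $\phi(\vx_t)$ vanishes off $s$, write $\phi(\vx_t)=(\vzero;\phi_s(\vx_t))$; then \cref{app:lem:schur} applied to the partition of $\mM$ gives
\[
  K\,\mM^{-1}\phi(\vx_t)\vy_t^\top
  = K\binom{-(\mA_{\overline{ss}}^{(t)}+\frac1\lambda\mI)^{-1}\mA_{\overline s s}^{(t)}\mS^{-1}\phi_s(\vx_t)\vy_t^\top}{\mS^{-1}\phi_s(\vx_t)\vy_t^\top},
\]
where $\mS=\mA_{ss}^{(t)}+\frac1\lambda\mI-\mA_{s\overline s}^{(t)}(\mA_{\overline{ss}}^{(t)}+\frac1\lambda\mI)^{-1}\mA_{\overline s s}^{(t)}$ is the Schur complement.

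On the $\overline s$-block the left-hand side is $\vzero$, so there I only need to verify that the first component above dominates $\vzero$; this should be a short computation from $\evy_{t,i}\ge0$ (\cref{assump:bounded_W}), the entrywise nonnegativity of the feature maps (hence of $\mA_{\overline s s}^{(t)}$), and positive-definiteness of $\mA_{\overline{ss}}^{(t)}+\frac1\lambda\mI$ and $\mS$. The substantive step is the $s$-block, namely $(\mA_{ss}^{(t)}+\frac1\lambda\mI)^{-1}\mB_s^{(t)}\preceq K\,\mS^{-1}\phi_s(\vx_t)\vy_t^\top$. Since the term subtracted in $\mS$ is PSD, $\mS\preceq\mA_{ss}^{(t)}+\frac1\lambda\mI$, hence $\mS^{-1}\succeq(\mA_{ss}^{(t)}+\frac1\lambda\mI)^{-1}$; the distortion caused by replacing $\mS^{-1}$ with $(\mA_{ss}^{(t)}+\frac1\lambda\mI)^{-1}$ is controlled by $\|\mA_{\overline s s}^{(t)}(\mA_{ss}^{(t)}+\frac1\lambda\mI)^{-1}\|_2$, which is exactly the quantity under the square root in \cref{assump:sparse}. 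Separately, expanding $\mB_s^{(t)}=\sum_i\phi_s(\vx_i)\vy_i^\top$, bounding $\|\vy_i\|_2\le c_y$ (\cref{assump:bounded_W}), and using \cref{assump:lambda} to compare $\sum_i\phi_s(\vx_i)\phi_s(\vx_i)^\top$ with $\|\phi_s(\vx_t)\|_2^2$ produces the factor $c_y^2\bigl(\tfrac{\sum_i\phi(\vx_i)\phi(\vx_i)^\top}{\phi(\vx_t)^\top\phi(\vx_t)}+1\bigr)$. Multiplying the two estimates, the lower bound on $K$ demanded by \cref{assump:sparse} is precisely what is needed to conclude.

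When $\phi(\vx_t)$ is dense ($s=[D]$, $\overline s=\emptyset$, $\mS=\mM$), the claim collapses to $\mM^{-1}\mB^{(t)}\preceq D\,\mM^{-1}\phi(\vx_t)\vy_t^\top$ and follows from the same ingredients with $K=D$ (the other branch of the $\min$ in \cref{assump:sparse}). The hard part, I expect, is the $s$-block: one must assemble the Schur-complement distortion together with the bound on $\mB_s^{(t)}$ into exactly the product of factors in \cref{assump:sparse}, and pin down the convention under which $\preceq$ is read for rectangular matrices so that the sign bookkeeping in the $\overline s$-block is justified.
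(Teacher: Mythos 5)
Your route is genuinely different from the paper's and, as sketched, it does not go through. The paper never block-inverts $\mA^{(t)}+\frac{1}{\lambda}\mI$ for this proposition. Instead it observes the identity
\begin{equation*}
\Bigl(\mA+\tfrac{1}{\lambda}\mI\Bigr)
\begin{pmatrix}\vzero&\vzero\\ \vzero&(\mA_{ss}+\tfrac{1}{\lambda}\mI)^{-1}\end{pmatrix}
=\begin{pmatrix}\vzero&\mA_{\overline{s}s}(\mA_{ss}+\tfrac{1}{\lambda}\mI)^{-1}\\ \vzero&\mI\end{pmatrix}=:\mN,
\end{equation*}
so that the left-hand side of the claim equals $(\mA+\tfrac{1}{\lambda}\mI)^{-1}\mN\mB$; it then bounds $\|\mN\|_2\leq\sqrt{\|\mA_{\overline{s}s}(\mA_{ss}+\tfrac{1}{\lambda}\mI)^{-1}\|_2^2+1}$ and separately shows $\mB\preceq K'\phi(\vx_t)\vy_t^\top$ with $K'\geq c_y^2\bigl(\tfrac{\sum_i\phi(\vx_i)^\top\phi(\vx_i)}{\phi(\vx_t)^\top\phi(\vx_t)}+1\bigr)$, which is exactly how the two factors in \cref{assump:sparse} arise. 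This keeps the whole comparison on one side of $(\mA+\tfrac{1}{\lambda}\mI)^{-1}$ and never requires a blockwise sign analysis of the inverse.

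Your blockwise comparison has two concrete problems. First, on the $\overline{s}$-block you must show $\vzero\preceq -K(\mA_{\overline{ss}}+\tfrac{1}{\lambda}\mI)^{-1}\mA_{\overline{s}s}\mS^{-1}\phi_s(\vx_t)\vy_t^\top$, where the minus sign is forced by \cref{app:lem:schur}. The facts you cite (nonnegativity of $\vy_t$, of the feature maps, and positive definiteness of $\mA_{\overline{ss}}+\tfrac{1}{\lambda}\mI$ and $\mS$) cannot deliver this: positive definiteness does not give entrywise nonnegativity of the inverses, so the sign of the product is not determined, and in the case where the inverses \emph{are} entrywise nonnegative the product would be entrywise nonpositive, i.e.\ the inequality would point the wrong way. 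This is not a bookkeeping convention to pin down; it is an obstruction to the blockwise strategy. Second, on the $s$-block, the discrepancy between $\mS^{-1}$ and $(\mA_{ss}+\tfrac{1}{\lambda}\mI)^{-1}$ is governed by the subtracted term $\mA_{s\overline{s}}(\mA_{\overline{ss}}+\tfrac{1}{\lambda}\mI)^{-1}\mA_{\overline{s}s}$, which involves the inverse of the $\overline{ss}$ block, not the quantity $\|\mA_{\overline{s}s}(\mA_{ss}+\tfrac{1}{\lambda}\mI)^{-1}\|_2$ appearing in \cref{assump:sparse}; that norm enters the paper's argument only as $\|\mN\|_2$, so your plan would not reproduce the assumed lower bound on $K$. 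I would recommend abandoning the Schur expansion of the right-hand side and adopting the identity above.
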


\begin{proof}
Note that
\begin{equation*}
    \mA = \left(
    \begin{array}{ll}
        \mA_{\overline{ss}} & \mA_{\overline{s}s} \\
        \mA_{s\overline{s}} & \mA_{ss} \\
    \end{array}\right),
    \mB = \left(
    \begin{array}{l}
        \mB_{\overline{s}}  \\
        \mB_s \\
    \end{array}\right).
\end{equation*}

We have
\begin{align*}
    \left(\mA + \frac{1}{\lambda}\mI\right)
    \left(\begin{array}{ll}
        \vzero & \vzero \\
        \vzero & \left(\mA_{ss} + \frac{1}{\lambda}\mI\right)^{-1} \\
    \end{array}\right)
    = {} &
    \left(\begin{array}{ll}
        \mA_{\overline{ss}} + \frac{1}{\lambda}\mI & \mA_{\overline{s}s} \\
        \mA_{s\overline{s}} & \mA_{ss}+\frac{1}{\lambda}\mI \\
    \end{array}\right)
    \left(\begin{array}{ll}
        \vzero & \vzero \\
        \vzero & \left(\mA_{ss} + \frac{1}{\lambda}\mI\right)^{-1} \\
    \end{array}\right) \\
    = {} &
    \left(\begin{array}{ll}
        \vzero & \mA_{\overline{s}s}\left(\mA_{ss} + \frac{1}{\lambda}\mI\right)^{-1} \\
        \vzero & \mI
    \end{array}\right).
\end{align*}

Then, we have
\begin{align}
    \binom{\vzero}{\left(\mA_{ss} + \frac{1}{\lambda}\mI\right)^{-1}\mB_s}
    = {} &
    \left(\begin{array}{ll}
        \vzero & \vzero \\
        \vzero & \left(\mA_{ss} + \frac{1}{\lambda}\mI\right)^{-1} \\
    \end{array}\right)
    \binom{\mB_{\overline{s}}}{\mB_s} \notag \\
    = {} & 
    \left(\mA + \frac{1}{\lambda}\mI\right)^{-1}
    \left(\begin{array}{ll}
        \vzero & \mA_{\overline{s}s}\left(\mA_{ss} + \frac{1}{\lambda}\mI\right)^{-1} \\
        \vzero & \mI
    \end{array}\right)
    \mB. \label{eq:assump3-proof-1}
\end{align}

Note that by the properties of the block matrix's spectral norm, we get
\begin{align}
    \left\|\left(\begin{array}{ll}
        \vzero & \mA_{\overline{s}s}\left(\mA_{ss} + \frac{1}{\lambda}\mI\right)^{-1} \\
        \vzero & \mI
    \end{array}\right)\right\|_2 
    \leq {} &
    \sqrt{\left\|\mA_{\overline{s}s}\left(\mA_{ss} + \frac{1}{\lambda}\mI\right)^{-1}\right\|_2^2 + \|\mI\|_2^2} \notag \\
    = {} & 
    \sqrt{\left\|\mA_{\overline{s}s}\left(\mA_{ss} + \frac{1}{\lambda}\mI\right)^{-1}\right\|_2^2 + 1} \label{eq:assump3-proof-2}
\end{align}

Additionally, if \cref{assump:lambda} holds for $c_y \geq 1$, we have for all $x$
\begin{align*}
    \mB\vy_t\phi(x_t)^\top\phi(x_t)\vy_t^\top 
    = {} & \left(\sum_{i=1}^t\phi(x_i)\vy_i^\top\vy_t\phi(x_t)^\top\right)\phi(x_t)\vy_t^\top \\
    \preceq {} &
    c_y^2\left(\sum_{i=1}^t\phi(x_i)\phi(x_t)^\top\right)\phi(x_t)\vy_t^\top \\
    \preceq {} &
    c_y^2\left(\sum_{i=1}^{t}\phi(x_i)\phi(x_i)^\top + \frac{1}{\lambda}\mI\right)\phi(x_t)\vy_t^\top \tag{by \cref{assump:lambda}}
\end{align*}

Moreover, by
\begin{equation*}
    \left(\vy_t\phi(x_t)^\top\phi(x_t)\vy_t^\top\right)^{-1} = \left(\phi(x_t)^\top\phi(x_t)\vy_t^\top\vy_t\right)^{-1}\vy_t\vy_t^\top
\end{equation*}

we obtain
\begin{align}
    \mB 
    \preceq {} & 
    c_y^2\left(\sum_{i=1}^{t}\phi(x_i)\phi(x_i)^\top + \frac{1}{\lambda}\mI\right)\phi(x_t)\vy_t^\top\left(\vy_t\phi(x_t)^\top\phi(x_t)\vy_t^\top\right)^{-1} \notag \\
    = {} &
    c_y^2\left(\sum_{i=1}^{t}\phi(x_i)\phi(x_i)^\top + \frac{1}{\lambda}\mI\right)\phi(x_t)\vy_t^\top \vy_t\vy_t^\top \left(\phi(x_t)^\top\phi(x_t)\vy_t^\top\vy_t^\top\right)^{-1} \notag \\
    = {} &
    \frac{c_y^2}{\phi(x_t)^\top\phi(x_t)}\left(\sum_{i=1}^{t}\phi(x_i)\phi(x_i)^\top + \frac{1}{\lambda}\mI\right)\phi(x_t)\vy_t^\top \notag \\
    \preceq {} &
    K'\phi(x_t)\vy_t^\top, \label{eq:assump3-proof-3}
\end{align}
where $K' \geq c_y^2\left(\frac{\sum_{i=1}^t\phi(x_i)^\top\phi(x_i)}{\phi(x_t)^\top\phi(x_t)} + 1\right)$.

Combining \cref{eq:assump3-proof-1,eq:assump3-proof-2,eq:assump3-proof-3} we obtain,
\begin{align*}
    \binom{\vzero}{\left(\mA_{ss} + \frac{1}{\lambda}\mI\right)^{-1}\mB_s} 
    = {} &
    \left(\mA + \frac{1}{\lambda}\mI\right)^{-1}
    \left(\begin{array}{ll}
        \vzero & \mA_{\overline{s}s}\left(\mA_{ss} + \frac{1}{\lambda}\mI\right)^{-1} \\
        \vzero & \mI
    \end{array}\right)
    \mB \\
    \preceq {} &
    \sqrt{\left\|\mA_{\overline{s}s}\left(\mA_{ss} + \frac{1}{\lambda}\mI\right)^{-1}\right\|_2^2 + 1}\left(\mA + \frac{1}{\lambda}\mI\right)^{-1}\mB \\
    \preceq {} &
    K'\sqrt{\left\|\mA_{\overline{s}s}\left(\mA_{ss} + \frac{1}{\lambda}\mI\right)^{-1}\right\|_2^2 + 1}\left(\mA + \frac{1}{\lambda}\mI\right)^{-1}\phi(x_t)\vy_t^\top \\
    = {} & 
    K\left(\mA + \frac{1}{\lambda}\mI\right)^{-1}\phi(x_t)\vy_t^\top,
\end{align*}
where we choose
\begin{align*}
    K = {} & K'\sqrt{\left\|\mA_{\overline{s}s}\left(\mA_{ss} + \frac{1}{\lambda}\mI\right)^{-1}\right\|_2^2 + 1} \\
    \geq {} &
    c_y^2\left(\frac{\sum_{i=1}^t\phi(x_i)^\top\phi(x_i)}{\phi(x_t)^\top\phi(x_t)} + 1\right)
    \sqrt{\left\|\mA_{\overline{s}s}\left(\mA_{ss} + \frac{1}{\lambda}\mI\right)^{-1}\right\|_2^2 + 1}.
\end{align*}

\end{proof}

\begin{proposition}\label{app:prop:gap}
    Let $\mW^{(1)}, \mW^{(2)}, \dots$ be the sequence produced by the online model learning~\cref{eq:batch_least_squares} 
    and $\widetilde{\mW}^{(1)}, \widetilde{\mW}^{(2)}, \dots$ be the sequence produced by the sparse model learning~\cref{eq:sparse_update_rule}. 
    For all $t \geq 1$, let $\mM^{(t)} := \mA_{s\overline{s}}^{(t)}\left(\mA_{\overline{ss}}^{(t)} + \frac{1}{\lambda}\mI\right)^{-1}\mA_{\overline{s}s}^{(t)}$ and 
    \begin{equation}\label{eq:r_M}
        r_M 
        \coloneqq 
        \sup_{t, s \in \{s_1, \dots, s_t\}}\bigl\|\bigl(\mA_{ss}^{(t)} + \frac{1}{\lambda}\mI + \mM^{(t)}\bigr)\bigl(\mA_{ss}^{(t)} + \frac{1}{\lambda}\mI - \mM^{(t)}\bigr)^{-1}\bigr\|_F
    \end{equation}
    Suppose that there exist $\lambda$ such that \cref{assump:sparse} holds.
    Then, for all $t \geq 1$, we have
    \begin{equation*}
        \mW^{(t+1)} - \widetilde{\mW}^{(t+1)} \preceq Kr_M\Delta_t,
    \end{equation*}
    where $\Delta_t$ is defined by \cref{eq:delta_t} in \cref{app:lem:lem2}. 
\end{proposition}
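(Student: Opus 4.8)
The plan is to reduce the gap $\mW^{(t+1)}-\widetilde{\mW}^{(t+1)}$ to an explicit low-rank expression by block elimination — the route indicated in the proof sketch (``Schur's complement lemma together with the sparse update rule'') — and then bound the resulting pieces. Partition the $D$ coordinates into $s$, the support of the incoming feature $\phi(\vx_t)$, and its complement $\overline{s}$. The structural fact that makes the computation tractable is that appending $(\vx_t,\vy_t)$ alters $\mA^{(t+1)}=\mA^{(t)}+\phi(\vx_t)\phi(\vx_t)^\top$ and $\mB^{(t+1)}$ only by a rank-one bump in the $(s,s)$ block of $\mA$ and the $s$-block of $\mB$ (since $\phi(\vx_t)$ vanishes off $s$); in particular $\mA_{\overline{ss}},\mA_{\overline{s}s},\mA_{s\overline{s}},\mB_{\overline{s}}$, and hence the Schur complement $\mM:=\mA_{s\overline{s}}(\mA_{\overline{ss}}+\tfrac{1}{\lambda}\mI)^{-1}\mA_{\overline{s}s}$ (so $\mM^{(t)}=\mM^{(t+1)}$), are unchanged between steps $t$ and $t+1$.

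Next I would write both iterates through the block normal equations: $\mW^{(t+1)}$ satisfies both blocks, while by \cref{app:lem:sparse-solution} the sparse iterate $\widetilde{\mW}^{(t+1)}=[\mW^{(t)}_{\overline{s}};\widetilde{\mW}^{(t+1)}_s]$ satisfies only the $s$-equation and freezes the $\overline{s}$-block at its step-$t$ value. Writing $\mG:=\mA_{ss}^{(t+1)}+\tfrac{1}{\lambda}\mI$, $\mS:=\mG-\mM$, and the residual $\vr:=\vy_t-(\mW^{(t)})^\top\phi(\vx_t)$, eliminating $\mW_{\overline{s}}=(\mA_{\overline{ss}}+\tfrac{1}{\lambda}\mI)^{-1}(\mB_{\overline{s}}-\mA_{\overline{s}s}\mW_s)$ and applying the block-inverse identities of \cref{app:prop:matrix-inverse} collapses the two $s$-updates to $\mW^{(t+1)}_s-\mW^{(t)}_s=\mS^{-1}\phi(\vx_t)_s\vr^\top$ and $\widetilde{\mW}^{(t+1)}_s-\mW^{(t)}_s=\mG^{-1}\phi(\vx_t)_s\vr^\top$. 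Re-expanding the $\overline{s}$-blocks and subtracting gives the clean identity
\begin{equation*}
    \mW^{(t+1)}-\widetilde{\mW}^{(t+1)}=\bigl(\mA^{(t+1)}+\tfrac{1}{\lambda}\mI\bigr)^{-1}\phi(\vx_t)\vr^\top-\binom{\mathbf{0}}{\mG^{-1}\phi(\vx_t)_s\vr^\top},
\end{equation*}
in which the first term equals $\Delta_t$ of \cref{eq:delta_t} with $\vy_t$ replaced by the residual $\vr$ — equivalently it is $\mW^{(t+1)}-\mW^{(t)}$, as in the proof of \cref{app:lem:lem2} — and the second term is the penalty for freezing the $\overline{s}$-block.

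It then remains to dominate both pieces by $Kr_M\Delta_t$. The first piece is $\preceq\Delta_t$ once $\vr$ is controlled by $\vy_t$ (using $\evy_{t,i}\ge 0$, componentwise nonnegativity of $\phi$, and \cref{assump:lambda}). For the freezing penalty $\binom{\mathbf{0}}{\mG^{-1}\phi(\vx_t)_s\vr^\top}$, I would first replace $\phi(\vx_t)_s\vr^\top$ by $\mB_s^{(t+1)}$ and invoke \cref{thrm:assump3}, which bounds $\binom{\mathbf{0}}{(\mA_{ss}^{(t+1)}+\tfrac{1}{\lambda}\mI)^{-1}\mB_s^{(t+1)}}$ by $K\Delta_t$ — this is precisely where $K$ and \cref{assump:sparse} enter — and then absorb the mismatch between $\mG^{-1}=(\mS+\mM)^{-1}$ and the $\mS^{-1}$ hidden inside $\Delta_t$ into the factor $r_M=\sup\|(\mA_{ss}+\tfrac{1}{\lambda}\mI+\mM)(\mA_{ss}+\tfrac{1}{\lambda}\mI-\mM)^{-1}\|_F$ of \cref{eq:r_M}, which is exactly the operator relating $(\mG-\mM)^{-1}$ and $\mG^{-1}$ (via $(\mG-\mM)^{-1}=\tfrac{1}{2}[(\mG-\mM)^{-1}(\mG+\mM)+\mI]\mG^{-1}$, together with $r_M\ge 1$). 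Collecting the estimates and using $K,r_M\ge 1$ then yields $\mW^{(t+1)}-\widetilde{\mW}^{(t+1)}\preceq Kr_M\Delta_t$.

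The main obstacle I anticipate is the bookkeeping in the assembly and the final bounding: pushing the two rank-one updates cleanly through the nested inverses, matching the freezing penalty exactly to the quantity that \cref{thrm:assump3} controls, and — most delicately — checking that every inequality in the chain is a valid one-sided ($\preceq$) bound, which is exactly where the sign hypotheses ($\evy_{t,i}\ge 0$, componentwise $\phi(\vx)\ge 0$) and \cref{assump:lambda} are needed.
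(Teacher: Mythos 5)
Your overall architecture matches the paper's: both routes decompose the gap into the batch increment plus the discrepancy between the batch and sparse $s$-block updates, push the computation through the Schur-complement block inverse of \cref{app:prop:matrix-inverse}, bring in $K$ via \cref{thrm:assump3}, and absorb the mismatch between $\bigl(\mA_{ss}+\frac{1}{\lambda}\mI\bigr)^{-1}$ and the Schur complement $\mS^{-1}$ into $r_M$. Your exact identity
\begin{equation*}
\mW^{(t+1)}-\widetilde{\mW}^{(t+1)}=\bigl(\mA^{(t+1)}+\tfrac{1}{\lambda}\mI\bigr)^{-1}\phi(\vx_t)\vr^\top-\binom{\vzero}{\mG^{-1}\phi(\vx_t)_s\vr^\top}
\end{equation*}
is correct, and is in fact tidier than the paper's manipulation, which never writes the gap exactly but instead invokes the one-sided bound $\mW^{(t+1)}\preceq\mW^{(t)}+\Delta_t$ of \cref{app:lem:lem2} and then compares $\mW_s^{(t)}$ with the sparse $s$-update head-on.

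The bounding stage, however, has a genuine direction-of-inequality gap. The freezing penalty enters your identity with a \emph{minus} sign: since $\widetilde{\mW}_s^{(t+1)}-\mW_s^{(t)}=\mG^{-1}\phi(\vx_t)_s\vr^\top$, what must be controlled from \emph{above} is $-\mG^{-1}\phi(\vx_t)_s\vr^\top=\mW_s^{(t)}-\widetilde{\mW}_s^{(t+1)}$, i.e.\ you need a \emph{lower} bound on $\mG^{-1}\phi(\vx_t)_s\vr^\top$. Your plan to ``dominate'' this piece by $Kr_M\Delta_t$ (replacing $\phi(\vx_t)_s\vr^\top$ by $\mB_s^{(t+1)}$ and invoking \cref{thrm:assump3}) upper-bounds the subtracted term, which lower-bounds the gap --- the wrong side. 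The replacement is also not order-preserving: $\mG^{-1}$ is positive definite but not entrywise nonnegative, so $\phi(\vx_t)_s\vr^\top\preceq\mB_s^{(t+1)}$ does not transfer through $\mG^{-1}$ under the entrywise order used for these rectangular matrices. The paper avoids both problems by upper-bounding $\mW_s^{(t)}-\widetilde{\mW}_s^{(t+1)}$ directly: it expands both terms via \cref{app:prop:matrix-inverse}, discards the $\mB_{\overline{s}}$ contributions (after checking their coefficients are $\succeq\vzero$), lands on $(r_M-1)\bigl(\mA_{ss}+\frac{1}{\lambda}\mI\bigr)^{-1}\mB_s$, and only then applies \cref{thrm:assump3} to get $K(r_M-1)\Delta_t$, which combines with the $\Delta_t$ from \cref{app:lem:lem2} to give $Kr_M\Delta_t$. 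A second, smaller issue: your bound on the first piece needs $\vr\preceq\vy_t$, i.e.\ $\mW^{(t)\top}\phi(\vx_t)\succeq\vzero$, which does not follow from \cref{assump:bounded_W} ($\vy_{t,i}\geq 0$ and $\phi\geq 0$ do not make $\mW^{(t)}$ entrywise nonnegative); the paper again sidesteps this by taking the first piece to be $\Delta_t$ itself. To repair your route, bound $\mW_s^{(t)}-\widetilde{\mW}_s^{(t+1)}$ through the block-inverse formulas as the paper does, rather than through the residual.
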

\begin{proof}

Note that $\mA$ is symmetric and $\mA_{s\overline{s}} = \mA_{\overline{s}s}^\top$, then, $\mM := \mA_{s\overline{s}}\left(\mA_{\overline{ss}} + \frac{1}{\lambda}\mI\right)^{-1}\mA_{\overline{s}s} \succeq \vzero$, and we have
$\mS := \mA_{ss} + \frac{1}{\lambda}\mI - \mA_{s\overline{s}}\left(\mA_{\overline{ss}} + \frac{1}{\lambda}\mI\right)^{-1}\mA_{\overline{s}s} = \mA_{ss} + \frac{1}{\lambda}\mI - \mM$.
Moreover, 
\begin{align*} 
    \mA_{s\overline{s}}\mD = {} & \mA_{s\overline{s}}(\mA_{\overline{ss}} + \frac{1}{\lambda}\mI)^{-1} \mA_{\overline{s}s} \mS^{-1} \mA_{s\overline{s}} (\mA_{\overline{ss}} + \frac{1}{\lambda}\mI)^{-1} \\
    = {} & \mM\mS^{-1}(\mA_{\overline{ss}} + \frac{1}{\lambda}\mI)^{-1} \succeq \vzero.
\end{align*}

Then, with \cref{app:prop:matrix-inverse}, we get
\begin{align*}
    & \mW_s^{(t)} - \left(\mA_{ss} + \frac{1}{\lambda}\mI\right)^{-1}\mB_s + \left(\mA_{ss} + \frac{1}{\lambda}\mI\right)^{-1}\mA_{s\overline{s}}\mW_{\overline{s}}^{(t)} \\
    = {} & 
    \mS^{-1}\mB_s - \mS^{-1}\mA_{s\overline{s}}\left(\mA_{\overline{ss}} + \frac{1}{\lambda}\mI\right)^{-1} \mB_{\overline{s}} - \left(\mA_{ss} + \frac{1}{\lambda}\mI\right)^{-1}\mB_s \\
    &- \left(\mA_{ss} + \frac{1}{\lambda}\mI\right)^{-1}\mA_{s\overline{s}}\left(\left(\left(\mA_{\overline{ss}} + \frac{1}{\lambda}\mI\right)^{-1} + \mD\right)\mB_{\overline{s}} - (\mA_{\overline{ss}} + \frac{1}{\lambda}\mI)^{-1} \mA_{\overline{s}s} \mS^{-1} \mB_s\right) \\
    = {} & \left(\mS^{-1} + \left(\mA_{ss} + \frac{1}{\lambda}\mI\right)^{-1}\mA_{s\overline{s}}\left(\mA_{\overline{ss}} + \frac{1}{\lambda}\mI\right)^{-1} \mA_{\overline{s}s} \mS^{-1} - \left(\mA_{ss} + \frac{1}{\lambda}\mI\right)^{-1}\right)\mB_s \\
    &- \left(\mS^{-1}\mA_{s\overline{s}}\left(\mA_{\overline{ss}} + \frac{1}{\lambda}\mI\right)^{-1} + \left(\mA_{ss} + \frac{1}{\lambda}\mI\right)^{-1}\mA_{s\overline{s}}\left(\left(\mA_{\overline{ss}} + \frac{1}{\lambda}\mI\right)^{-1} + \mD\right)\right)\mB_{\overline{s}} \\
    = {} & \left(\mS^{-1} + \left(\mA_{ss} + \frac{1}{\lambda}\mI\right)^{-1}\mA_{s\overline{s}}\left(\mA_{\overline{ss}} + \frac{1}{\lambda}\mI\right)^{-1} \mA_{\overline{s}s} \mS^{-1} - \left(\mA_{ss} + \frac{1}{\lambda}\mI\right)^{-1}\right)\mB_s \\
    &- \left(\underbrace{\left(\mS^{-1} + \left(\mA_{ss} + \frac{1}{\lambda}\mI\right)^{-1}\right)\mA_{s\overline{s}}\left(\mA_{\overline{ss}} + \frac{1}{\lambda}\mI\right)^{-1}}_{\succeq \vzero} + \underbrace{\left(\mA_{ss} + \frac{1}{\lambda}\mI\right)^{-1}\mA_{s\overline{s}}\mD}_{\succeq \vzero}\right)\mB_{\overline{s}} \\
    \preceq {} & \left(\mS^{-1} + \left(\mA_{ss} + \frac{1}{\lambda}\mI\right)^{-1}\mA_{s\overline{s}}\left(\mA_{\overline{ss}} + \frac{1}{\lambda}\mI\right)^{-1} \mA_{\overline{s}s} \mS^{-1} - \left(\mA_{ss} + \frac{1}{\lambda}\mI\right)^{-1}\right)\mB_s \\
    = {} & \left(\mS^{-1} + \left(\mA_{ss} + \frac{1}{\lambda}\mI\right)^{-1}\mM\mS^{-1} - \left(\mA_{ss} + \frac{1}{\lambda}\mI\right)^{-1}\right)\mB_s.
\end{align*}

Let $r_M \coloneqq \sup_{t, s \in \{s_1, \dots, s_t\}}\bigl\|\bigl(\mA_{ss}^{(t)} + \frac{1}{\lambda}\mI + \mM^{(t)}\bigr)\bigl(\mA_{ss}^{(t)} + \frac{1}{\lambda}\mI - \mM^{(t)}\bigr)^{-1}\bigr\|_F$, we have
\begin{align*}
    & \mW_s^{(t)} - \left(\mA_{ss} + \frac{1}{\lambda}\mI\right)^{-1}\mB_s + \left(\mA_{ss} + \frac{1}{\lambda}\mI\right)^{-1}\mA_{s\overline{s}}\mW_{\overline{s}}^{(t)} \\
    \preceq {} & \left(\mS^{-1} + \left(\mA_{ss} + \frac{1}{\lambda}\mI\right)^{-1}\mM\mS^{-1} - \left(\mA_{ss} + \frac{1}{\lambda}\mI\right)^{-1}\right)\mB_s \\
    = {} & \left(\left(\mI + \left(\mA_{ss} + \frac{1}{\lambda}\mI\right)^{-1}\mM\right)\left(\mA_{ss} + \frac{1}{\lambda}\mI - \mM\right)^{-1} - \left(\mA_{ss} + \frac{1}{\lambda}\mI\right)^{-1}\right)\mB_s \\
    = {} & \left(\left(\mA_{ss} + \frac{1}{\lambda}\mI\right)^{-1}\underbrace{\left(\mA_{ss} + \frac{1}{\lambda}\mI + \mM\right)\left(\mA_{ss} + \frac{1}{\lambda}\mI - \mM\right)^{-1}}_{\preceq r_M\mI} - \left(\mA_{ss} + \frac{1}{\lambda}\mI\right)^{-1}\right)\mB_s \\
    = {} & \left(\mA_{ss} + \frac{1}{\lambda}\mI\right)^{-1}\underbrace{\left(\left(\mA_{ss} + \frac{1}{\lambda}\mI + \mM\right)\left(\mA_{ss} + \frac{1}{\lambda}\mI - \mM\right)^{-1} - \mI\right)}_{\preceq (r_M - 1)\mI}\mB_s \\
    \preceq {} & (r_M - 1) \cdot \left(\mA_{ss} + \frac{1}{\lambda}\mI\right)^{-1}\mB_s
\end{align*}

Recall from~ \cref{assump:sparse,thrm:assump3} that
\begin{equation*}
    \binom{\vzero}{\left(\mA_{ss}^{(t)} + \frac{1}{\lambda}\mI\right)^{-1}\mB_s^{(t)}} \preceq K\left(\mA^{(t)} + \frac{1}{\lambda}\mI\right)^{-1}\phi(x_t)\vy_t^\top,
\end{equation*}
which gives that
\begin{align*}
    & \mW^{(t+1)} - \widetilde{\mW}^{(t+1)} \\
    = {} & \mW^{(t+1)} - \left[\mW_{\overline{s}}^{(t+1)}; \mW_s^{(t+1)}\right] \\
    = {} & \mW^{(t+1)} - \left[\mW_{\overline{s}}^{(t)}; \mW_s^{(t+1)}\right] \\
    \preceq {} & \mW^{(t)} + \Delta_t - \left[\mW_{\overline{s}}^{(t)}; \mW_s^{(t+1)}\right] \\
    = {} & \mW^{(t)} + \Delta_t - \left[\mW_{\overline{s}}^{(t)}; \left(\mA_{ss}^{(t)} + \frac{1}{\lambda}\mI\right)^{-1}\left(\mB_s^{(t)} - \mA_{s\overline{s}}^{(t)}\mW_{\overline{s}}^{(t)}\right)\right] \\
    = {} & \left[\vzero; \mW_s^{(t)} - \left(\mA_{ss}^{(t)} + \frac{1}{\lambda}\mI\right)^{-1}\mB_s^{(t)} + \left(\mA_{ss}^{(t)} + \frac{1}{\lambda}\mI\right)^{-1}\mA_{s\overline{s}}^{(t)}\mW_{\overline{s}}^{(t)}\right] + \Delta_t \\
    \preceq {} & \left[\vzero; (r_M - 1)\left(\mA_{ss}^{(t)} + \frac{1}{\lambda}\mI\right)^{-1}\mB_s^{(t)}\right] + \Delta_t  \\
    \preceq {} & K(r_M-1)\Delta_t + \Delta_t \tag{\cref{assump:sparse,thrm:assump3}} \\
    \preceq {} & Kr_M\Delta_t.
\end{align*}

\end{proof}

\subsubsection{Regret bounds for sparse online model learning}\label{sec:analysis:sparse_regret}

\textbf{\cref{thrm:main-thrm}} (Regret Bounds for Sparse Online Model Learning). \textit{
    Let $\widetilde{\mW}^{(1)}, \widetilde{\mW}^{(2)}, \dots$ be the sequence produced by the sparse online model learning~\cref{eq:sparse_update_rule}. 
    Let $r_M \geq 1$ be a constant defined as in~\cref{eq:r_M}. 
    Suppose that there exist $c_W, c_y > 0$ and $\lambda$ such that  \cref{assump:lambda,assump:bounded_W,assump:sparse} hold. 
    Then, for all $T \geq 1$ and all $\xi \in \R^{D \times S}$, we have}
    \begin{equation*}
        \mathrm{Regret}(T) := \sum_{t=1}^Tf_t(\widetilde{\mW}^{(t)}) - \sum_{t=1}^Tf_t(\xi) \leq \frac{1}{\lambda}c_W^2 + 5\lambda(K^2r_M^2+1)c_y^2D(\log(T)+1),
    \end{equation*}
    \textit{Furthermore, If 
    $\lambda = c_W / c_y\sqrt{5(K^2r_M^2 + 1)D(\log(T)+1)} \geq 1$ and it satisfies \cref{assump:lambda,assump:sparse}, then we have}
    \begin{equation*}
        \mathrm{Regret}(T) \leq c_Wc_y\sqrt{20(K^2r_M^2+1)D(\log(T)+1)}.
    \end{equation*}
\begin{proof}

For $t = 1$, we have $f_t(\widetilde{\mW}^{(1)}) = f_t(\mW^{(1)})$ and $f_t(\widetilde{\mW}^{(1)}) - f_t(\mW^{(1)}) = 0$.

For $t \geq 2$, denote by $r_M' := Kr_M$, we have

\begin{align*}
    & f_{t}(\widetilde{\mW}^{(t)}) - f_t(\mW^{(t)}) \\
    = {} & \Tr(\widetilde{\mW}^{(t)\top}\phi(\vx_t)\phi(\vx_t)^\top\widetilde{\mW}^{(t)})
    + \Tr(\vy_t^\top\vy_t) - 2\Tr(\widetilde{\mW}^{(t)\top}\phi(\vx_t)\vy_t^\top) \\
    &\qquad - \Tr(\mW^{(t)\top}\phi(\vx_t)\phi(\vx_t)^\top\mW^{(t)}) - \Tr(\vy_t^\top\vy_t) + 2\Tr(\mW^{(t)\top}\phi(\vx_t)\vy_t^\top) \\
    = {} & \Tr\left(\phi(\vx_t)\phi(\vx_t)^\top\widetilde{\mW}^{(t)}\widetilde{\mW}^{(t)\top}\right) 
    - 2\Tr\left(\phi(\vx_t)\vy_t^\top\widetilde{\mW}^{(t)\top}\right) 
    \\
    &\qquad - 
    \Tr\left(\phi(\vx_t)\phi(\vx_t)^\top\mW^{(t)}\mW^{(t)\top}\right) + 2\Tr\left(\phi(\vx_t)\vy_t^\top\mW^{(t)\top}\right) \\
    \leq {} & \left|\Tr\left(\phi(\vx_t)\phi(\vx_t)^\top\left(\mW^{(t)} - \widetilde{\mW}^{(t)}\right)\left(\mW^{(t)} - \widetilde{\mW}^{(t)}\right)^{\top}\right)\right| 
    +
    2\left|\Tr\left(\phi(\vx_t)\vy_t^\top\left(\mW^{(t)} - \widetilde{\mW}^{(t)}\right)^{\top}\right)\right| \\
    \leq {} & r_M'^2 \cdot \Tr\left(\phi(\vx_t)\phi(\vx_t)^\top\Delta_{t-1}\Delta_{t-1}^\top\right) + 2r_M' \cdot \Tr\left(\phi(\vx_t)\vy_t^\top\Delta_{t-1}^\top\right) \tag{\cref{app:prop:gap}}
\end{align*}

For the first term in the right hand side of the above inequality, we have
\begin{align*}
    & 
    \Tr\Biggl(\phi(\vx_t)\phi(\vx_t)^\top\Delta_{t-1}\Delta_{t-1}^\top\Biggr) \\
    = {} & 
    \Tr\Biggl(\phi(\vx_t)\phi(\vx_t)^\top\Biggl(\sum_{i=1}^{t-1}\phi(\vx_i)\phi(\vx_i)^\top + \frac{1}{\lambda}\mI\Biggr)^{-1}\phi(\vx_{t-1})\vy_{t-1}^\top\vy_{t-1}\phi(\vx_{t-1})^\top\Biggl(\Biggl(\sum_{i=1}^{t-1}\phi(\vx_i)\phi(\vx_i)^\top + \frac{1}{\lambda}\mI\Biggr)^\top\Biggr)^{-1}\Biggr) \\
    = {} & \|\vy_{t-1}\|_2^2\Tr\Biggl(\phi(\vx_t)\phi(\vx_t)^\top\Biggl(\sum_{i=1}^{t-1}\phi(\vx_i)\phi(\vx_i)^\top + \frac{1}{\lambda}\mI\Biggr)^{-1}\phi(\vx_{t-1})\phi(\vx_{t-1})^\top\Biggl(\sum_{i=1}^{t-1}\phi(\vx_i)\phi(\vx_i)^\top + \frac{1}{\lambda}\mI\Biggr)^{-1}\Biggr) \\
    \leq {} & 
    c_y^2\Tr\Biggl(
    \phi(\vx_t)\phi(\vx_t)^\top\Biggl(\sum_{i=1}^{t-1}\phi(\vx_i)\phi(\vx_i)^\top + \frac{1}{\lambda}\mI\Biggr)^{-1}
    \phi(\vx_{t-1})\phi(\vx_{t-1})^\top\Biggl(\sum_{i=1}^{t-1}\phi(\vx_i)\phi(\vx_i)^\top + \frac{1}{\lambda}\mI\Biggr)^{-1}\Biggr) 
    \\
    = {} & 
    c_y^2
    \Tr\Biggl(
      \underbrace{\Biggl(\sum_{i=1}^{t-1}\phi(\vx_i)\phi(\vx_i)^\top + \frac{1}{\lambda}\mI\Biggr)^{-1}\phi(\vx_{t-1})\phi(\vx_{t-1})^\top}_{\preceq \frac{1}{t-1}\mI}
      \underbrace{\Biggl(\sum_{i=1}^{t-1}\phi(\vx_i)\phi(\vx_i)^\top + \frac{1}{\lambda}\mI\Biggr)^{-1}\phi(\vx_t)\phi(\vx_t)^\top}_{\preceq \frac{1}{t-1}\mI}
    \Biggr) 
    \\
    \leq {} & 
    c_y^2D\frac{1}{(t-1)^2} 
    \tag{\cref{assump:lambda}} 
    \\
    \leq {} & 
    c_y^2D\lambda\frac{1}{t-1}. 
    \tag{$\lambda \geq 1 \geq \frac{1}{t-1}$ for all $t = 2, \dots$}
\end{align*}

For the last term, we have
\begin{align*}
    \Tr\bigl(\phi(\vx_t)\vy_t^\top\Delta_{t-1}^\top\bigr) 
    = {} & 
    \Tr\Biggl(\phi(\vx_t)\vy_t^\top\vy_{t-1}\phi(\vx_{t-1})^\top\Biggl(\Biggl(\sum_{i=1}^{t-1}\phi(\vx_i)\phi(\vx_i)^\top + \frac{1}{\lambda}\mI\Biggr)^\top\Biggr)^{-1}\Biggr) 
    \\
    \leq {} & 
    c_y^2\Tr\Biggl(\phi(\vx_t)\phi(\vx_{t-1})^\top\Biggl(\sum_{i=1}^{t-1}\phi(\vx_i)\phi(\vx_i)^\top + \frac{1}{\lambda}\mI\Biggr)^{-1}\Biggr) 
    \\
    = {} & 
    c_y^2
    \Tr\Biggl(
      \underbrace{\Biggl(\sum_{i=1}^{t-1}\phi(\vx_i)\phi(\vx_i)^\top + \frac{1}{\lambda}\mI\Biggr)^{-1}\phi(\vx_t)\phi(\vx_{t-1})^\top}_{\preceq \frac{1}{t-1}\mI}
    \Biggr) 
    \\
    \leq {} & 
    c_y^2D\frac{1}{t-1}.
\end{align*}

Therefore, the regret bound is upper-bounded as below:
\begin{align*}
    \mathrm{Regret}(T) 
    = {} & 
    \sum_{t=1}^Tf_t(\widetilde{\mW}^{(t)}) - \sum_{t=1}^Tf_t(\mW^{(t)}) + \sum_{t=1}^Tf_t(\mW^{(t)}) - \sum_{t=1}^Tf_t(\xi) \\
    \leq {} & 
    \sum_{t=1}^T
      \left(
        f_t(\widetilde{\mW}^{(t)}) - f_t(\mW^{(t)})
      \right) 
    + 
    \frac{1}{\lambda}c_W^2 
    + 
    5\lambda c_y^2D(\log(T)+1) 
    \tag{\cref{app:thrm:batch-regret}} \\
    \leq {} & 
    c_y^2D\left(r_M'^2\lambda+2r_M'\right)\sum_{t=2}^T\frac{1}{t-1} 
    + 
    \frac{1}{\lambda}c_W^2 
    + 
    5\lambda c_y^2D(\log(T)+1) \\
    \leq {} & 
    c_y^2D\left(r_M'^2\lambda+2r_M'\right)(\log(T-1) + 1) 
    + 
    \frac{1}{\lambda}c_W^2 
    + 
    5\lambda c_y^2D(\log(T)+1) \\
    \leq {} & 
    \frac{1}{\lambda}c_W^2 
    + 
    5\lambda(r_M'^2+1)c_y^2D(\log(T)+1), \tag{$\lambda \geq 1, r'_M \geq 1$}
\end{align*}
which holds for all $\lambda \geq 1$ that satisfies \cref{assump:lambda,assump:sparse}.

Furthermore, If $c_W \geq c_y\sqrt{5(K^2r_M^2 + 1)D(\log(T)+1)}$, and $\lambda = \sqrt{\frac{c_W^2}{5(K^2r_M^2+1)c_y^2D(\log(T)+1)}}$ satisfies \cref{assump:lambda,assump:sparse}, then we have
\begin{equation*}
    \mathrm{Regret}(T) \leq c_Wc_y\sqrt{20(K^2r_M^2+1)D(\log(T)+1)}.
\end{equation*}

\end{proof}

\end{document}